
\documentclass{article}

\usepackage{array} 

\usepackage{microtype}
\usepackage{graphicx}
\usepackage{subfigure}
\usepackage{booktabs} 

\usepackage{hyperref}



\usepackage[accepted]{icml2025}

\usepackage{amsmath}
\usepackage{amssymb}
\usepackage{mathtools}
\usepackage{amsthm}

\usepackage{enumitem}

\usepackage[capitalize,noabbrev]{cleveref}

\theoremstyle{plain}
\newtheorem{theorem}{Theorem}[section]

\newtheorem{lemma}[theorem]{Lemma}
\newtheorem{corollary}[theorem]{Corollary}
\theoremstyle{definition}
\newtheorem{definition}[theorem]{Definition}
\newtheorem{assumption}[theorem]{Assumption}
\theoremstyle{remark}
\newtheorem{remark}[theorem]{Remark}
\usepackage{tcolorbox}
\usepackage{mdframed}
\usepackage{balance}
\usepackage{placeins}
\hypersetup{
    linkcolor=blue,     
    urlcolor=blue       
}

\DeclareSymbolFont{extraup}{U}{zavm}{m}{n}
\DeclareMathSymbol{\varheart}{\mathalpha}{extraup}{86}
\DeclareMathSymbol{\vardiamond}{\mathalpha}{extraup}{87}

\usepackage[textsize=tiny]{todonotes}

\newcommand{\new}[1]{\textcolor{black}{#1}}

\icmltitlerunning{Logits are All We Need to Adapt Closed Models}

\usepackage{amsfonts}       
\usepackage{nicefrac}       
\usepackage{microtype}      
\usepackage{xcolor}         
\usepackage{mathtools} 
\usepackage{tikz}
\usepackage{graphicx}
\usepackage{subfigure}
\usepackage{diagbox}

\usepackage{amsmath}
\usepackage{dsfont}
\usepackage{amssymb,amsthm,commath}
\usepackage{bm,xfrac,xcolor}
\usepackage{nccmath}
\usepackage[toc,page,header]{appendix}

\hypersetup{colorlinks,urlcolor=black, citecolor=blue}

\usepackage{algorithm}
\usepackage{bbm}
\usepackage{enumitem}
\usepackage{gensymb}
\usepackage{refcount}
\usepackage{mathtools}
\usepackage{stackengine}
\usepackage{multirow}
\usepackage{makecell}

\usepackage{comment}
\usepackage{mathrsfs}

\makeatletter
    \setlength\@fptop{0\p@}
\makeatother

\newcommand{\INDSTATE}[1][1]{\STATE\hspace{-0.75em}{\algorithmicindent}}

\def\myfnt{\ifx\protect\@typeset@protect\expandafter\footnote\else\expandafter\@gobble\fi}
\makeatother

\makeatletter
\def\BState{\State\hskip-\ALG@thistlm}
\makeatother

\makeatletter

\makeatother

\makeatletter

\makeatother

\usepackage{placeins,afterpage}

\makeatletter
\newcommand*\bigcdot{\mathpalette\bigcdot@{.5}}
\newcommand*\bigcdot@[2]{\mathbin{\vcenter{\hbox{\scalebox{#2}{$\m@th#1\bullet$}}}}}
\makeatother






\usepackage{tikz}
\tikzset{
  c/.style={every coordinate/.try}
}

\newcommand\numberthis{\addtocounter{equation}{1}\tag{\theequation}}

\mathchardef\mhyphen="2D

\newcommand{\Lcal}{{\cal L}}

\newcommand{\wLcal}{\widehat{\cal L}}

\newcommand{\Xcal}{{\cal X}}

\newcommand{\bbm}{\bm{b}}

\newcommand{\ebm}{\bm{e}}

\newcommand{\nbm}{\bm{n}}

\newcommand{\pbm}{\bm{p}}

\newcommand{\rbm}{\bm{r}}

\newcommand{\xbm}{\bm{x}}

\newcommand{\oline}[1]{\mkern 1.5mu\overline{\mkern-1.5mu#1}}

\renewcommand{\hbar}{\oline{h}}

\DeclareMathOperator*{\argmin}{argmin}
\DeclareMathOperator*{\argmax}{argmax}



\newcommand{\E}{\mathbb{E}}

\newcommand{\baligned}     {\begin{aligned}}
	\newcommand{\ealigned}     {\end{aligned}}
\newcommand{\barray}       {\begin{array}}
	\newcommand{\earray}       {\end{array}}
\newcommand{\bbmatrix}     {\begin{bmatrix}}
	\newcommand{\ebmatrix}     {\end{bmatrix}}
\newcommand{\bcases}       {\begin{cases}}
	\newcommand{\ecases}       {\end{cases}}
\newcommand{\bcenter}      {\begin{center}}
	\newcommand{\ecenter}      {\end{center}}
\newcommand{\bcolumn}      {\begin{column}}
	\newcommand{\ecolumn}      {\end{column}}
\newcommand{\bcolumns}     {\begin{columns}}
	\newcommand{\ecolumns}     {\end{columns}}
\newcommand{\benumerate}   {\begin{enumerate}}
	\newcommand{\eenumerate}   {\end{enumerate}}
\newcommand{\bequation}    {\begin{equation}}
	\newcommand{\eequation}    {\end{equation}}
\newcommand{\bequationn}   {\begin{equation*}}
	\newcommand{\eequationn}   {\end{equation*}}
\newcommand{\bfigure}      {\begin{figure}}
	\newcommand{\efigure}      {\end{figure}}
\newcommand{\bflushright}  {\begin{flushright}}
	\newcommand{\eflushright}  {\end{flushright}}
\newcommand{\bitemize}     {\begin{itemize}}
	\newcommand{\eitemize}     {\end{itemize}}
\newcommand{\bpmatrix}     {\begin{pmatrix}}
	\newcommand{\epmatrix}     {\end{pmatrix}}
\newcommand{\bsubequations}{\begin{subequations}}
	\newcommand{\esubequations}{\end{subequations}}
\newcommand{\btable}       {\begin{table}}
	\newcommand{\etable}       {\end{table}}
\newcommand{\btabular}     {\begin{tabular}}
	\newcommand{\etabular}     {\end{tabular}}
\newcommand{\bvmatrix}     {\begin{vmatrix}}
	\newcommand{\evmatrix}     {\end{vmatrix}}

\newcommand{\bequali}      {\bsubequations\begin{align}}
	\newcommand{\eequali}      {\end{align}\esubequations}

\newcommand{\balgorithm}  {\begin{algorithm}}
	\newcommand{\ealgorithm}  {\end{algorithm}}
\newcommand{\balgorithmic}{\begin{algorithmic}}
	\newcommand{\ealgorithmic}{\end{algorithmic}}
\newcommand{\bassumption} {\begin{assumption}}
	\newcommand{\eassumption} {\end{assumption}}
\newcommand{\bcorollary}  {\begin{corollary}}
	\newcommand{\ecorollary}  {\end{corollary}}
\newcommand{\bdefinition} {\begin{definition}}
	\newcommand{\edefinition} {\end{definition}}
\newcommand{\bexample}    {\begin{example}}
	\newcommand{\eexample}    {\end{example}}
\newcommand{\bproposition}    {\begin{prop}}
	\newcommand{\eproposition}    {\end{prop}}
\newcommand{\blemma}      {\begin{lemma}}
	\newcommand{\elemma}      {\end{lemma}}
\newcommand{\bproblem}    {\begin{problem}}
	\newcommand{\eproblem}    {\end{problem}}
\newcommand{\bproof}      {\begin{proof}}
	\newcommand{\eproof}      {\end{proof}}
\newcommand{\bremark}     {\begin{remark}}
	\newcommand{\eremark}     {\end{remark}}
\newcommand{\btheorem}    {\begin{theorem}}
	\newcommand{\etheorem}    {\end{theorem}}

\usepackage{chngcntr}
\counterwithout{equation}{section}
\counterwithout{table}{section}
\counterwithout{figure}{section}
\counterwithout{algorithm}{section}
\usepackage[customcolors]{hf-tikz}
\usepackage{adjustbox}
\usepackage[normalem]{ulem}
\usepackage{cancel}
\usetikzlibrary{calc,shapes.geometric}
\renewcommand\cite{\citep}


\usepackage{colortbl}


\newcommand{\bp}{\mathbf{p}}

\newcommand{\btheta}{\bm{\theta}}
\newcommand{\bTheta}{\bm{\Theta}}
\newcommand{\wtheta}{\widehat{\bm{\theta}}}
\newcommand{\F}{\mathcal{F}}

\newcommand{\Pb}{\mathbb{P}}

\newcommand{\twtheta}{{\wtheta}}
\newcommand{\ttheta}{\widetilde{\btheta}}
\newcommand{\tz}{\widetilde{\mathbf{z}}}
\newcommand{\bM}{\mathbf{M}}

\theoremstyle{plain}

\newenvironment{customtheorem}[1]
  {\innercustomtheorem}
  {\endinnercustomtheorem}

  \newcolumntype{H}{>{\setbox0=\hbox\bgroup}c<{\egroup}@{}}

\begin{document}

\twocolumn[
\icmltitle{Logits are All We Need to Adapt Closed Models}



\icmlsetsymbol{equal}{*}

\begin{icmlauthorlist}
\icmlauthor{Gaurush Hiranandani}{typeface}
\icmlauthor{Haolun Wu\textsuperscript{*}}{stanford,mila}
\icmlauthor{Subhojyoti Mukherjee$^{\dagger}$}{adobe}
\icmlauthor{Sanmi Koyejo}{stanford}
\end{icmlauthorlist}

\icmlaffiliation{typeface}{Typeface}
\icmlaffiliation{stanford}{Stanford University}
\icmlaffiliation{mila}{Mila - Quebec AI Institute}
\icmlaffiliation{adobe}{Adobe Research}

\icmlcorrespondingauthor{Gaurush Hiranandani}{gaurush@typeface.ai}
\icmlcorrespondingauthor{Haolun Wu}{haolunwu@stanford.edu}

\icmlkeywords{Distribution Shift, Black-box Model, Reweighing, Decoding, Large Language Models}

\vskip 0.3in
]

\printAffiliationsAndNotice{\textsuperscript{*}Work done while visiting Stanford University. $^{\dagger}$Work done while PhD Candidate at UW Madison.}

\begin{abstract}
Many commercial Large Language Models (LLMs) are often closed-source, limiting developers to prompt tuning for aligning content generation with specific applications. While these models currently do not provide access to token logits, we argue that if such access were available, it would enable more powerful adaptation techniques beyond prompt engineering. In this paper, we propose a token-level probability reweighting framework that, given access to logits and a small amount of task-specific data, can effectively steer black-box LLMs toward application-specific content generation. Our approach views next-token prediction through the lens of supervised classification. We show that aligning black-box LLMs with task-specific data can be formulated as a label noise correction problem, leading to \emph{Plugin} model -- an autoregressive probability reweighting model that operates solely on logits. We provide theoretical justification for why reweighting logits alone is sufficient for task adaptation. Extensive experiments with multiple datasets, LLMs, and reweighting models demonstrate the effectiveness of our method, advocating for broader access to token logits in closed-source models.
We provide our code at \href{https://github.com/stair-lab/plugin-llm}{\textcolor{blue}{this https URL}}.
\end{abstract}

\section{Introduction}
\label{sec:introduction}

The rise of Large Language Models (LLMs) has revolutionized generative Artificial Intelligence, yet the most capable models are often closed-source  or black-box~\citep{achiam2023gpt, bai2022training}. These models generate text based on input prompts but keep their internal weights and training data undisclosed, limiting transparency and customization. Despite these constraints, closed-source LLMs are widely adopted across applications ranging from travel itinerary generation to tax advice, with developers largely relying on prompt optimization to achieve domain-specific outputs. 

However, this reliance on prompt engineering is insufficient for specialized tasks, e.g., those requiring brand-specific tone or style. Consider a content writer aiming to generate product descriptions that reflect a brand’s unique identity. Black-box LLMs, trained on broad datasets, often fail to meet such nuanced requirements. With access limited to generated tokens, developers resort to zero-shot~\citep{kojima2022large} or few-shot~\citep{song2023llm} prompting techniques. However, if model weights were accessible, advanced techniques like Parameter-Efficient Fine-Tuning (PEFT) using LoRA~\citep{hu2021lora}, QLoRA~\citep{dettmers2024qlora}, prefix tuning~\citep{li-liang-2021-prefix}, or adapters~\citep{hu-etal-2023-llm} could be employed for fine-tuning. Yet, due to intellectual property concerns and the high costs of development, most commercial LLMs remain closed-source, and even with API-based fine-tuning options, concerns over data privacy discourage developers from sharing proprietary data.

\begin{figure}
    \centering
    \includegraphics[width=1.0\linewidth]{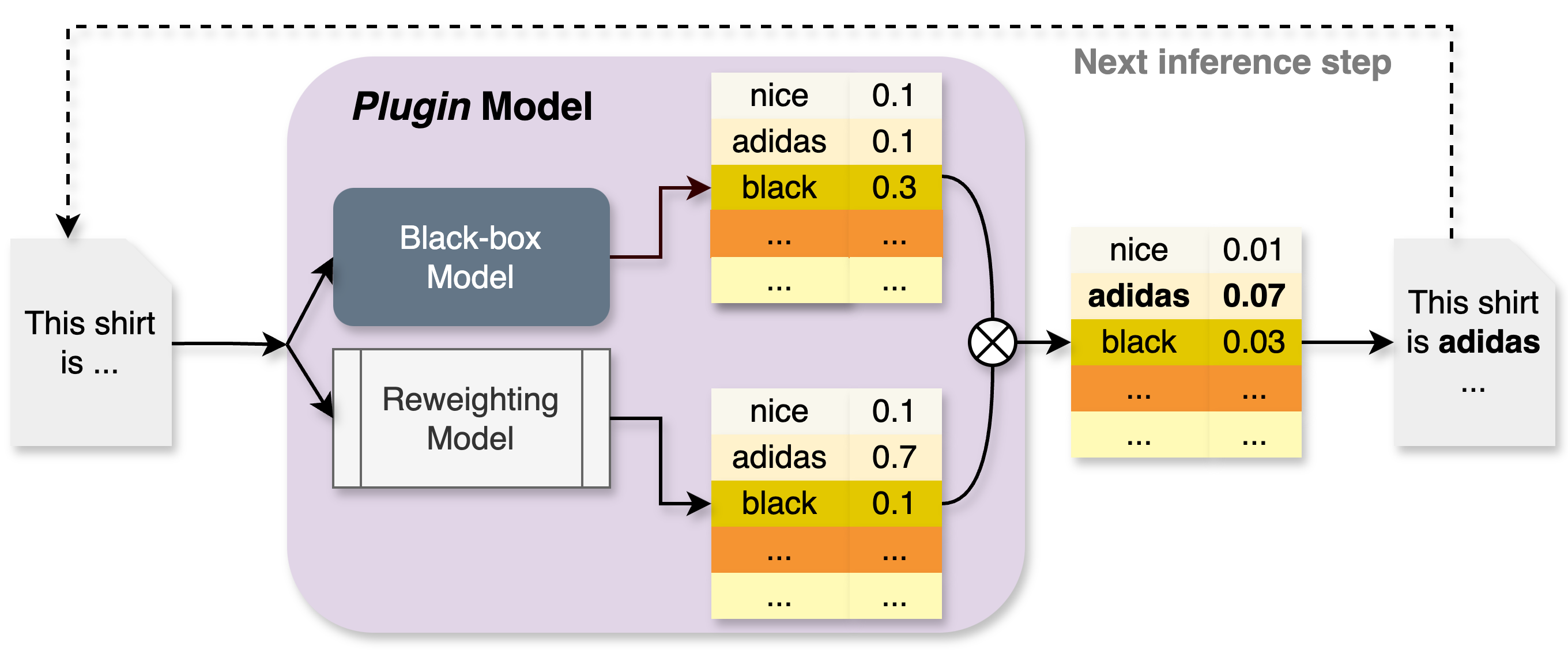}
    \vspace{-8mm}
    \caption{Inference phase of the \textit{Plugin} model. The token probabilities are a product of the probabilities from the black-box model and a reweighting model that denotes label transitioning.}
    \vspace{-8mm}
    \label{fig:plugin_fig}
\end{figure}

In this paper, we propose a middle ground between general-purpose LLM creators and developers seeking application-specific alignment. We argue that providing access to token logits, in addition to generated text, would enable more effective customization for downstream tasks. Viewing next-token prediction as a classification problem, we draw an analogy between LLMs and supervised classification models. Since decoder-only LLMs are trained to predict the next token given preceding tokens, aligning black-box LLMs to domain-specific data can be reframed as a label noise correction problem in supervised classification. In this analogy, the LLM’s broad training data serves as proxy labels, while application-specific data represents true labels. This can be interpreted as a distribution shift scenario. For example, in ``label shift''~\citep{lipton2018detecting}, certain tokens may appear more frequently in application-specific data than in the LLM’s original corpus. In ``class-dependent or independent label noise''~\citep{patrini2017making}, synonymous expressions or stylistic variations in application data may diverge from those seen during model training.


Inspired by the label noise correction method of~\citet{patrini2017making}, which estimates a transition matrix to correct class-dependent noise, we adapt this idea to black-box LLM alignment. Unlike prior work that modifies the loss and retrains the model, we lack access to the LLM’s training data and cannot retrain the model. Instead, we estimate an autoregressive transition matrix from application-specific data and use it to reweight token probabilities at inference.

This autoregressive extension is novel, as it accounts for dependencies on previously generated tokens when adjusting logits for the next token. By adapting label noise correction techniques to autoregressive language modeling, we present a practical method to align black-box LLMs using only logits—without requiring access to model weights or original training data.

Our contributions are summarized as follows:
\vspace{-2mm}

\begin{enumerate}
    \item We formulate the problem of adapting black-box LLMs for application-specific content generation as a loss correction approach, requiring only token logits at each generation step. This bridges label noise correction in supervised classification with autoregressive language modeling (Sections~\ref{sec:preliminaries} and~\ref{sec:robustness}).
    \vspace{-1mm}
    \item We propose an autoregressive probability reweighting framework, enabling token-level probability adjustment during inference. The resulting \textit{Plugin} model dynamically reweights logits to align generation with task-specific data (Section~\ref{sec:method}).
    \vspace{-1mm}
    \item We provide theoretical guarantees, showing that under mild assumptions, the \textit{Plugin} model consistently aligns probability estimates with the target distribution given sufficient application-specific samples. To our knowledge, this is 
    the first work to establish such consistency in an autoregressive label noise setting 
    (Section~\ref{sec:theory}).
    \vspace{-1mm}
    \item We conduct extensive experiments across four language generation datasets and three black-box LLMs. Our results, supported by multiple ablations, demonstrate that the \textit{Plugin} model outperforms baselines in adapting black-box LLMs for domain-specific content generation (Section~\ref{sec:experiments}). Based on our results, we advocate for publishing token logits alongside outputs in closed-source LLMs. 
\end{enumerate}

\section{Preliminaries}
\label{sec:preliminaries}

We begin by establishing the notation. The index set is denoted as $[c] = \{1, \dots, c\}$ for any positive integer $c$. Vectors are represented in boldface, for example, $\bm{v}$, while matrices are denoted using uppercase letters, such as $V$. The coordinates of a vector are indicated with subscripts, for instance, $v_j$. 
The all-ones vector is denoted by $\mathbf{1}$, with its size being clear from the context. The $c$-dimensional simplex is represented as $\Delta^{c-1} \subset [0,1]^c$. Finally, a sequence $(x_{t}, x_{t-1}, \dots, x_1)$ of size $t$ is denoted by $x_{t:1}$. 

We assume access to language data for the target task, while the black-box LLM, trained on broad world knowledge, is treated as having learned from a noisy version of this data. We seek to adapt the black-box model to align with the task-specific distribution. To formalize this, we extend the label-noise framework from supervised classification~\citep{patrini2017making} to the decoder-only language modeling.

Decoder-only models are trained using a next-token prediction objective. At each step, this setup resembles a supervised classification problem with $|V|$ classes, where $V$ is the vocabulary of tokens.   Formally, the label space at step $t$ is $\Xcal_t = \{\ebm^i : i \in [|V|]\}$, where $\ebm^i$ denotes the $i$-th standard canonical vector in $\mathbb R^{|V|}$, i.e., $\ebm^i \in \{0,1\}^{|V|}, \mathbf{1}^T\ebm^i=1$. The task at each step $t$ is to predict the next token 
$\xbm_t$ (denoted as one-hot vector) 
given a sequence of tokens $\xbm_{t-1:1}$. 


One observes examples $(\xbm_t, \xbm_{t-1:1})$ drawn from an unknown distribution $p^*(\xbm_t, \xbm_{t-1:1}) = p^*(\xbm_t|\xbm_{t-1:1})p^*(\xbm_{t-1:1})$ over $V \times V^{[t-1]}$, with expectations denoted by $E^*_{\xbm_t,\xbm_{t-1:1}}$. Cross-entropy loss is typically used for training over the vocabulary tokens. Assuming access to token logits, and thus the softmax outputs, from the black-box LLM, we interpret the softmax output as a vector approximating the class-conditional probabilities $p^*(\xbm_t|\xbm_{t-1:1})$, denoted as $b(\xbm_t|\xbm_{t-1:1}) \in \Delta^{|V|-1}$.


To quantify the discrepancy between the target label $\xbm_t = \ebm^i$ at step $t$ and the model’s predicted output, we define a loss function $\ell: |V| \times \Delta^{|V|-1} \rightarrow \mathbb{R}$. A common choice in next-token prediction tasks is the cross-entropy loss:
\begin{align*}
    \label{eq:loss}
    \ell(\ebm^i, b(\xbm_t | {\xbm_{t-1:1}})) &= -(\ebm^i)^T \log b(\xbm_t|\xbm_{t-1:1}) \\
    &= -\log b(\xbm_t = \ebm^i | \xbm_{t-1:1}). 
    \numberthis 
\end{align*}

With some abuse of notation, the loss in vector form $\bm\ell: \Delta^{|V|-1}\rightarrow\mathbb{R}^{|V|}$, computed on every possible label is $\bm\ell(b(\xbm_t|\xbm_{t-1:1}))
=\Big(\ell(\ebm^1, b(\xbm_t|\xbm_{t-1:1})), \dots, \ell(\ebm^{|V|}, b(\xbm_t|\xbm_{t-1:1})) \Big)^T.$

\section{Loss Robustness}
\label{sec:robustness}




We extend label noise modeling to the autoregressive language setting, focusing on asymmetric or class-conditional noise. At each step $t$, the label $\xbm_t$ in the black-box model’s training data is flipped to $\tilde \xbm_t \in V$ with probability $p^*(\tilde \xbm_t|\xbm_t)$, while preceding tokens $(\xbm_{t-1:1})$ remain unchanged. As a result, the black-box model observes samples from a noisy distribution: $p^*(\tilde \xbm_t, \xbm_{t-1:1}) = \sum_{\xbm_t} p^*(\tilde \xbm_t | \xbm_t) p^*(\xbm_t|\xbm_{t-1:1}) p^*(\xbm_{t-1:1}).$

We define the noise transition matrix $T_t \in [0,1]^{|V|\times |V|}$ at step $t$, where each entry $T_{t_{ij}} = p^*(\tilde \xbm_t = \ebm^j | \xbm_t = \ebm^i)$ represents the probability of label flipping. This matrix is row-stochastic but not necessarily symmetric.

To handle asymmetric label noise, we modify the loss $\bm{\ell}$ for robustness. Initially, assuming a known $T_t$, we apply a loss correction inspired by~\citep{patrini2017making, sukhbaatar2015training}. We then relax this assumption by estimating $T_t$ directly, forming the basis of our \textit{Plugin} model approach.

We observe that a language model trained with no loss correction would result in a predictor for noisy labels $b(\tilde \xbm_t | \xbm_{t-1:1})$. We can make explicit the dependence on $T_t$. For example, with cross-entropy we have:

\begin{align*}
&\ell(\ebm^i, b(\tilde \xbm_t | \xbm_{t-1:1})) = -\log b(\tilde\xbm_t = \ebm^i | \xbm_{t-1:1}) \\
&= -\log \sum_{j=1}^{|V|} p^*(\tilde\xbm_t = \ebm^i | \xbm_t = \ebm^j) b(\xbm_t = \ebm^j | \xbm_{t-1:1}) \\ 
&= -\log \sum_{j=1}^{|V|} T_{t_{ji}} {b}(\xbm_t = \ebm^j | \xbm_{t-1:1}), \numberthis
\label{eq:fc}
\end{align*}
or in matrix form
\begin{equation}
    \label{eq:fc-mat}
    \bm{\ell}(b(\tilde \xbm_t|\xbm_{t-1:1})) = -\log T_t^\top b(\xbm_t|\xbm_{t-1:1}).
\end{equation}



This loss compares the noisy label $\tilde \xbm_t$ to the noisy predictions averaged via the transition matrix $T_t$ at step $t$. Cross-entropy loss, commonly used for next-token prediction, is a \emph{proper composite loss} with the softmax function as its \emph{inverse link function}~\citep{patrini2017making}. Consequently, from Theorem 2 of~\citet{patrini2017making}, the minimizer of the \emph{forwardly-corrected} loss in Equation~\eqref{eq:fc-mat} on noisy data aligns with the minimizer of the true loss on clean data, i.e., 
\begin{align*}
    \label{eq:loss-minimizers}
    & \argmin_{w} E^*_{\tilde \xbm_t,\xbm_{t-1:1}}\Big[\bm{\ell}(\xbm_t, T_t^\top b(\xbm_t|\xbm_{t-1:1})) \Big] \\ &= 
    \argmin_{w} E^*_{\xbm_t,\xbm_{t-1:1}}\Big[\bm{\ell}(\xbm_t, b(\xbm_t|\xbm_{t-1:1})) \Big],
\end{align*}
where $w$ are the language model’s weights, implicitly embedded in the softmax output $b$ from the black-box model. This result suggests that if $T_t$ were known, we could transform the softmax output $b(\xbm_t \mid \xbm_{t-1:1})$ using $T_t^T$, use the transformed predictions as final outputs, and retrain the model accordingly. However, since $T_t$ is unknown and training data is inaccessible, estimating $T_t$ from clean data is essential to our approach.

\subsection{Estimation of Transition Matrix}
\label{ssec:estimatingT}


We assume access to a small amount of target data for the task. Given that the black-box model is expressive enough to approximate $p^*(\tilde{\xbm}_t \mid \xbm_{t-1:1})$ (Assumption (2) in Theorem 3 of~\citet{patrini2017making}), the transition matrix $T_t$ can be estimated from this target data. Considering the supervised classification setting at step $t$, let $\mathcal{X}_t^i$ represent all target data samples where $\xbm_t = \ebm^i$ and the preceding tokens are $(\xbm_{t-1:1})$. A naive estimate of the transition matrix is: $\hat T_{t_{ij}}=b(\tilde \xbm_t = \ebm^j|\xbm_t=\ebm^i)=\frac{1}{|\mathcal{X}_t^i|}\sum_{x\in\mathcal{X}_t^i}b(\tilde \xbm_t = \ebm^j|\xbm_{t-1:1})$. While this setup works for a single step $t$, there are two key challenges in extending it across all steps in the token prediction task:

\begin{enumerate}[leftmargin=0.4cm]
    \item \textbf{Limited sample availability:} The number of samples where $\bm{x}_t = \bm{e}^i$ and the preceding tokens $(\bm{x}_{t-1}, \dots, \bm{x}_1)$ match exactly is limited in the clean data, especially with large vocabulary sizes (e.g., $|V| = O(100K)$ for LLaMA~\citep{dubey2024llama}). This necessitates modeling the transition matrix as a function of features derived from $\bm{x}_{t-1:1}$, akin to text-based autoregressive models.
    \item \textbf{Large parameter space:} With a vocabulary size of $|V| = O(100K)$, the transition matrix $T_t$ has approximately 10 billion parameters. This scale may exceed the size of the closed-source LLM and cannot be effectively learned from limited target data. Thus, structural restrictions must be imposed on $T_t$ to reduce its complexity.
\end{enumerate}

To address these challenges, we impose the restriction that the transition matrix $T_t$ is diagonal. While various constraints could be applied to simplify the problem, assuming $T_t$ is diagonal offers two key advantages. First, it allows the transition matrix—effectively a vector in this case—to be modeled using standard autoregressive language models, such as a \emph{GPT-2 model with $k$ transformer blocks}, a \emph{LLaMA model with $d$-dimensional embeddings}, or a fine-tuned \emph{GPT-2-small} model. These architectures can be adjusted based on the size of the target data. Second, a diagonal transition matrix corresponds to a symmetric or class-independent label noise setup, where $\xbm_t = \ebm^i$ flips to any other class with equal probability in the training data. This assumption, while simplifying, remains realistic within the framework of label noise models.

Enforcing a diagonal structure ensures efficient estimation of the transition matrix while maintaining practical applicability within our framework. Next, we outline our approach for adapting closed-source language models to target data.


\section{Proposed Method: The Plugin Approach}
\label{sec:method}


To estimate the autoregressive transition vector, we train an autoregressive language model on target data, which operates alongside the black-box model during inference. This model acts as an autoregressive reweighting mechanism, adjusting the token probabilities produced by the black-box model. The combined approach, integrating probabilities from the black-box and reweighting models, is referred to as the \textit{Plugin} model. The term \textit{Plugin} is inspired by classification literature, where plugin methods reweight probabilities to adapt to distribution shifts~\citep{koyejo2014consistent, narasimhan2015consistent, hiranandani2021optimizing}. We now detail the training and inference phases, summarized in Algorithm~\ref{alg:plugin_model} (Appendix~\ref{sec:alg_app}) and illustrated in Figure~\ref{fig:plugin_fig}.

\subsection{Training the Plugin Model}
\label{ssec:training}




During each training iteration, a sequence $s$ of $m$ tokens is passed through both the black-box model and the reweighting model to obtain token probabilities $\{\bm{b}_1, \bm{b}_2, \dots, \bm{b}_m\}$ and $\{\bm{r}_1, \bm{r}_2, \dots, \bm{r}_m\}$, respectively, where each $\bm{b}_i, \bm{r}_i \in \Delta^{|V|-1}$. The final token probability from the \textit{Plugin} model is computed by normalizing the element-wise product of these probabilities:
\begin{equation}
    \label{eq:final_prob}
    {\bm{p}}_i = \frac{\bm{b}_i \odot \bm{r}_i}{\|\bm{b}_i \odot \bm{r}_i\|_1}.
\end{equation}

The sequence-level cross-entropy loss is given by:
\begin{equation}
    \label{eq:ce_batch_loss}
    \ell_s = -\frac{1}{m} \sum_{i=1}^{m} \sum_{j=1}^{|V|} \log({\bm{p}}_i) \odot \bm{e}_j,
\end{equation}
where the $j$-th token appears at the $i$-th position in the sequence $s$. During backpropagation, only the reweighting model parameters are updated, while the black-box model remains frozen. This formulation extends naturally to batch training, refining $\bm{r}_i$ over iterations to approximate the transition vector governing label shifts in the target data.

\subsection{Inference from the Plugin Model}
\label{ssec:inference}



Given a fully trained reweighting model and access to the black-box model, token generation proceeds autoregressively. At the first step, the black-box model produces token probabilities $\bm{b}_1$, while the reweighting model outputs $\bm{r}_1$. The \textit{Plugin} model selects the first token as $\bm{x}_1 = \argmax_{V} (\bm{b}_1 \odot \bm{r}_1).$ For subsequent steps, given the previously generated tokens $\bm{x}_{t-1:1}$, we obtain probabilities $\bm{b}_t$ from the black-box model and $\bm{r}_t$ from the reweighting model. The \textit{Plugin} model then predicts the next token as: $\bm{x}_t = \argmax_{V} (\bm{b}_t \odot \bm{r}_t)$.

The process continues until a stopping criterion is met. Note that, this manuscript focuses on greedy decoding for inference. Other decoding strategies, such as temperature scaling, top-$p$ sampling, or beam search, can be incorporated by normalizing the element-wise product of probabilities and using them as the final token distribution, as in Equation~\eqref{eq:final_prob}.
\section{Theoretical Analysis}
\label{sec:theory}

We establish the convergence properties of \textit{Plugin}, showing that after $t$ tokens, it accurately estimates the autoregressive noise transition matrix. Modeling the matrix as a function of an unknown parameter $\btheta_*$, we prove that optimizing the autoregressive loss over token sequences enables consistent estimation of $\btheta_*$ with high probability. To our knowledge, this is the first finite-time convergence analysis for transition matrix estimation under autoregressive noisy loss.

Let $\F^{t-1}$ denote the history of selected tokens up to time $t-1$. Let an unknown parameter $\btheta_* \in \bTheta \subseteq \mathbb{R}^d$ governs the transition dynamics of label flipping between token pairs. The transition matrix at time $t$, denoted as $T_t(\btheta_* | \F^{t-1})$, depends on $\btheta_*$ and all previously observed tokens. Before proving our main result, we first make a few assumptions.

%

\begin{assumption}
\label{assm:transition-matrix}
    Let $T_t(\btheta_* ; x_i, x_j, \F^{t-1})$ denote the $(i,j)$-th component of the transition matrix, and let  $f_{I_t}(\btheta_* ; x_i, x_j,  \F^{t-1})$ be the transition function that determines the transition from $x_i$ to $x_j$, where $I_t$ is the $x_i$ token selected at time $t$. Let $x_i, x_j \!\in\! \mathbb{R}^d$.
    We assume that $\nabla f_{I_t}(\btheta_* ; x_i, x_j, \F^{t-1})\!< \!\lambda_0$ and $\nabla^2 f_{I_t}(\btheta_* ; x_i, x_j,  \F^{t-1}) \!<\! \lambda_1$ for some constant $\lambda_0 \!>\! 0$, $\lambda_1 > 0$ and for all steps $t$.
\end{assumption}
\Cref{assm:transition-matrix} assumes the transition matrix depends on the history-dependent function $f_{I_t}(\cdot)$ with bounded gradient and Hessian, similar to assumptions in~\cite{singh2023hessian, zhang2024transformers} for other deep models.
%
%

\begin{assumption}
\label{assm:bound-ce}
    We assume the cross-entropy loss~\eqref{eq:ce_batch_loss} is clipped by $\epsilon > 0$ and upper bounded as  
 $\ell^{clipped}_{t} \!\!\leq\!\! C|V|^2(Y_t - f_{I_t}(\btheta_*; x_i, x_j, \F^{t-1}))^2$ for any time $t$, where $Y_t$ is the predicted token class, $f_{I_t}$ determines the true class and satisfies \Cref{assm:transition-matrix}, and $C \!>\! 0$ is a constant.
    %
\end{assumption}

\cref{assm:bound-ce} ensures that the clipped log loss is upper bounded by a smoother squared loss. For the remaining of this section we refer to this squared loss at time $t$ as $\ell_t(\btheta)$.
Let the \textit{Plugin} model minimize the loss $\ell_{1}(\btheta), \ell_{2}(\btheta), \cdots, \ell_{t}(\btheta)$ over $t$ iterations. Let $\twtheta_t =\argmin_{\btheta \in \bTheta} \sum_{s = 1}^t \ell_{s}(\btheta)$. 
%
At every iteration $t$, the \textit{Plugin} algorithm looks into the history $\F^{t-1}$ and samples a token $\xbm_t\sim \pbm_{\hat \theta_t}=\bbm_t \odot \rbm_{\hat \theta_t}$. 

Let $\wLcal_t(\btheta) = \frac{1}{t}\sum_{s=1}^t \ell_{s}(\btheta)$ and its expectation $\Lcal_t(\btheta) = \frac{1}{t}\sum_{s=1}^t\E_{x_s\sim \mathbf{p}_{\twtheta_{s-1}}}[\ell_s(\btheta)|\F^{s-1}]$.
We impose regularity and smoothness assumptions on the loss function $\ell_t(\btheta)$ as stated in \Cref{assm:thm} (\Cref{app:theory}). We are now ready to prove the main theoretical result of the paper.

\begin{customtheorem}{1}
\label{thm:main}
Suppose $\ell_{1}(\btheta), \cdots, \ell_{t}(\btheta): \mathbb{R}^{|V|} \!\!\rightarrow \!\!\mathbb{R}$ are loss functions from a distribution that satisfies Assumptions \ref{assm:transition-matrix}, \ref{assm:bound-ce}, and \ref{assm:thm}. Define 
    $\Lcal_t(\btheta) = \frac{1}{t}\sum_{s=1}^t\E_{x_s\sim \mathbf{p}_{\twtheta_{s-1}}}[\ell_s(\btheta)|\F^{s-1}]$
where, $\twtheta_t =\argmin_{\btheta \in \bTheta} \sum_{s = 1}^t \ell_{s}(\btheta)$. If $t$ is large enough such that $ \frac{\gamma\log(dt)}{t}\leq c^{\prime} \min \left\{\frac{1}{C_{1}C_{2} |V|^4 }, \frac{\max\limits_{\btheta \in \bTheta}\left(\!\Lcal_{t}(\btheta)\!-\!\Lcal_{t}\left(\btheta_{*}\!\right)\right)}{C_{2}}\right\}$
then for a constant $\gamma \geq 2$,  universal constants $C_1,C_2,c'$,  we have that 
\begin{align*}
\left(1-\rho_{t}\right) \frac{\sigma_t^2}{t}- \frac{C_1^2}{t^{\gamma / 2}} 
&\leq \E\left[\Lcal_t(\twtheta_t)-\Lcal_t\left(\btheta_{*}\right)\right] \\
&\leq \left(1+\rho_{t}\right) \frac{\sigma_t^2}{t}\!+\!\frac{\max\limits_{\btheta \in \bTheta}\left(\!\Lcal_{t}(\btheta)\!-\!\Lcal_{t}\left(\btheta_{*}\!\right)\right)}{t^{\gamma}},
\end{align*}
where 
$\sigma^{2}_t \coloneqq \E_{}\left[\frac{1}{2}\left\|\nabla \wLcal_{t}\left(\btheta_{*}\right)\right\|_{\left(\nabla^{2} \Lcal_t\left(\btheta_{*}\right)\right)^{-1}}^{2}\right]$, 
and $\rho_t \coloneqq \left(C_1C_2 + 2\eta^2\lambda_1^2\right)\sqrt{\frac{\gamma\log(dt)}{t}}$.
\end{customtheorem}

\Cref{thm:main} bounds the difference between the estimated and true average loss functions, showing that this gap diminishes as the number of training tokens increases. Since $\twtheta_t = \argmin_{\btheta \in \bTheta} \sum_{s = 1}^t \ell_{s}(\btheta)$, the \textit{Plugin} model progressively refines its estimation of the unknown parameter $\btheta_*$. As the transition matrix $T_t(\btheta_*; x_i, x_j, \F^{t-1})$ is derived from $f_{I_t}(\btheta_*; x_i, x_j, \F^{t-1})$, which depends on $\btheta_*$, training on sufficiently many tokens ensures an accurate estimation of each component of $T_t(\btheta_* | \F^{t-1})$.

Our proof reformulates the problem as a sequential hypothesis testing setting to estimate the average loss function $\Lcal_t(\twtheta_t)$ using the sequence of losses $\ell_1(\btheta), \dots, \ell_t(\btheta)$~\citep{naghshvar2013active, lattimore2020bandit}. Unlike prior work~\citep{frostig2015competing, chaudhuri2015convergence}, which assumes i.i.d. losses, the loss at time $t$ in our setting  depends on all previous losses. Additionally, \citet{mukherjee2022chernoff} study a different active regression setting without considering cross-entropy loss or transition noise matrices as in~\citet{patrini2017making}. We provide a brief overview of the proof technique in \Cref{app:remark} (\Cref{app:theory}), highlighting key novelties.

\vspace{-1mm}
\section{Related Work}
\label{sec:relatedwork}
\paragraph{Parameter-Efficient Fine-Tuning (PEFT).}  
PEFT methods adapt LLMs to downstream tasks while minimizing computational overhead. LoRA~\citep{hu2021lora} and QLoRA~\citep{dettmers2024qlora} introduce low-rank updates and quantization for efficient fine-tuning, while prefix tuning~\citep{li-liang-2021-prefix}, adapters~\citep{hu2023llm}, and soft prompting~\citep{lester2021power} modify task-specific representations through trainable layers or embeddings. \citet{torroba2025duality} further explore the equivalence between gradient-based transformations and adapter-based tuning. However, these methods require access to model weights, gradients, or architecture details, making them unsuitable for closed-source LLMs and inapplicable as baselines in our setup. 
In contrast, our approach operates solely on token logits, enabling adaptation without modifying the underlying model. Thus, we emphasize that the \emph{Plugin} model is not an alternative to fine-tuning, but rather an approach that uniquely stands for adapting black-box LLMs which only provide logit access.
\vspace{-2.5mm}


\paragraph{Steering and Aligning LLMs.}  
LLM alignment methods primarily use reinforcement learning or instruction tuning. RLHF and DPO~\citep{christiano2017deep, ouyang2022training, rafailov2024direct} optimize model behavior via human preferences, with DPO eliminating reward modeling. Constitutional AI~\citep{bai2022constitutional} aligns models using self-generated principles, while instruction tuning~\citep{weifinetuned2021, sanhmultitask2022} adapts them via task-specific demonstrations. Unlike our approach, these methods require model weights and training data, limiting their applicability as baselines in our setup. 
\vspace{-2.5mm}

\paragraph{Calibration of LLMs.}  
LLM calibration methods aim to align model confidence with predictive accuracy and adjust confidence scores but do not alter token predictions~\citep{ulmer2024calibrating, shenthermometer, huang2024calibrating, kapoor2024calibration, zhu2023calibration,zhang2023study}. In contrast, our method reweights token probabilities at inference, enabling adaptation of black-box LLMs without modifying the model or requiring fine-tuning.
\vspace{-2.5mm}

\paragraph{Black-box LLMs.}  
Prior work explores various approaches for adapting black-box LLMs without fine-tuning, though they differ fundamentally from our method. \cite{gao2024aligning} infer user preferences through interactive edits but do not adapt models based on past language data. Diffusion-LM~\citep{li2022diffusion} formulates text generation as a non-autoregressive denoising process, whereas our approach reweights token probabilities autoregressively without requiring black-box model weights. Discriminator-based methods~\citep{dathathriplug, mireshghallah2022mix, yang2021fudge, krause2021gedi} control generation based on predefined attributes, contrasting with our method, which enables free-form text adaptation. DExperts~\citep{liu2021dexperts,liu2024tuning} combines expert and anti-expert probabilities; we incorporate a similar probability combining strategy in a modified baseline without a de-expert component. In-context learning~\citep{long2023adapt, dong2024survey} offers a common adaptation technique for black-box models and serves as a baseline in our setup.
\vspace{-2.5mm}


\begin{table*}[t]
    \centering
    \caption{Performance comparison on E2E NLG dataset. We show mean and standard deviation of the metrics over five seeds.}
    \label{tab:e2e_final_results}
    \resizebox{1.0\textwidth}{!}{ 
    \begin{tabular}{|llccccccc|}
    \hline
        Model & Method & BLEU & Rouge-1 & Rouge-2 & Rouge-L & METEOR & CIDEr & NIST\\ 
        \hline

        GPT2-M & Zeroshot & 0.0247 & 0.3539 & 0.1003 & 0.2250 & 0.3015 & 0.0156 & 0.6133 \\
                GPT2-M & ICL-1 & 0.0543$_{\pm0.026}$ & 0.3431$_{\pm0.048}$ & 0.1299$_{\pm0.033}$ & 0.2280$_{\pm0.047}$ & 0.3434$_{\pm0.051}$ & 0.0260$_{\pm0.042}$ & 0.7767$_{\pm0.060}$ \\
        GPT2-M & ICL-3 & 0.0750$_{\pm0.035}$ & 0.3955$_{\pm0.028}$ & 0.1676$_{\pm0.020}$ & 0.2649$_{\pm0.052}$ & 0.3977$_{\pm0.063}$ & 0.0252$_{\pm0.049}$ & 0.8993$_{\pm0.076}$ \\
        GPT2-M & NewModel & \textbf{0.2377}$_{\pm0.011}$ & \underline{0.5049}$_{\pm0.014}$ & \textbf{0.2742}$_{\pm0.013}$ & \textbf{0.3902}$_{\pm0.006}$ & \underline{0.4521}$_{\pm0.016}$ & \textbf{0.3938}$_{\pm0.019}$ & \textbf{1.1927}$_{\pm0.069}$ \\
        GPT2-M & WeightedComb & 0.1709$_{\pm0.008}$ & 0.4817$_{\pm0.020}$ & 0.2447$_{\pm0.011}$ & 0.3720$_{\pm0.014}$ & 0.4071$_{\pm0.025}$ & 0.3329$_{\pm0.027}$ & 1.0864$_{\pm0.002}$\\
        GPT2-M & TempNet & 0.1036$_{\pm0.010}$ & 0.3425$_{\pm0.016}$ & 0.1526$_{\pm0.012}$ & 0.2735$_{\pm0.010}$ & 0.2615$_{\pm0.016}$ & 0.4116$_{\pm0.023}$ & 0.2826$_{\pm0.057}$\\
                GPT2-M & \textbf{Plugin (Ours)} & \underline{0.1863}$_{\pm0.010}$ & \textbf{0.5227}$_{\pm0.011}$ & \underline{0.2612}$_{\pm0.013}$ & \underline{0.3728}$_{\pm0.003}$ & \textbf{0.4857}$_{\pm0.012}$ & \underline{0.3544}$_{\pm0.013}$ & \underline{1.1241}$_{\pm0.009}$\\
        \hline
        
        GPT2-XL & Zeroshot & 0.0562 & 0.4013 & 0.1636& 0.2862 & 0.3697 & 0.0187 & 0.5338 \\
        GPT2-XL & ICL-1 & 0.0686$_{\pm0.032}$ & 0.4016$_{\pm0.042}$ & 0.1404$_{\pm0.052}$ & 0.2745$_{\pm0.025}$ & 0.3503$_{\pm0.019}$ & 0.0353$_{\pm0.015}$ & 0.7944$_{\pm0.067}$ \\
        GPT2-XL & ICL-3 & 0.0980$_{\pm0.035}$ & 0.4188$_{\pm0.040}$ & 0.1923$_{\pm0.046}$ & 0.2912$_{\pm0.031}$ & 0.3925$_{\pm0.027}$ & 0.0250$_{\pm0.017}$ & 0.9390$_{\pm0.054}$ \\ 
        GPT2-XL & NewModel & \underline{0.2377}$_{\pm0.011}$ & \underline{0.5049}$_{\pm0.014}$ & \underline{0.2742}$_{\pm0.013}$ & \underline{0.3902}$_{\pm0.006}$ & \underline{0.4521}$_{\pm0.016}$ & \underline{0.3938}$_{\pm0.019}$ & \underline{1.1927}$_{\pm0.069}$ \\
        GPT2-XL & WeightedComb & 0.1184$_{\pm0.010}$ & 0.4237$_{\pm0.016}$ & 0.1858$_{\pm0.012}$ & 0.3004$_{\pm0.010}$ & 0.3776$_{\pm0.016}$ & 0.1818$_{\pm0.023}$ & 1.0261$_{\pm0.057}$ \\
        GPT2-XL & TempNet & 0.1325$_{\pm0.013}$ & 0.4642$_{\pm0.017}$ & 0.2516$_{\pm0.016}$ & 0.3021$_{\pm0.022}$ & 0.4126$_{\pm0.025}$ & 0.3627$_{\pm0.033}$ & 0.8027$_{\pm0.047}$\\
        GPT2-XL & \textbf{Plugin (Ours)}  & \textbf{0.2470}$_{\pm0.009}$ & \textbf{0.5536}$_{\pm0.007}$ & \textbf{0.3084}$_{\pm0.007}$ & \textbf{0.4213}$_{\pm0.008}$ & \textbf{0.5057}$_{\pm0.009}$ & \textbf{0.5455}$_{\pm0.013}$ & \textbf{1.2736}$_{\pm0.051}$ \\
        \hline
        
        LLaMA-3.1-8B & Zeroshot & 0.3226 & 0.6917 & 0.4050 & 0.5004 & 0.6041 & 0.9764 &  1.1310 \\
        LLaMA-3.1-8B & ICL-1 & 0.3301$_{\pm0.037}$ & 0.6914$_{\pm0.027}$ & 0.4126$_{\pm0.026}$ & 0.5023$_{\pm0.018}$ & 0.6037$_{\pm0.015}$ & 0.9715$_{\pm0.057}$ & 1.1735$_{\pm0.066}$\\
        LLaMA-3.1-8B & ICL-3 & \underline{0.3527}$_{\pm0.033}$ & 0.6936$_{\pm0.036}$ & 0.4217$_{\pm0.017}$ & 0.5127$_{\pm0.017}$ & 0.6202$_{\pm0.009}$ & 0.9927$_{\pm0.018}$ & 1.1672$_{\pm0.047}$\\
        LLaMA-3.1-8B & NewModel & 0.2452$_{\pm0.008}$ & 0.5347$_{\pm0.005}$ & 0.2905$_{\pm0.006}$ & 0.4097$_{\pm0.005}$ & 0.4812$_{\pm0.009}$ & 0.4571$_{\pm0.021}$ & \textbf{1.2281}$_{\pm0.041}$\\
        LLaMA-3.1-8B & WeightedComb & 0.3517$_{\pm0.004}$ & \underline{0.7040}$_{\pm0.004}$ & \underline{0.4249}$_{\pm0.004}$ & \underline{0.5181}$_{\pm0.003}$ & \underline{0.6206}$_{\pm0.002}$ & \underline{1.0947}$_{\pm0.010}$ & 1.1737$_{\pm0.015}$\\
        LLaMA-3.1-8B & TempNet & 0.3502$_{\pm0.023}$ & 0.6927$_{\pm0.006}$ & 0.4216$_{\pm0.023}$ & 0.5027$_{\pm0.017}$ & 0.6124$_{\pm0.019}$ & 0.9625$_{\pm0.025}$ & 1.1713$_{\pm0.027}$\\
        LLaMA-3.1-8B & \textbf{Plugin (Ours)} & \textbf{0.3691}$_{\pm0.013}$ & \textbf{0.7113}$_{\pm0.002}$ & \textbf{0.4374}$_{\pm0.004}$ & \textbf{0.5247}$_{\pm0.002}$ & \textbf{0.6392}$_{\pm0.009}$ & \textbf{1.1441}$_{\pm0.030}$ & \underline{1.1749}$_{\pm0.034}$\\
        
        \hline
    \end{tabular}
    }
    \vspace{-7mm}
\end{table*}

\begin{table*}[t]
    \centering
    \caption{Performance comparison on Web NLG dataset. We show mean and standard deviation of the metrics over five seeds.}
    \label{tab:web_final_results}
    \resizebox{1.0\textwidth}{!}{ 
    \begin{tabular}{|llccccccc|}
    \hline
        Model & Method & BLEU & Rouge-1 & Rouge-2 & Rouge-L & METEOR & CIDEr & NIST\\ 
        \hline
        GPT2-M & Zeroshot & 0.0213 & 0.2765 & 0.1014 & 0.1872 & 0.2111 & 0.0479 & 0.2340\\
        GPT2-M & ICL-1 & 0.0317$_{\pm0.013}$ & 0.3388$_{\pm0.021}$ & 0.1318$_{\pm0.013}$ & 0.2346$_{\pm0.019}$ & 0.2876$_{\pm0.042}$ & 0.0732$_{\pm0.053}$ & 0.2715$_{\pm0.042}$\\
        GPT2-M & ICL-3 & 0.0461$_{\pm0.014}$ & 0.3388$_{\pm0.018}$ & 0.1378$_{\pm0.016}$ & 0.2291$_{\pm0.010}$ & \underline{0.3408}$_{\pm0.027}$ & 0.0748$_{\pm0.031}$ & \textbf{0.3283}$_{\pm0.037}$\\
        GPT2-M & NewModel & \underline{0.1071}$_{\pm0.005}$ & 0.3260$_{\pm0.010}$ & 0.1496$_{\pm0.014}$ & 0.2724$_{\pm0.013}$ & 0.2642$_{\pm0.008}$ & \underline{0.4327}$_{\pm0.023}$ & 0.2916$_{\pm0.031}$ \\
        GPT2-M & WeightedComb & 0.0692$_{\pm0.007}$ & \underline{0.3593}$_{\pm0.010}$ & \underline{0.1568}$_{\pm0.008}$ & \underline{0.2834}$_{\pm0.015}$ & 0.2379$_{\pm0.030}$ & 0.1916$_{\pm0.028}$ & 0.2996$_{\pm0.037}$ \\
        GPT2-M & TempNet & 0.1045$_{\pm0.012}$ & 0.3526$_{\pm0.014}$ & 0.1526$_{\pm0.014}$ & 0.2731$_{\pm0.018}$ & 0.3326$_{\pm0.026}$ & 0.4237$_{\pm0.033}$ & 0.3002$_{\pm0.048}$\\
        GPT2-M & \textbf{Plugin (Ours)} & \textbf{0.1280}$_{\pm0.007}$ & \textbf{0.4590}$_{\pm0.005}$ & \textbf{0.2226}$_{\pm0.005}$ & \textbf{0.3515}$_{\pm0.006}$ & \textbf{0.3832}$_{\pm0.010}$ & \textbf{0.7280}$_{\pm0.039}$ & \underline{0.3060}$_{\pm0.017}$ \\
        \hline
        GPT2-XL & Zeroshot & 0.0317 & 0.2992 & 0.1321 & 0.2417 & 0.1969 & 0.0491 & 0.1826\\
        GPT2-XL & ICL-1 & 0.0510$_{\pm0.024}$ & 0.3223$_{\pm0.026}$ & 0.1526$_{\pm0.016}$ & 0.2562$_{\pm0.031}$ & 0.2591$_{\pm0.009}$ & 0.1336$_{\pm0.029}$ & 0.2235$_{\pm0.033}$\\
        GPT2-XL & ICL-3 & 0.0744$_{\pm0.016}$ & 0.3383$_{\pm0.036}$ & \underline{0.1682}$_{\pm0.016}$ & 0.2651$_{\pm0.028}$ & \underline{0.3071}$_{\pm0.014}$ & 0.1675$_{\pm0.024}$ & 0.2550$_{\pm0.021}$\\
        GPT2-XL & NewModel & \underline{0.1071}$_{\pm0.005}$ & 0.3260$_{\pm0.010}$ & 0.1496$_{\pm0.014}$ & 0.2724$_{\pm0.013}$ & 0.2642$_{\pm0.008}$ & \underline{0.4327}$_{\pm0.023}$ & \underline{0.2916}$_{\pm0.031}$ \\
        GPT2-XL & WeightedComb & 0.0636$_{\pm0.006}$ & \underline{0.3453}$_{\pm0.007}$ & 0.1666$_{\pm0.003}$ & \underline{0.2782}$_{\pm0.005}$ & 0.2871$_{\pm0.006}$ & 0.2460$_{\pm0.005}$ & \textbf{0.2981}$_{\pm0.018}$\\
        GPT2-XL & TempNet & 0.0925$_{\pm0.008}$ & 0.3357$_{\pm0.009}$ & 0.1663$_{\pm0.014}$ & 0.2764$_{\pm0.011}$ & 0.3025$_{\pm0.009}$ & 0.4226$_{\pm0.013}$ & 0.2837$_{\pm0.027}$\\
        GPT2-XL & \textbf{Plugin (Ours)} & \textbf{0.1673}$_{\pm0.004}$ & \textbf{0.4616}$_{\pm0.007}$ & \textbf{0.2527}$_{\pm0.007}$ & \textbf{0.3757}$_{\pm0.008}$ & \textbf{0.3895}$_{\pm0.007}$ & \textbf{0.8987}$_{\pm0.013}$ & 0.2646$_{\pm0.003}$ \\
        \hline
        
        LLaMA-3.1-8B & Zeroshot & 0.1453 & 0.5278 & 0.3030 & 0.3982 & 0.4314 & 0.6991 & \underline{0.2684}\\
        LLaMA-3.1-8B & ICL-1 & \underline{0.2166}$_{\pm0.031}$ & 0.5944$_{\pm0.027}$ & 0.3706$_{\pm0.025}$ & \underline{0.4667}$_{\pm0.013}$ & 0.5651$_{\pm0.045}$ & \underline{1.5719}$_{\pm0.024}$ & 0.2462$_{\pm0.038}$\\
        LLaMA-3.1-8B & ICL-3 & 0.2031$_{\pm0.027}$ & 0.5937$_{\pm0.019}$ & \underline{0.3821}$_{\pm0.015}$ & 0.4653$_{\pm0.024}$ & \underline{0.5682}$_{\pm0.046}$ & 1.3826$_{\pm0.051}$ & 0.2469$_{\pm0.045}$\\
        LLaMA-3.1-8B & NewModel & 0.1284$_{\pm0.005}$ & 0.3506$_{\pm0.009}$ & 0.1673$_{\pm0.007}$ & 0.2879$_{\pm0.009}$ & 0.2921$_{\pm0.008}$ & 0.4999$_{\pm0.030}$ & \textbf{0.2973}$_{\pm0.008}$\\
        LLaMA-3.1-8B & WeightedComb & 0.1922$_{\pm0.012}$ & \underline{0.5986}$_{\pm0.019}$ & 0.3612$_{\pm0.012}$ & 0.4659$_{\pm0.008}$ & 0.4470$_{\pm0.030}$ & 1.1855$_{\pm0.075}$ & 0.2575$_{\pm0.020}$\\
        LLaMA-3.1-8B & TempNet & 0.2315$_{\pm0.010}$ & 0.5916$_{\pm0.015}$ & 0.3794$_{\pm0.012}$ & 0.4620$_{\pm0.010}$ & 0.5581$_{\pm0.036}$ & 1.4826$_{\pm0.043}$ & 0.2513$_{\pm0.020}$\\
        LLaMA-3.1-8B & \textbf{Plugin (Ours)} & \textbf{0.2542}$_{\pm0.004}$ & \textbf{0.6375}$_{\pm0.005}$ & \textbf{0.3873}$_{\pm0.005}$ & \textbf{0.4869}$_{\pm0.007}$ & \textbf{0.5724}$_{\pm0.004}$ & \textbf{1.5911}$_{\pm0.046}$ & 0.2590$_{\pm0.003}$\\
        \hline
    \end{tabular}
    }
    \vspace{-5mm}
\end{table*}

\begin{table*}[t]
    \centering
    \caption{Performance comparison on CommonGen dataset. We show mean and standard deviation of the metrics over five seeds.}
    \label{tab:common_final_results}
    \resizebox{1.0\textwidth}{!}{ 
    \begin{tabular}{|llccccccc|}
    \hline
        Model & Method & BLEU & Rouge-1 & Rouge-2 & Rouge-L & METEOR & CIDEr & NIST\\ 
        \hline
        GPT2-M & Zeroshot & 0.0153 & 0.2216 & 0.0409 & 0.1527 & 0.2848 & 0.0001 & 0.3686\\
        GPT2-M & ICL-1 & 0.0157$_{\pm0.013}$ & 0.2580$_{\pm0.024}$ & 0.0362$_{\pm0.096}$ & 0.1388$_{\pm0.102}$ & 0.2871$_{\pm0.107}$ & 0.0222$_{\pm0.076}$ & 0.3704$_{\pm0.101}$\\
        GPT2-M & ICL-3 & 0.0552$_{\pm0.010}$ & 0.3610$_{\pm0.019}$ & 0.1248$_{\pm0.045}$ & 0.2680$_{\pm0.089}$ & \underline{0.4079}$_{\pm0.133}$ & 0.1366$_{\pm0.125}$ & 0.5340$_{\pm0.087}$ \\
        GPT2-M & NewModel & \underline{0.1260}$_{\pm0.007}$ & \underline{0.4106}$_{\pm0.016}$ & \underline{0.1683}$_{\pm0.013}$ & \underline{0.3740}$_{\pm0.009}$ & 0.3600$_{\pm0.024}$ & \underline{0.4570}$_{\pm0.058}$ & \textbf{0.7113}$_{\pm0.025}$\\
        GPT2-M & WeightedComb & 0.0567$_{\pm0.005}$ & 0.3918$_{\pm0.010}$ & 0.1353$_{\pm0.005}$ & 0.3280$_{\pm0.010}$ & 0.2929$_{\pm0.016}$ & 0.2623$_{\pm0.042}$ & 0.4353$_{\pm0.028}$\\
        GPT2-M & TempNet & 0.1248$_{\pm0.015}$ & 0.4048$_{\pm0.014}$ & 0.1528$_{\pm0.015}$ & 0.3526$_{\pm0.014}$ & 0.3883$_{\pm0.017}$ & 0.4492$_{\pm0.023}$ & 0.4037$_{\pm0.058}$\\
        GPT2-M & \textbf{Plugin (Ours)} & \textbf{0.1366}$_{\pm0.003}$ & \textbf{0.4533}$_{\pm0.007}$ & \textbf{0.1878}$_{\pm0.003}$ & \textbf{0.3934}$_{\pm0.006}$ & \textbf{0.4095}$_{\pm0.011}$ & \textbf{0.5572}$_{\pm0.022}$ & \underline{0.6395}$_{\pm0.061}$\\
        \hline
        GPT2-XL & Zeroshot & 0.0317 & 0.2992 & 0.1321 & 0.2417 & 0.1969 & 0.0491 & 0.1826\\
        GPT2-XL & ICL-1 & 0.0508$_{\pm0.023}$ & 0.3201$_{\pm0.035}$ & 0.1526$_{\pm0.097}$ & 0.2562$_{\pm0.103}$ & 0.2591$_{\pm0.089}$ & 0.1336$_{\pm0.092}$ & 0.2235$_{\pm0.069}$\\
        GPT2-XL & ICL-3 & 0.0744$_{\pm0.011}$ & 0.3383$_{\pm0.014}$ & 0.1682$_{\pm0.030}$ & 0.2651$_{\pm0.072}$ & 0.3071$_{\pm0.073}$ & 0.1675$_{\pm0.066}$ & 0.2550$_{\pm0.047}$\\ 
        GPT2-XL & NewModel & \underline{0.1260}$_{\pm0.007}$ & \underline{0.4106}$_{\pm0.016}$ & \underline{0.1683}$_{\pm0.013}$ & \underline{0.3740}$_{\pm0.009}$ & \underline{0.3600}$_{\pm0.024}$ & \underline{0.4570}$_{\pm0.058}$ & \textbf{0.7113}$_{\pm0.025}$\\
        GPT2-XL & WeightedComb & 0.0614$_{\pm0.020}$ & 0.3364$_{\pm0.024}$ & 0.1347$_{\pm0.009}$ & 0.2969$_{\pm0.019}$ & 0.2921$_{\pm0.018}$ & 0.2763$_{\pm0.010}$ & 0.3352$_{\pm0.051}$ \\
        GPT2-XL & TempNet & 0.1154$_{\pm0.020}$ & 0.3937$_{\pm0.026}$ & 0.1482$_{\pm0.017}$ & 0.3625$_{\pm0.013}$ & 0.3389$_{\pm0.019}$ & 0.4376$_{\pm0.018}$ & 0.5927$_{\pm0.047}$\\
        GPT2-XL & \textbf{Plugin (Ours)} & \textbf{0.1791}$_{\pm0.014}$ & \textbf{0.4932}$_{\pm0.007}$ & \textbf{0.2288}$_{\pm0.004}$ & \textbf{0.4347}$_{\pm0.007}$ & \textbf{0.4702}$_{\pm0.006}$ & \textbf{0.7283}$_{\pm0.012}$ & \underline{0.6554}$_{\pm0.038}$\\
        
        \hline
        LLaMA-3.1-8B & Zeroshot & 0.0643 & 0.2776 & 0.1181 & 0.2488 & 0.3857 & 0.3155 & 0.3347\\
        LLaMA-3.1-8B & ICL-1 & 0.0615$_{\pm0.027}$ & 0.2697$_{\pm0.033}$ & 0.1158$_{\pm0.062}$ & 0.2469$_{\pm0.087}$ & 0.3822$_{\pm0.069}$ & 0.3005$_{\pm0.072}$ & 0.3059$_{\pm0.094}$\\
        LLaMA-3.1-8B & ICL-3 & 0.0635$_{\pm0.016}$ & 0.2748$_{\pm0.024}$ & 0.1225$_{\pm0.018}$ & 0.3120$_{\pm0.047}$ & \underline{0.4012}$_{\pm0.029}$ & 0.3250$_{\pm0.022}$ & 0.3794$_{\pm0.034}$\\
        LLaMA-3.1-8B & NewModel & 0.0753$_{\pm0.004}$ & \underline{0.3716}$_{\pm0.005}$ & 0.1122$_{\pm0.003}$ & \underline{0.3404}$_{\pm0.004}$ & 0.2665$_{\pm0.006}$ & 0.1919$_{\pm0.015}$ & \underline{0.6900}$_{\pm0.046}$\\
        LLaMA-3.1-8B & WeightedComb & \underline{0.1789}$_{\pm0.005}$ & 0.3485$_{\pm0.012}$ & \underline{0.1797}$_{\pm0.008}$ & 0.2981$_{\pm0.012}$ & 0.3637$_{\pm0.011}$ & \underline{0.5503}$_{\pm0.046}$ & 0.5450$_{\pm0.020}$\\
        LLaMA-3.1-8B & TempNet & 0.1524$_{\pm0.008}$ & 0.3372$_{\pm0.015}$ & 0.1524$_{\pm0.010}$ & 0.3298$_{\pm0.017}$ & 0.3676$_{\pm0.015}$ & 0.3986$_{\pm0.033}$ & 0.5286$_{\pm0.023}$\\
        LLaMA-3.1-8B & \textbf{Plugin (Ours)} & \textbf{0.2665}$_{\pm0.010}$ & \textbf{0.5800}$_{\pm0.002}$ & \textbf{0.3139}$_{\pm0.005}$ & \textbf{0.5037}$_{\pm0.004}$ & \textbf{0.5829}$_{\pm0.003}$ & \textbf{1.0876}$_{\pm0.020}$ & \textbf{0.7031}$_{\pm0.007}$\\
        \hline
    \end{tabular}
    }
    \vspace{-7mm}
\end{table*}

\begin{table*}[t]
    \centering
    \caption{Performance comparison on Adidas dataset. We show mean and standard deviation of the metrics over five seeds.}
    \label{tab:adidas_final_results}
    \resizebox{1.0\textwidth}{!}{ 
    \begin{tabular}{|llccccccc|}
    \hline
        Model & Method & BLEU & Rouge-1 & Rouge-2 & Rouge-L & METEOR & CIDEr & NIST\\ 
        \hline
        GPT2-M & Zeroshot & 0.0046 & 0.2488 & 0.0189 & 0.1353 & 0.1653 & 0.0312 & 0.6860 \\
        GPT2-M & ICL-1 & 0.0088$_{\pm0.054}$ & 0.2667$_{\pm0.047}$ & 0.0247$_{\pm0.66}$ & 0.1358$_{\pm0.041}$ & 0.1762$_{\pm0.028}$ & 0.0464$_{\pm0.089}$ & 0.6793$_{\pm0.078}$\\
        GPT2-M & ICL-3 & 0.0121$_{\pm0.047}$ & 0.2693$_{\pm0.028}$ & 0.0262$_{\pm0.054}$ & 0.1470$_{\pm0.020}$ & 0.1806$_{\pm0.030}$ & 0.0415$_{\pm0.104}$ & 0.7037$_{\pm0.081}$\\
        GPT2-M & NewModel & \underline{0.0515}$_{\pm0.016}$ & \underline{0.2690}$_{\pm0.014}$ & \textbf{0.0637}$_{\pm0.014}$ & \textbf{0.1697}$_{\pm0.008}$ & 0.1918$_{\pm0.013}$ & 0.0550$_{\pm0.086}$ & \underline{0.6682}$_{\pm0.047}$\\
        GPT2-M & WeightedComb & \textbf{0.0565}$_{\pm0.014}$ & 0.2630$_{\pm0.028}$ & 0.0495$_{\pm0.018}$ & 0.1565$_{\pm0.015}$ & \underline{0.1938}$_{\pm0.019}$ & \underline{0.0585}$_{\pm0.088}$ & 0.6456$_{\pm0.156}$\\
        GPT2-M & TempNet & 0.0442$_{\pm0.017}$ & 0.2672$_{\pm0.019}$ & 0.0482$_{\pm0.022}$ & 0.1582$_{\pm0.020}$ & 0.1902$_{\pm0.017}$ & 0.0525$_{\pm0.031}$ & 0.6533$_{\pm0.098}$\\
        GPT2-M & \textbf{Plugin (Ours)} & 0.0486$_{\pm0.006}$ & \textbf{0.2766}$_{\pm0.002}$ & \underline{0.0515}$_{\pm0.007}$ & \underline{0.1684}$_{\pm0.005}$ & \textbf{0.1994}$_{\pm0.004}$ & \textbf{0.0626}$_{\pm0.017}$ & \textbf{0.7919}$_{\pm0.024}$\\
        \hline
        GPT2-XL & Zeroshot & 0.0075 & 0.2309 & 0.0278 & 0.1438 & 0.1487 & 0.0184 & 0.4956\\
        GPT2-XL & ICL-1 & 0.0109$_{\pm0.039}$ & 0.2567$_{\pm0.082}$ & 0.0265$_{\pm0.054}$ & 0.1519$_{\pm0.038}$ & 0.1649$_{\pm0.052}$ & 0.0318$_{\pm0.171}$ & 0.5133$_{\pm0.162}$\\
        GPT2-XL & ICL-3 & 0.0295$_{\pm0.037}$ & 0.2509$_{\pm0.071}$ & 0.0395$_{\pm0.043}$ & 0.1536$_{\pm0.039}$ & 0.1658$_{\pm0.041}$ & 0.0321$_{\pm0.109}$ & 0.5176$_{\pm0.116}$\\ 
        GPT2-XL & NewModel & 0.0515$_{\pm0.016}$ & \underline{0.2690}$_{\pm0.014}$ & 0.0637$_{\pm0.014}$ & \underline{0.1697}$_{\pm0.008}$ & \underline{0.1918}$_{\pm0.013}$ & \underline{0.0550}$_{\pm0.086}$ & \textbf{0.6682}$_{\pm0.047}$\\
        GPT2-XL & WeightedComb & \underline{0.0567}$_{\pm0.016}$ & 0.2210$_{\pm0.027}$ & \underline{0.0714}$_{\pm0.015}$ & 0.1550$_{\pm0.024}$ & 0.1674$_{\pm0.017}$ & 0.0183$_{\pm0.117}$ & 0.4105$_{\pm0.109}$\\
        GPT2-XL & TempNet & 0.0539$_{\pm0.018}$ & 0.2598$_{\pm0.026}$ & 0.0686$_{\pm0.014}$ & 0.1562$_{\pm0.019}$ & 0.1863$_{\pm0.029}$ & 0.0462$_{\pm0.120}$ & 0.5263$_{\pm0.117}$\\
        GPT2-XL & \textbf{Plugin (Ours)} & \textbf{0.0600}$_{\pm0.017}$ & \textbf{0.2710}$_{\pm0.025}$ & \textbf{0.0722}$_{\pm0.018}$ & \textbf{0.1725}$_{\pm0.017}$ & \textbf{0.1995}$_{\pm0.018}$ & \textbf{0.1195}$_{\pm0.138}$ & \underline{0.6375}$_{\pm0.120}$
\\
        \hline
        LLaMA-3.1-8B & Zeroshot & 0.0120 & 0.2470 & 0.0318 & 0.1493 & 0.1526 & 0.0424 & 0.5285\\
        LLaMA-3.1-8B & ICL-1 & 0.0220$_{\pm0.044}$ & 0.2472$_{\pm0.072}$ & 0.0405$_{\pm0.068}$ & 0.1434$_{\pm0.057}$ & 0.1686$_{\pm0.041}$ & 0.0555$_{\pm0.133}$ & 0.5078$_{\pm0.142}$\\
        LLaMA-3.1-8B & ICL-3 & 0.0177$_{\pm0.041}$ & 0.2385$_{\pm0.065}$ & 0.0364$_{\pm0.071}$ & 0.1408$_{\pm0.030}$ & 0.1712$_{\pm0.029}$ & 0.0587$_{\pm0.102}$ & 0.5775$_{\pm0.145}$\\
        LLaMA-3.1-8B & NewModel & \underline{0.0506}$_{\pm0.011}$ & \underline{0.2700}$_{\pm0.011}$ & 0.0634$_{\pm0.006}$ & \underline{0.1749}$_{\pm0.006}$ & \textbf{0.1995}$_{\pm0.009}$ & 0.0575$_{\pm0.051}$ & \textbf{0.6570}$_{\pm0.072}$\\
        LLaMA-3.1-8B & WeightedComb & 0.0357$_{\pm0.017}$ & 0.2583$_{\pm0.014}$ & \underline{0.0661}$_{\pm0.015}$ & 0.1560$_{\pm0.011}$ & 0.1706$_{\pm0.016}$ & \underline{0.0745}$_{\pm0.086}$ & 0.5927$_{\pm0.077}$\\
        LLaMA-3.1-8B & TempNet & 0.0472$_{\pm0.016}$ & 0.2647$_{\pm0.022}$ & 0.0625$_{\pm0.012}$ & 0.1625$_{\pm0.020}$ & 0.1857$_{\pm0.013}$ & 0.0586$_{\pm0.103}$ & 0.5926$_{\pm0.137}$\\
        LLaMA-3.1-8B & \textbf{Plugin (Ours)} & \textbf{0.0611}$_{\pm0.018}$ & \textbf{0.2714}$_{\pm0.029}$ & \textbf{0.0742}$_{\pm0.020}$ & \textbf{0.1759}$_{\pm0.019}$ & \underline{0.1990}$_{\pm0.020}$ & \textbf{0.1293}$_{\pm0.152}$ & \underline{0.6361}$_{\pm0.134}$\\
        \hline
    \end{tabular}
    }
    \vspace{-6mm}
\end{table*}

\vspace{-1mm}
\section{Experiments}
\label{sec:experiments}
\vspace{-1mm}

We divide this section into four parts. 
\Cref{ssec:exp_textgen} evaluates \textit{Plugin} on four text generation datasets across three black-box language models. Since the \textit{Plugin} model is trained on top of black-box models, we refer to black-box models interchangeably as \emph{base models}. \Cref{ssec:wrapper} discusses how \textit{Plugin} can be applied on top of any prompt-tuning method as a wrapper, when logits are accessible. \Cref{ssec:ablation} presents ablation studies analyzing the impact of black-box model quality, \textit{Plugin}'s complexity, and architecture choices.
Section~\ref{ssec:qualitative} shows qualitative analysis and case studies.


We evaluate \textit{Plugin} on four text generation benchmarks: (a) E2E NLG~\citep{duvsek2020evaluating}, (b) Web NLG~\citep{gardent2017creating}, (c) CommonGen~\citep{lin2020commongen}, and (d) the Adidas product description dataset~\citep{adidasdataset}. For the first three datasets, we use the train-validation-test splits from the Transformers library~\citep{wolf2020transformers}. To introduce distribution shifts, we filter Web NLG's training data to include only \emph{infrastructure} descriptions, while validation and test sets retain \emph{person} descriptions. Similarly, CommonGen’s training set is restricted to samples having \textit{man}, while validation and test sets remain unchanged. Details of this setup are in \Cref{ssec:qualitative}. The Adidas dataset is split into validation and test sets. Data statistics are provided in \Cref{tab:dataset_statistics}, Appendix~\ref{sec:data_statistics}.

\subsection{Text Generation Performance Comparison}
\label{ssec:exp_textgen}

We evaluate \textit{Plugin} on the text generation task using only the validation and test splits of all four datasets, reserving the train split for ablation studies (\Cref{ssec:ablation}). \textit{Plugin} and baseline models are trained on the small validation set, with performance measured on the test set. Additionally, we allocate 40\% of the validation data as \textit{hyper-validation} for cross-validation of hyperparameters.

Performance is reported using seven standard natural language generation metrics: (a) BLEU~\citep{papineni2002bleu}, (b) ROUGE-1~\citep{lin2004rouge}, (c) ROUGE-2~\citep{lin2004rouge}, (d) ROUGE-L~\citep{lin2004automatic}, (e) METEOR~\citep{banerjee2005meteor}, (f) CIDEr~\citep{vedantam2015cider}, and (g) NIST~\citep{doddington2002automatic}. All experiments are repeated over five random seeds, and we report the mean and standard deviation for each metric.

We compare \textit{Plugin} with the following baselines:  
(a) \textbf{Zeroshot}: The black-box model directly performs text generation without additional adaptation.  
(b) \textbf{ICL-1}~\citep{long2023adapt}: One randomly selected validation sample is used as an in-context example.  
(c) \textbf{ICL-3}~\citep{long2023adapt}: Three randomly selected validation samples are used as in-context examples.  
(d) \textbf{NewModel}: A new language model is trained using the validation data.  
(e) \textbf{WeightedComb}~\citep{liu2021dexperts}: A new model is trained alongside the black-box model, with token probabilities computed as $\alpha \nbm + (1-\alpha)\bbm$, where $\nbm$ represents the probabilities from the new model and $\alpha$ is cross-validated in $\{0.25, 0.50, 0.75\}$.
\new{(f) \textbf{TempNet}~\citep{qiu2024cool}, a recent logit-scaling approach that learns a global temperature per input and uniformly scales logits during generation.} 
Since the black-box model weights are inaccessible, fine-tuning-based approaches are not applicable in our setting.
\new{Nonetheless, we include a comparison with LoRA in Appendix~\ref{ssec:lora_comparison} for completeness. 
This highlights \textit{Plugin}'s competitiveness despite operating under stricter access constraints than required for PEFT.}

All methods use the same prompts where applicable (Appendix~\ref{ssec:prompts_app}) and employ greedy decoding. The base (black-box) models used are GPT2-M~\citep{radford2019language}, GPT2-XL~\citep{radford2019language}, and LLaMA-3.1-8B~\citep{dubey2024llama}. \textit{NewModel}, \textit{WeightedComb}, and the reweighting model in \textit{Plugin} share the same architecture. For GPT-based models, these use a Transformer encoder with one hidden layer and default configurations. For LLaMA-based models, the architecture consists of a Transformer encoder with one hidden layer, 256 hidden size, 1024 intermediate size, and one attention head. Learning rate and weight decay are cross-validated over $\{1e-5, 5e-5, 1e-4, 5e-4, 1e-3, 5e-3\}$ and $\{0.01, 0.1, 1, 10\}$, respectively. Models are trained using AdamW with warmup followed by linear decay, and early stopping is applied if the \textit{hyper-validation} loss does not decrease for five consecutive epochs. 

\vspace{-1mm}
As shown in Tables~\ref{tab:e2e_final_results}–\ref{tab:adidas_final_results} (the best is bold, the second best is underlined), \textit{Plugin} outperforms baselines across nearly all datasets, black-box models, and evaluation metrics.  \textit{NewModel} occasionally achieves higher NIST scores due to increased repetition of less-frequent input tokens, but this comes at the cost of coherence, as reflected by other metrics. \textit{WeightedComb} does not perform well, indicating one combination for all tokens is not a good modeling choice. 
\new{\textit{TempNet}, which learns a single temperature per input and uniformly scales logits during generation, also underperforms.}
In contrast, \textit{Plugin} reweights logits at each timestep, enabling finer, context-sensitive adjustments. 

\begin{figure}[t]
    \centering
    \includegraphics[width=\linewidth]{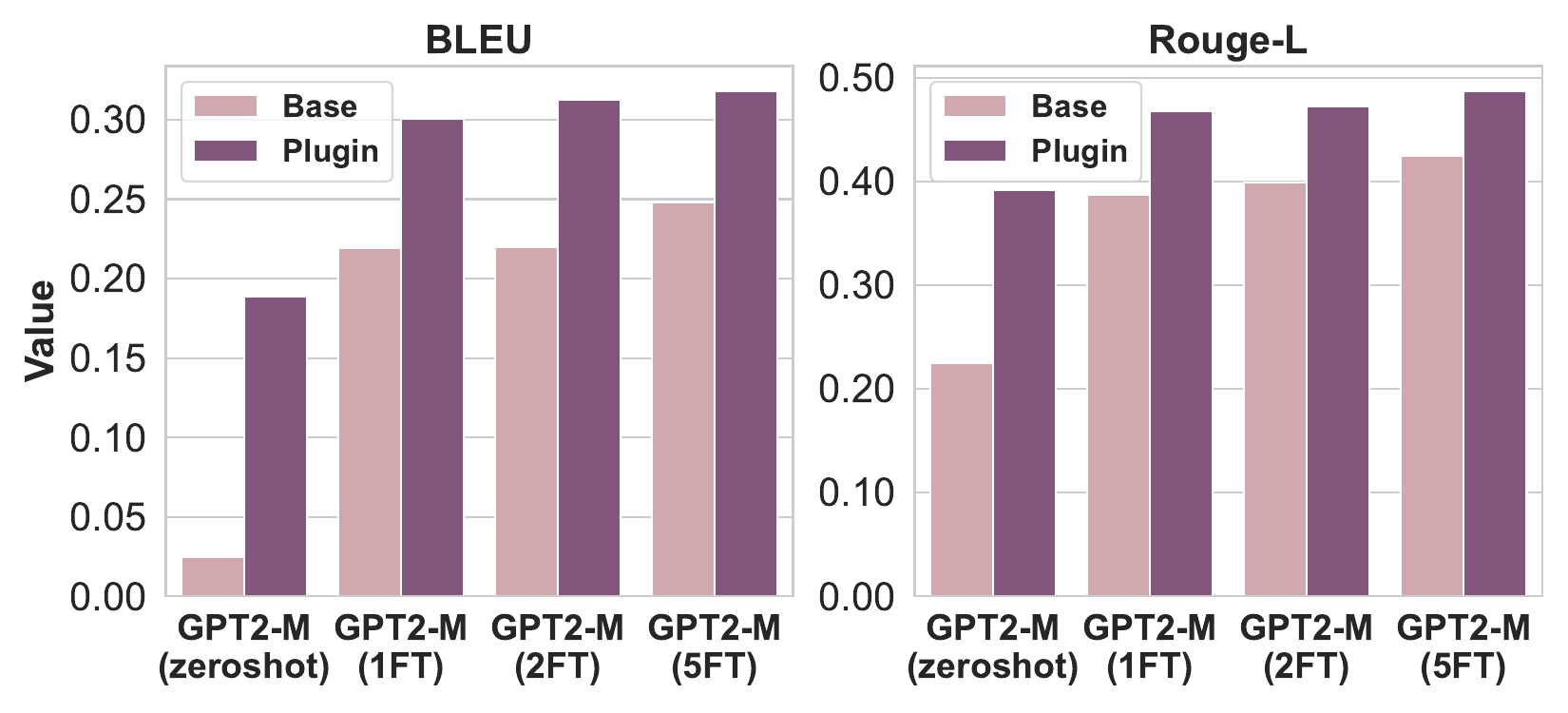}
    \vspace{-8mm}
    \caption{\textit{Plugin} with increasingly fine-tuned GPT2-M models on the E2E NLG dataset. Results demonstrate that as the quality of the base model improves, the performance of the \textit{Plugin} improves.}
    \label{fig:plugin_effect}
    \vspace{-4mm}
\end{figure}

\begin{figure}[t]
    \centering
    \includegraphics[width=1.0\linewidth]{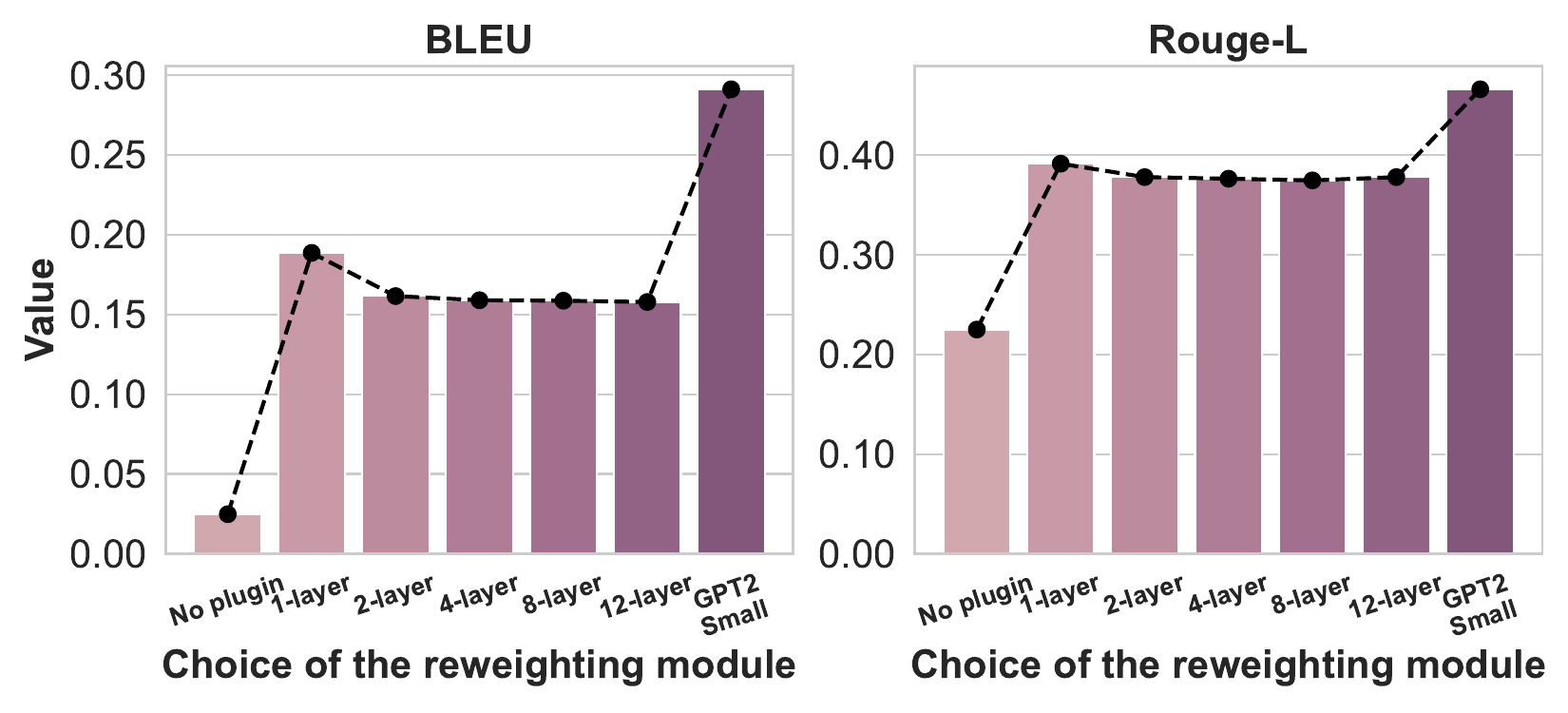}
    \vspace{-8mm}
    \caption{Performance of GPT2-M with varying reweighting model complexities on E2E NLG (BLEU, ROUGE-L). A single-layer reweighting model yields significant gains, while additional layers degrade performance due to overfitting. Initializing with GPT2-Small as the reweighting model improves performance, demonstrating the benefits of leveraging small pretrained models.}
    \vspace{-4mm}
    \label{fig:plugin_complexity}
\end{figure}

We note that the absolute numbers may not appear competitive with state-of-the-art results, because (a) we restrict to greedy decoding (\Cref{ssec:inference}), and (b) Web NLG and CommonGen use distribution-shifted subsets.

We also conduct a human evaluation on 100 Adidas dataset samples, where three subjects compare outputs from \textit{Plugin} and \textit{ICL-3} using LLaMA-3.1 as the base model. Evaluators select the prediction closest to the ground truth, with \textit{Plugin} preferred in 81\% of cases. Details are in Appendix~\ref{appendix:adidas_case_study}.
\vspace{-2mm}

\begin{figure*}[t]
    \centering
    \includegraphics[width=1.0\linewidth]{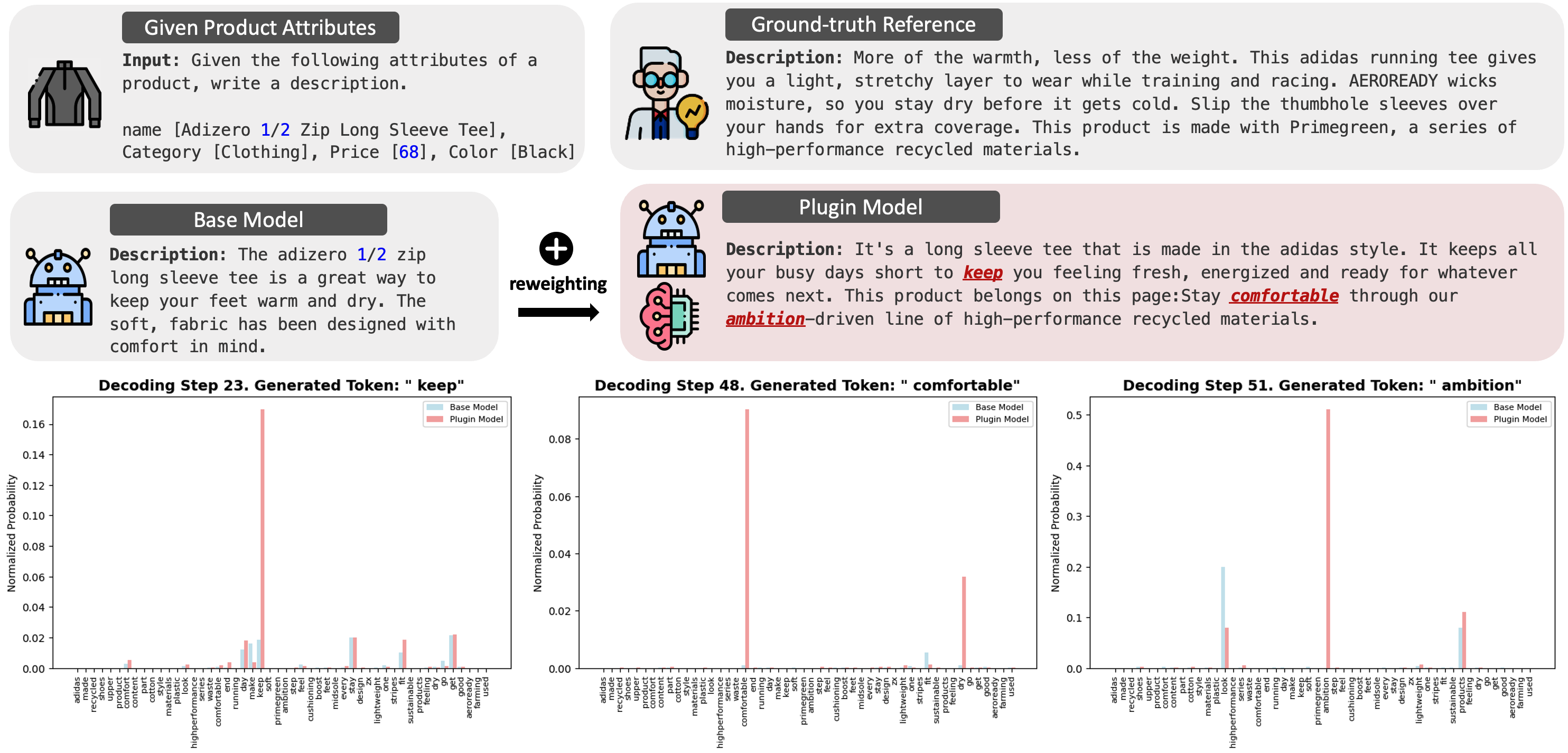}
    \vspace{-8mm}
    \caption{Comparison of the adaptation ability between the base model and \textit{Plugin} on Adidas dataset. \textit{Plugin}, enhanced with a reweighting model, generates text that better aligns with the ``\textit{Adidas domain}''. The bottom row illustrates token probabilities for key Adidas-related words at different decoding steps, showing how the reweighting model influences token selection.}
    \label{fig:adidas_decoding}
    \vspace{-4mm}
\end{figure*}

\subsection{Plugin as a Wrapper}
\label{ssec:wrapper}

{
\setlength{\tabcolsep}{2pt}
\begin{table}[t]
\scriptsize
\centering
\caption{Performance comparison of BDPL and BDPL + Plugin.}
\begin{tabular}{|llcHHcccc|}
\hline
Dataset & Method & BLEU & Rouge-1 & Rouge-2 & Rouge-L & METEOR & CIDEr & NIST \\
\hline
\multirow{2}{*}{E2E NLG} 
& BDPL & 0.2287 & 0.5024 & 0.2846 & 0.3922 & 0.4628 & 0.4216 & 0.8625 \\
& \textbf{BDPL + Plugin} & \textbf{0.4527} & \textbf{0.7126} & \textbf{0.5126} & \textbf{0.6027} & \textbf{0.6214} & \textbf{0.7002} & \textbf{2.0817} \\
\hline
\multirow{2}{*}{WEB NLG} 
& BDPL & 0.1024 & 0.4016 & 0.2243 & 0.3017 & 0.3527 & 0.4321 & 0.2631 \\
& \textbf{BDPL + Plugin} & \textbf{0.2137} & \textbf{0.6026} & \textbf{0.3021} & \textbf{0.5928} & \textbf{0.5766} & \textbf{1.0826} & \textbf{0.6142} \\
\hline
\multirow{2}{*}{CommonGen} 
& BDPL & 0.1023 & 0.4012 & 0.2012 & 0.2936 & 0.3362 & 0.2517 & 0.4226 \\
& \textbf{BDPL + Plugin} & \textbf{0.2614} & \textbf{0.6022} & \textbf{0.3027} & \textbf{0.5241} & \textbf{0.5016} & \textbf{0.8251} & \textbf{0.9261} \\
\hline
\multirow{2}{*}{Adidas} 
& BDPL & 0.0417 & 0.2615 & 0.0671 & 0.1710 & 0.1826 & 0.0861 & 0.6034 \\
& \textbf{BDPL + Plugin} & \textbf{0.0623} & \textbf{0.2792} & \textbf{0.0773} & \textbf{0.1759} & \textbf{0.2148} & \textbf{0.1325} & \textbf{0.7024} \\
\hline
\end{tabular}
\label{tab:bdpl_plugin}
\vspace{-6mm}
\end{table}
}

If logit access is available, \emph{Plugin} can be applied on top of any prompt-based method using its best-found prompt. For example, our \textbf{Zeroshot} prompt is reused across methods. We also apply \emph{Plugin} to the Black-box Discrete Prompt Learning (BDPL) approach from \citet{diao2022black}, following their recommended 75 API call budget. Table~\ref{tab:bdpl_plugin} shows results on all datasets with GPT2-XL as the base model. \emph{Plugin} outperforms BDPL (see Tables~\ref{tab:e2e_final_results}--\ref{tab:adidas_final_results}), and their combination yields further gains, underscoring the utility of logit-level access in strengthening prompt-based methods.

\subsection{Ablation Study}
\label{ssec:ablation}
\vspace{-1mm}
 We now show ablation studies that reflect various aspects of the \textit{Plugin} model. We display the results using GPT2-M as base model on the E2E NLG dataset. The observation is similar on other base models and datasets (Appendix~\ref{appendix:more_ablation}).
\vspace{-3mm}

\paragraph{Impact of Base Model Quality.}  
We fine-tune GPT2-M  for varying epochs, denoted as 1FT (one epoch), 2FT (two epochs), and 5FT (five epochs), and train a \textit{Plugin} model for each. Figure~\ref{fig:plugin_effect} shows that as the base model's task-specific quality improves, the \textit{Plugin's} performance improves.
\vspace{-3mm}

\paragraph{Complexity of the Reweighting Model in \textit{Plugin}.}  
We train \textit{Plugin} models with reweighting architectures varying from 1 to 12 transformer layers while keeping other configurations unchanged. Additionally, we train a variant where the reweighting model is initialized with GPT2-Small. As shown in Figure~\ref{fig:plugin_complexity}, a single-layer reweighting model yields significant improvements over the base GPT2-M model, while additional layers (e.g., 2, 4, 8, 12) offer diminishing returns and slight performance decline due to overfitting on the small validation set of E2E NLG. This suggests that more data is required for learning complex reweighting models. Notably, initializing with a pretrained GPT2-Small substantially improves performance, underscoring the advantage of using small pretrained models for reweighting due to their inherent autoregressive properties.
\vspace{-2mm}



\subsection{Qualitative Analysis and Case Study}
\label{ssec:qualitative}
\vspace{-1mm}

\paragraph{\textit{Plugin} adapting to distribution shift.}  
We evaluate \textit{Plugin} on distribution-shifted Web NLG and CommonGen using LLaMA-3.1-8B as the base model. Web NLG training data contains only \textit{Infrastructure} concepts, while validation and test sets include \textit{Person} concepts. Similarly, CommonGen training data features \textit{man}, whereas validation and test sets contain both \textit{man} and \textit{woman}. The base model is fine-tuned on the training data, and \textit{Plugin} is trained on validation data using the fine-tuned model as the base.
\new{These settings reflect different degrees of domain shift, even  \textit{adversarial} to some extent as the training distributions induce biases (e.g., overemphasis on infrastructure or male-related concepts), and \textit{Plugin} is tasked with correcting them during inference. }

Using GPT-4o~\citep{hurst2024gpt} as an evaluator, the fine-tuned Web NLG model generates only 17.99\% \textit{Person}-related sentences, while \textit{Plugin} increases this to 71.34\%. 
On CommonGen, the fine-tuned model generates 10.37\% \textit{Woman}-related sentences, whereas \textit{Plugin} improves this to 31.92\%. These results highlight \textit{Plugin}'s ability to adapt under distribution shift and mitigate biases in the base model.

\vspace{-4mm}
\paragraph{Case study: \textit{Plugin} adapting to domain (extreme distirbution shifts).}  
We examine token probabilities during inference for LLaMA-3.1-8B and \textit{Plugin} to assess domain adaptation in the Adidas dataset, which features product-centric language and a brand-specific tone that diverge significantly from the general pretraining distribution of the black-box LLMs. This experimental setup can also be viewed as \textit{extreme distribution shift}. Removing stopwords, we extract the top-50 most frequent words, defining the ``\textit{Adidas domain}''.
Figure~\ref{fig:adidas_decoding} illustrates this adaptation: the first row shows product attributes and ground-truth references; the second row compares outputs from the base model (left) and \textit{Plugin} (right); the third row visualizes model probabilities for ``\textit{Adidas domain}'' words at three decoding steps.

As seen in Figure~\ref{fig:adidas_decoding}, \textit{Plugin} dynamically reweights probabilities to align with domain-specific language. At step 23, ``keep'' is significantly upweighted. At step 48, ``comfortable'' and ``dry'' gain prominence over ``fit,'' which the base model favors. 
At step 59, ``recycled'' is preferred by \textit{Plugin}, aligning with the ground truth, while the base model favors ``running'' and ``products''. This demonstrates that \textit{Plugin} effectively steers generation toward domain-specific terminology, whereas the base model, trained on broad corpora, lacks inherent domain preference.

\new{Unlike methods that prune or suppress tokens, \textit{Plugin} softly reweights token probabilities without eliminating any vocabulary candidates. This preserves full coverage while amplifying domain-specific terms. To quantify this, we measure the total occurrences of the top-50 ``\textit{Adidas domain}'' words in generated outputs: \textit{Plugin} includes 25.6\% of these terms compared to 13.8\% in the base model, indicating substantially improved alignment with domain language.
}

\vspace{-2mm}
\section{Conclusion}
\vspace{-2mm}
We propose \textit{Plugin}, a token-level probability reweighting framework that adapts black-box LLMs using only logits and small task-specific data. Framing next-token prediction as a label noise correction problem, we demonstrate both theoretical guarantees and empirical effectiveness across multiple datasets and models. Our findings highlight the potential of logit-based adaptation and advocate for broader access to token logits in closed-source LLMs.

\newpage
\section*{Acknowledgements}
HW acknowledges support by Fonds de recherche du Québec – Nature et technologies (FRQNT) and Borealis AI. 
SK acknowledges support by NSF 2046795 and 2205329, IES R305C240046, the
MacArthur Foundation, Stanford HAI, OpenAI, and Google.
\vspace{-2mm}
\vspace{-1mm}
\section*{Impact Statement}
This work introduces a powerful middle ground between fully black-box APIs and fully white-box access to large language models (LLMs), addressing a critical constraint faced by developers: the inability to adapt models when weights and architecture are inaccessible. By leveraging token-level logits—without requiring access to model weights or architecture—our approach enables meaningful adaptation of closed-source LLMs for domain-specific tasks. This has far-reaching implications for both research and industry: it empowers developers to customize models within privacy-preserving, IP-sensitive environments while ensuring greater control, transparency, and safety. Our findings advocate for broader logit access as a scalable, secure, and effective interface—bridging the gap between usability and protection of proprietary models—and open new possibilities for equitable, context-aware language generation in real-world applications.

While \textit{Plugin} effectively adapts black-box LLMs, it has some limitations, too. Since it only reweights token probabilities without modifying internal representations or embeddings, it may struggle with tasks requiring deep structural adaptations, such as executing complex reasoning. Further research on this aspect is needed. Additionally, although \textit{Plugin} avoids full fine-tuning, training a separate reweighting model introduces computational overhead compared to prompt tuning or in-context learning, with efficiency depending on the complexity of the reweighting model and the availability of task-specific data.
\vspace{-2mm}
\balance
\bibliography{reference}
\bibliographystyle{icml2025}

\newpage
\appendix
\onecolumn



\section{Algorithm Details}
\label{sec:alg_app}
We provide summarized form of the training and inference algorithm for the \textit{Plugin} model below.

\begin{algorithm}[H]
\caption{Training and Inference for the Plugin Model}
\label{alg:plugin_model}
\textbf{Input:} Black-box model $B$, reweighting model $R$, clean training data $\mathcal{D}$, vocabulary $V$ \\
\textbf{Output:} Plugin model predictions $\bm{x}_{1:T}$ for a given sequence
\begin{algorithmic}[1]
\STATE \textbf{Training Phase:}
\FOR{each sequence $s \in \mathcal{D}$}
    \STATE Compute token probabilities $\{\bm{b}_1, \bm{b}_2, \dots, \bm{b}_m\}$ using $B$.
    \STATE Compute token probabilities $\{\bm{r}_1, \bm{r}_2, \dots, \bm{r}_m\}$ using $R$.
    \STATE Combine probabilities: ${\bm{p}}_i = \frac{\bm{b}_i \odot \bm{r}_i}{\|\bm{b}_i \odot \bm{r}_i\|_1}$ for $i \in [m]$.
    \STATE Compute sequence loss $\ell_s = -\frac{1}{m} \sum_{i=1}^{m} \sum_{j=1}^{|V|} \log({\bm{p}}_i) \odot \bm{e}_j$.
    \STATE Update parameters of $R$ using back-propagation. Freeze $B$.
\ENDFOR
\STATE \textbf{Inference Phase:}
\STATE Initialize sequence $\bm{x}_{1:T} = \{\}$.
\FOR{each token position $t = 1$ to $T$}
    \STATE Compute token probabilities $\bm{b}_t$ using $B$.
    \STATE Compute token probabilities $\bm{r}_t$ using $R$.
    \STATE Combine probabilities: ${\bm{p}}_t = \frac{\bm{b}_t \odot \bm{r}_t}{\|\bm{b}_t \odot \bm{r}_t\|_1}$.
    \STATE Predict token: $\bm{x}_t = \argmax_{V} ({\bm{p}}_t)$.
    \STATE Append $\bm{x}_t$ to $\bm{x}_{1:T}$.
\ENDFOR
\STATE \textbf{Return:} $\bm{x}_{1:T}$
\end{algorithmic}
\end{algorithm}

\section{Proof of Main Convergence Theorem}
\label{app:theory}

We define the following assumption on the smoothness and regularity of the loss function. 
\begin{assumption}
\label{assm:thm}
We assume the following assumptions hold with probability $1$:
\begin{enumerate}
    \item \textbf{(Convexity of $\ell_{s}$):} The loss function $\ell_{s}$ is convex for all time $s\in[t]$.
    \item \textbf{(Smoothness of $\ell_{s}$):} The $\ell_{s}$ is smooth such that the first, second, and third derivatives exist at all interior points in $\bTheta$.
    \item \textbf{(Regularity Conditions):} \begin{enumerate}
        \item $\bTheta$ is compact and $\ell_{s}(\btheta)$ is bounded for all $\btheta\in\bTheta$ and for all $s\in[t]$.
        \item $\btheta_*$ is an interior point in $\bTheta$.
        \item $\nabla^2 \ell_{s}(\btheta_*)$ is positive definite, for all $s\in[t]$ .
        \item There exists a neighborhood $\mathcal{B}$ of $\btheta_*$ and a constant $C_{1}$, such that $\nabla^{2} \ell_{s}(\btheta)$ is $C_{1}$ -Lipschitz. Hence, we have that $\left\|\nabla^{2} \ell_{s}(\btheta)-\nabla^{2} \ell_{s}\left(\btheta^{\prime}\right)\right\|_{*} \leq C_{1}\left\|\btheta-\btheta^{\prime}\right\|_{\nabla^{2} \Lcal_s\left(\btheta_{*}\right)}$, for $\btheta, \btheta^{\prime}$ in this neighborhood.
    \end{enumerate}
    \item \textbf{(Concentration at $\btheta_{*}$):} We further assume that $\left\|\nabla \ell_{s}\left(\btheta_{*}\right)\right\|_{\left(\nabla^{2} \Lcal_s\left(\btheta_{*}\right)\right)^{-1}} \leq C_{2}$ hold with probability one.
\end{enumerate}
\end{assumption}

\begin{lemma}\textbf{(Proposition 2 of \citep{hsu2012tail})}
\label{lemma:vector-martingale}
Let $\mathbf{u}_{1}, \ldots, \mathbf{u}_{n}$ be a martingale difference vector sequence (i.e., $\mathbb{E}\left[\mathbf{u}_{i} \mid \mathbf{u}_{1}, \ldots, \mathbf{u}_{i-1}\right]=$ 0 for all $i=1, \ldots, n$ ) such that
$$
\sum_{i=1}^{n} \mathbb{E}\left[\left\|\mathbf{u}_{i}\right\|^{2} \mid \mathbf{u}_{1}, \ldots, \mathbf{u}_{i-1}\right] \leq v \quad \text { and } \quad\left\|\mathbf{u}_{i}\right\| \leq b
$$
for all $i=1, \ldots, n,$ almost surely. For all $t>0$
$$
\operatorname{Pr}\left[\left\|\sum_{i=1}^{n} \mathbf{u}_{i}\right\|>\sqrt{v}+\sqrt{8 v t}+(4 / 3) b t\right] \leq e^{-t}
$$
\end{lemma}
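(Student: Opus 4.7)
The plan is to write $\|S_n\|$ with $S_n := \sum_{i=1}^n \mathbf{u}_i$ as $\mathbb{E}\|S_n\| + (\|S_n\| - \mathbb{E}\|S_n\|)$ and control each piece separately; the three summands $\sqrt{v}$, $\sqrt{8vt}$, and $(4/3)bt$ in the stated bound correspond respectively to the mean bound, the sub-Gaussian piece, and the sub-exponential piece of a scalar Bernstein-type inequality. For the mean, the martingale-difference property makes cross terms vanish: for $i<j$, $\mathbb{E}\langle\mathbf{u}_i,\mathbf{u}_j\rangle=\mathbb{E}\langle\mathbf{u}_i,\mathbb{E}[\mathbf{u}_j\mid\mathbf{u}_1,\ldots,\mathbf{u}_{j-1}]\rangle=0$, so
\begin{equation*}
\mathbb{E}\|S_n\|^2 \;=\; \sum_{i=1}^n \mathbb{E}\|\mathbf{u}_i\|^2 \;=\; \mathbb{E}\!\left[\sum_{i=1}^n \mathbb{E}\!\left[\|\mathbf{u}_i\|^2 \,\middle|\, \mathbf{u}_1,\ldots,\mathbf{u}_{i-1}\right]\right] \;\leq\; v,
\end{equation*}
and Jensen's inequality gives $\mathbb{E}\|S_n\|\leq\sqrt{v}$, which accounts for the first summand in the stated tail.

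Next, I would reduce to a scalar martingale. Let $\mathcal{F}_k:=\sigma(\mathbf{u}_1,\ldots,\mathbf{u}_k)$ and form the scalar Doob martingale $Z_k := \mathbb{E}[\|S_n\|\mid\mathcal{F}_k]$, so $Z_0=\mathbb{E}\|S_n\|$ and $Z_n=\|S_n\|$. Because $\|\cdot\|$ is $1$-Lipschitz, a coupling argument (replace $\mathbf{u}_k$ by an independent copy $\tilde{\mathbf{u}}_k$ drawn from its $\mathcal{F}_{k-1}$-conditional law) gives the a.s.\ increment bound $|D_k|:=|Z_k-Z_{k-1}|\leq 2\|\mathbf{u}_k\|\leq 2b$. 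The same coupling combined with the conditional Jensen inequality produces a conditional variance bound $\mathbb{E}[D_k^2\mid\mathcal{F}_{k-1}]\leq 4\,\mathbb{E}[\|\mathbf{u}_k\|^2\mid\mathcal{F}_{k-1}]$, whence the predictable quadratic variation satisfies $\sum_k\mathbb{E}[D_k^2\mid\mathcal{F}_{k-1}]\leq 4v$.

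I would then apply Freedman's one-sided Bernstein inequality for scalar martingales to $(Z_k-Z_0)$ with range parameter $R=2b$ and variance budget $V=4v$; inverting the Bernstein-type form at confidence $e^{-t}$ yields a deviation of the shape $\sqrt{8vt}+(4/3)bt$. Combining with the mean bound via $\|S_n\|=Z_n\leq Z_0+(Z_n-Z_0)$ gives $\|S_n\|\leq \sqrt{v}+\sqrt{8vt}+(4/3)bt$ with probability at least $1-e^{-t}$, which is the claim.

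The delicate point is the conditional variance bound in the Doob-reduction step: obtaining the constant exactly $4$ so that Freedman's inequality reproduces the precise coefficient $\sqrt{8vt}$ rather than a larger one. A cleaner route that avoids manual constant-tracking is to invoke Pinelis' Hilbert-space Bernstein-type inequality for vector martingale differences directly on $S_n-\mathbb{E}S_n$; that inequality packages the Doob-plus-Freedman machinery above and outputs the stated deviation term, after which the mean bound supplies the $\sqrt{v}$ summand and the triangle inequality closes the argument.
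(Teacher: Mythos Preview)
The paper does not prove this lemma: it is quoted verbatim as Proposition~2 of \citet{hsu2012tail} and used as a black box in the proof of \Cref{lemma:vector-conc}. There is therefore no ``paper's own proof'' to compare against.

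Your sketch is a faithful reconstruction of the standard route to such vector-martingale Bernstein bounds (and is essentially how Hsu--Kakade--Zhang argue): bound the mean by $\sqrt{v}$ using orthogonality of martingale differences and Jensen, then control the fluctuation $\|S_n\|-\mathbb{E}\|S_n\|$ via the Doob martingale of $\|S_n\|$ and Freedman's inequality. The one place I would tighten is the coupling step for the increment and conditional-variance bounds: since $\mathbf{u}_k$ is only a martingale difference (not independent of $\mathcal{F}_{k-1}$), the ``independent copy $\tilde{\mathbf{u}}_k$'' must be drawn from the \emph{conditional} law given $\mathcal{F}_{k-1}$, and the bound $|D_k|\le 2\|\mathbf{u}_k\|$ as written does not follow directly---what one gets is $|D_k|\le \|\mathbf{u}_k\|+\mathbb{E}[\|\tilde{\mathbf{u}}_k\|\mid\mathcal{F}_{k-1}]$, and then $\mathbb{E}[D_k^2\mid\mathcal{F}_{k-1}]\le 2\,\mathbb{E}[\|\mathbf{u}_k\|^2\mid\mathcal{F}_{k-1}]+2\,\mathbb{E}[\|\tilde{\mathbf{u}}_k\|^2\mid\mathcal{F}_{k-1}]=4\,\mathbb{E}[\|\mathbf{u}_k\|^2\mid\mathcal{F}_{k-1}]$ by conditional Jensen, which recovers your constant $4$ cleanly. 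With that fix, Freedman with variance budget $4v$ and range $2b$ yields exactly $\sqrt{8vt}+(4/3)bt$, and adding the mean bound finishes. Your alternative of citing Pinelis directly is also fine and is in the same spirit.
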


\begin{lemma}
\label{lemma:vector-conc}
  The probability that $\|\nabla \wLcal_t(\btheta_*)\|_{\left(\nabla^{2} \Lcal\left(\btheta_{*}\right)\right)^{-1}} $ crosses the threshold $\sqrt{\dfrac{c\gamma\log (dt)}{t}} > 0$ is bounded by
 \begin{align*}
     \Pb\left(\|\nabla \wLcal_t(\btheta_*)\|_{\left(\nabla^{2} \Lcal_{t}\left(\btheta_{*}\right)\right)^{-1}}\geq C_2\sqrt{\dfrac{c\gamma\log (dt)}{t}}\right) \leq \frac{1}{t^{c\gamma}}.
 \end{align*}
\end{lemma}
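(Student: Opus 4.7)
}

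My plan is to realize $\nabla \wLcal_t(\btheta_*)$ as a (preconditioned) sum of martingale differences and then apply the vector Bernstein inequality of~\Cref{lemma:vector-martingale}. Concretely, set
\begin{equation*}
\mathbf{u}_s \;:=\; \nabla \ell_s(\btheta_*) - \E\bigl[\nabla \ell_s(\btheta_*)\mid \F^{s-1}\bigr],
\qquad s=1,\dots,t,
\end{equation*}
so that $\sum_{s=1}^t \mathbf{u}_s = t\,\bigl(\nabla\wLcal_t(\btheta_*) - \nabla\Lcal_t(\btheta_*)\bigr)$. Since $\btheta_*$ is an interior minimizer of $\Lcal_t$ (Assumption~\ref{assm:thm}, parts~3(b)--(c)), the first-order condition forces $\nabla \Lcal_t(\btheta_*)=0$, so $\sum_s \mathbf{u}_s = t\,\nabla \wLcal_t(\btheta_*)$. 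To convert the Mahalanobis norm in the statement into the Euclidean norm required by \Cref{lemma:vector-martingale}, I would precondition by $M := \bigl(\nabla^2\Lcal_t(\btheta_*)\bigr)^{-1/2}$ and work with $\tilde{\mathbf{u}}_s := M\mathbf{u}_s$, noting $\|\tilde{\mathbf{u}}_s\|_2 = \|\mathbf{u}_s\|_{(\nabla^2 \Lcal_t(\btheta_*))^{-1}}$.

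The next step is to verify the hypotheses of \Cref{lemma:vector-martingale}. The sequence $\{\tilde{\mathbf{u}}_s\}$ is a martingale difference sequence with respect to $\{\F^{s-1}\}$ by construction, because $M$ is $\F^{s-1}$-measurable (it depends only on the fixed $\btheta_*$ and the deterministic $\Lcal_t$) and $\E[\mathbf{u}_s\mid \F^{s-1}]=0$. Using Assumption~\ref{assm:thm}(4) and the triangle inequality in the $(\nabla^2\Lcal_t(\btheta_*))^{-1}$ norm, I would derive an almost-sure bound $\|\tilde{\mathbf{u}}_s\|_2 \le 2C_2 =: b$ and, by conditional Jensen, a conditional second-moment bound $\E[\|\tilde{\mathbf{u}}_s\|_2^2\mid \F^{s-1}]\le 4C_2^2$, whence $v := \sum_s \E[\|\tilde{\mathbf{u}}_s\|_2^2\mid\F^{s-1}] \le 4C_2^2 t$.

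Applying \Cref{lemma:vector-martingale} with deviation parameter $u = c\gamma\log(dt)$ gives, with probability at least $1-e^{-u}=1-t^{-c\gamma}$,
\begin{equation*}
\Bigl\|\textstyle\sum_{s=1}^t \tilde{\mathbf{u}}_s\Bigr\|_2 \;\le\; \sqrt{v} + \sqrt{8vu} + \tfrac{4}{3}bu
\;=\; O\!\bigl(C_2\sqrt{t}\bigr) + O\!\bigl(C_2\sqrt{t\,\gamma\log(dt)}\bigr) + O\!\bigl(C_2\,\gamma\log(dt)\bigr).
\end{equation*}
Dividing by $t$ and using $\sum_s\tilde{\mathbf{u}}_s = tM\nabla\wLcal_t(\btheta_*)$, the dominant term is $O\!\bigl(C_2\sqrt{\gamma\log(dt)/t}\bigr)$; absorbing universal constants into $c$ yields the stated bound. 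The complementary event has probability at most $t^{-c\gamma}$, as claimed.

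\paragraph{Anticipated difficulty.} The routine algebra is the concentration step; the subtle point is the preconditioning. Assumption~\ref{assm:thm}(4) is stated with the per-step curvature $\nabla^2\Lcal_s(\btheta_*)$, whereas the conclusion uses the averaged curvature $\nabla^2\Lcal_t(\btheta_*)$. I would justify switching norms either by noting that $\nabla^2\Lcal_t(\btheta_*) = \tfrac{1}{t}\sum_s \nabla^2 \Lcal_s(\btheta_*)$ so the two norms are equivalent up to a constant absorbed into $C_2$, or by tightening Assumption~\ref{assm:thm}(4) to the $\Lcal_t$-norm (which the paper implicitly does when invoking the bound). A secondary care point is the adaptive nature of $\mathbf{p}_{\twtheta_{s-1}}$: one must be explicit that $\F^{s-1}$ contains $\twtheta_{s-1}$ so that the conditional expectation in the definition of $\mathbf{u}_s$ is well-posed and $M$ remains deterministic relative to the filtration.
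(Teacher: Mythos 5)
Your proposal follows essentially the same route as the paper's proof: both realize $\nabla\wLcal_t(\btheta_*)$ as a sum of (conditionally mean-zero) gradient increments $\mathbf{u}_s$ of the per-step squared losses, bound $\|\mathbf{u}_s\|$ and the conditional second moments via Assumption~\ref{assm:thm}(4), and invoke \Cref{lemma:vector-martingale} with deviation parameter $c\gamma\log(dt)$ to obtain the $t^{-c\gamma}$ tail. Your explicit preconditioning by $\bigl(\nabla^2\Lcal_t(\btheta_*)\bigr)^{-1/2}$ and your remark on the per-step versus averaged curvature norm are just more careful renderings of steps the paper carries out implicitly, so the argument is correct and not materially different.
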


\begin{proof}
Define $\mathbf{u_s} \coloneqq \nabla (Y_s -f_{I_s}(\btheta_*; x_i, x_j, \F^{s-1}))^2$. Then we have $\mathbf{u}_{1}, \mathbf{u}_{2}, \ldots, \mathbf{u}_{t}$ as random vectors such that
\begin{align*}
    & \mathbb{E}\left[\left\|\sum_{s=1}^t \mathbf{u_s}\right\|_{\left(\nabla^{2} \Lcal_t\left(\btheta_{*}\right)\right)^{-1}}^{2} \bigg | \mathbf{u}_{1}, \ldots, \mathbf{u}_{s-1}\right] = \mathbb{E}\left[\sum_{s=1}^t \mathbf{u_s}^{\top}\left(\nabla^{2} \Lcal_t\left(\btheta_{*}\right)\right)^{-1}\mathbf{u_s} \mid \mathbf{u}_{1}, \ldots, \mathbf{u}_{s-1}\right] \leq t C^2_2
\end{align*}
Also we have that $\|\mathbf{u_s}\| \leq C_2$. Finally we have that 
\begin{align*}
    \E[\nabla_{\btheta = \btheta_*}\mathbf{u_s}] = -2\sum_{s=1}^{t} p_{\twtheta_{s-1}}(f_{I_s}(\btheta_*; x_i, x_j, \F^{s-1}) - f_{I_s}(\btheta_*; x_i, x_j, \F^{s-1})\nabla_{\btheta = \btheta_*}f_{I_s}(\btheta_*; x_i, x_j, \F^{s-1}) = 0.
\end{align*}
Then following \Cref{lemma:vector-martingale} and by setting $\epsilon = c\gamma\log(dt)$ we can show that
\begin{align*}
    \Pb&\left(\|\frac{1}{t}\sum_{s=1}^t \mathbf{u_s}\|^2_{_{\left(\nabla^{2} \Lcal_{t}\left(\btheta_{*}\right)\right)^{-1}}} - \E\left[\|\frac{1}{t}\sum_{s=1}^t \mathbf{u_s}\|^2_{_{\left(\nabla^{2} \Lcal_{t}\left(\btheta_{*}\right)\right)^{-1}}}\right] >  \frac{1}{t}\sqrt{8tC_2^2 \epsilon} + \dfrac{4 C_2 }{3\epsilon}   \right) \\
    &= \Pb\left(\|\frac{1}{t}\sum_{s=1}^t \mathbf{u_s}\|^2_{_{\left(\nabla^{2} \Lcal_{t}\left(\btheta_{*}\right)\right)^{-1}}} >  C_1^2 +  C_2\sqrt{\frac{8\epsilon}{t}} + \dfrac{4 C_2 }{3\epsilon}   \right)\\
    &\leq \Pb\left(\|\sum_{s=1}^t \mathbf{u_s}\|^2_{_{\left(\nabla^{2} \Lcal_{t}\left(\btheta_{*}\right)\right)^{-1}}} >  C_2\sqrt{\frac{8\epsilon}{t}}  \right) = \Pb\left(\|\sum_{s=1}^t \mathbf{u_s}\|^2_{_{\left(\nabla^{2} \Lcal_{t}\left(\btheta_{*}\right)\right)^{-1}}} >  4C_2\sqrt{\dfrac{  c\gamma \log(dt)}{t}}  \right)\\
    &\leq \exp(- c\gamma\log (dt)) = \left(\frac{1}{dt}\right)^{c\gamma} \leq \frac{1}{t^{c\gamma}}
\end{align*}
The claim of the lemma follows.
\end{proof}

\begin{lemma}
\label{lemma:support-lemma1}
Let the $j$-th row and $k$-th column entry in the Hessian matrix $\nabla^2_{\btheta = \btheta^{\prime}}(\ell_{s}(\btheta))$ be denoted as $[\nabla^2_{\btheta = \btheta^{\prime}}(\ell_{s}(\btheta))]_{jk}$. Then we have that
\begin{align*}
    [\nabla^2_{\btheta = \btheta^{\prime}}(\ell_{s}(\btheta))]_{jk} = 2 \dfrac{\partial f_{I_s}(\btheta; x_i, x_j, \F^{s-1})}{\partial \btheta_j}\dfrac{\partial f_{I_s}(\btheta; x_i, x_j, \F^{s-1})}{\partial \btheta_k} + 2\left(f_{I_s}(\btheta; x_i, x_j, \F^{s-1}) - Y_s\right)\dfrac{\partial^2 f_{I_s}(\btheta; x_i, x_j, \F^{s-1})}{\partial \btheta_j\partial\btheta_k}.
\end{align*}
\end{lemma}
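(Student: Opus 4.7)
The plan is to prove this as a direct product-rule calculation on the squared loss surrogate introduced in Assumption~\ref{assm:bound-ce}. Per that assumption, the (clipped) cross-entropy at time $s$ is dominated by the smoother quadratic
\begin{equation*}
\ell_s(\btheta) \;=\; \bigl(Y_s - f_{I_s}(\btheta; x_i, x_j, \F^{s-1})\bigr)^2,
\end{equation*}
and the authors announce just above the lemma that ``for the remainder of this section we refer to this squared loss at time $t$ as $\ell_t(\btheta)$.'' So the lemma is genuinely a statement about differentiating $g(\btheta)^2$ where $g(\btheta) := Y_s - f_{I_s}(\btheta; x_i, x_j, \F^{s-1})$, noting that $Y_s$ is the observed token and does not depend on $\btheta$.

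First I would compute the gradient. Using the chain rule coordinate-wise,
\begin{equation*}
\frac{\partial \ell_s(\btheta)}{\partial \btheta_j}
\;=\; 2\,g(\btheta)\cdot\frac{\partial g(\btheta)}{\partial \btheta_j}
\;=\; -2\bigl(Y_s - f_{I_s}(\btheta; x_i, x_j, \F^{s-1})\bigr)\,\frac{\partial f_{I_s}(\btheta; x_i, x_j, \F^{s-1})}{\partial \btheta_j},
\end{equation*}
which I would rewrite as $2\bigl(f_{I_s}(\btheta; x_i, x_j, \F^{s-1}) - Y_s\bigr)\,\partial f_{I_s}/\partial\btheta_j$ to match the sign convention used in the statement. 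Smoothness of $\ell_s$ (Assumption~\ref{assm:thm}(2)) together with the existence of $\nabla f_{I_s}$ and $\nabla^2 f_{I_s}$ (Assumption~\ref{assm:transition-matrix}) guarantees that this derivative exists and is itself differentiable, so the following step is legitimate.

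Next I would differentiate this expression with respect to $\btheta_k$ by applying the product rule to the two $\btheta$-dependent factors $\bigl(f_{I_s}(\btheta) - Y_s\bigr)$ and $\partial f_{I_s}(\btheta)/\partial\btheta_j$:
\begin{equation*}
\frac{\partial^2 \ell_s(\btheta)}{\partial \btheta_j\,\partial \btheta_k}
\;=\; 2\,\frac{\partial f_{I_s}(\btheta; x_i, x_j, \F^{s-1})}{\partial \btheta_k}\,\frac{\partial f_{I_s}(\btheta; x_i, x_j, \F^{s-1})}{\partial \btheta_j}
\;+\; 2\bigl(f_{I_s}(\btheta; x_i, x_j, \F^{s-1}) - Y_s\bigr)\,\frac{\partial^2 f_{I_s}(\btheta; x_i, x_j, \F^{s-1})}{\partial \btheta_j\,\partial\btheta_k}.
\end{equation*}
Evaluating at $\btheta = \btheta'$ yields exactly the expression stated in the lemma for $[\nabla^2_{\btheta=\btheta'} \ell_s(\btheta)]_{jk}$.

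There is essentially no obstacle here beyond careful bookkeeping: the derivation is a textbook chain-plus-product rule computation on a squared residual. The only points worth flagging in the write-up are (i) that $Y_s$ is $\btheta$-independent (otherwise additional cross-terms would appear), (ii) that the symbol $x_j$ inside $f_{I_s}(\btheta; x_i, x_j, \F^{s-1})$ is a token argument, not the coordinate index $j$ of $\btheta$ being differentiated, so no collision arises, and (iii) that the existence and continuity of the mixed partials of $f_{I_s}$ (ensured by the Hessian bound in Assumption~\ref{assm:transition-matrix} and the smoothness assumption on $\ell_s$) justifies interchanging the order of differentiation implicit in writing the Hessian entry as $\partial^2 f_{I_s}/\partial\btheta_j\,\partial\btheta_k$.
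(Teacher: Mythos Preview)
Your proposal is correct and mirrors the paper's own proof: both compute the Hessian of the squared residual $\ell_s(\btheta)=(f_{I_s}(\btheta)-Y_s)^2$ by first differentiating once (chain rule) and then applying the product rule to the two $\btheta$-dependent factors. The paper expands the product rule a bit more verbosely (writing out a $\partial Y_s/\partial\btheta_k$ term and setting it to zero), whereas you handle that up front by noting $Y_s$ is $\btheta$-independent; otherwise the arguments are identical.
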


\begin{proof}
This lemma follows from \citet{ frostig2015competing, mukherjee2022chernoff} adapted to our setting for the squared loss, and transition function $f_{I_s}(\btheta_*; x_i, x_j, \F^{s-1})$. 
We want to evaluate the Hessian $\nabla^2_{\btheta = \btheta^{\prime}}(\ell_{s}(\btheta))$ at any $\btheta^{\prime}\in\bTheta$. We denote the $j$-th row and $k$-th column entry in the Hessian matrix as $[\nabla^2_{\btheta = \btheta^{\prime}}(\ell_{s}(\btheta))]_{jk}$. Then we can show that
\begin{align*}
    [\nabla^2_{\btheta = \btheta^{\prime}}(\ell_{s}(\btheta))]_{jk}&\coloneqq \frac{\partial }{\partial \btheta_j} \left[\frac{\partial (f_{I_s}(\btheta; x_i, x_j, \F^{s-1}) - Y_s)^2}{\partial \btheta_k}\right] = \frac{\partial }{\partial \btheta_j}\left[2(f_{I_s}(\btheta; x_i, x_j, \F^{s-1}) - Y_s) \frac{\partial f_{I_s}(\btheta; x_i, x_j, \F^{s-1})}{\partial \btheta_k}\right]\\
    &= \frac{\partial }{\partial \btheta_j}\left[ 2f_{I_s}(\btheta; x_i, x_j, \F^{s-1})\dfrac{\partial f_{I_s}(\btheta; x_i, x_j, \F^{s-1})}{\partial \btheta_k} - 2Y_s\dfrac{\partial f_{I_s}(\btheta; x_i, x_j, \F^{s-1})}{\partial \btheta_k}\right]\\
    &= 2 \dfrac{\partial f_{I_s}(\btheta; x_i, x_j, \F^{s-1})}{\partial \btheta_j}\dfrac{\partial f_{I_s}(\btheta; x_i, x_j, \F^{s-1})}{\partial \btheta_k} + 2 f_{I_s}(\btheta; x_i, x_j, \F^{s-1})\dfrac{\partial^2 f_{I_s}(\btheta; x_i, x_j, \F^{s-1})}{\partial \btheta_j\partial\btheta_k} \\
    &\qquad - 2 Y_s\dfrac{\partial^2 f_{I_s}(\btheta; x_i, x_j, \F^{s-1})}{\partial \btheta_j\partial\btheta_k} - 2 \dfrac{\partial f_{I_s}(\btheta; x_i, x_j, \F^{s-1})}{\partial \btheta_j}\dfrac{\partial Y_s}{\partial \btheta_k}\\
    &= 2 \dfrac{\partial f_{I_s}(\btheta; x_i, x_j, \F^{s-1})}{\partial \btheta_j}\dfrac{\partial f_{I_s}(\btheta; x_i, x_j, \F^{s-1})}{\partial \btheta_k} + 2\left(f_{I_s}(\btheta; x_i, x_j, \F^{s-1}) - Y_s\right)\dfrac{\partial^2 f_{I_s}(\btheta; x_i, x_j, \F^{s-1})}{\partial \btheta_j\partial\btheta_k}
\end{align*}
The claim of the lemma follows.
\end{proof}

\begin{lemma}
\label{lemma:support-lemma2}
Let the $j$-th row and $k$-th column entry in the Hessian matrix $\nabla^2_{\btheta = \btheta^{\prime}}(\E[\ell_{s}(\btheta)|\F^{s-1}])$ be denoted as $[\nabla^2_{\btheta = \btheta^{\prime}}(\E[\ell_{s}(\btheta)|\F^{s-1}])]_{jk}$. Then we have that
\begin{align*}
     \left[\nabla^2_{\btheta = \btheta^{\prime}}  \E[\ell_{s}(\btheta)|\F^{s-1}]\right]_{jk} &=  2\sum_{i=1}^{|V|} p_{\twtheta_{s-1}}(i)\left( \dfrac{\partial f_{I_s}(\btheta ; x_i, x_j, \F^{s-1})}{\partial \btheta_j}\dfrac{\partial f_{I_s}(\btheta ; x_i, x_j, \F^{s-1})}{\partial \btheta_k}\right.\\
     &\qquad\left. + 2\left(f_{I_s}(\btheta ; x_i, x_j, \F^{s-1}) - f_{I_s}(\btheta_*; x_i, x_j, \F^{s-1})\right)\dfrac{\partial^2 f_{I_s}(\btheta ; x_i, x_j, \F^{s-1})}{\partial \btheta_j\partial\btheta_k}\right).
\end{align*}
\end{lemma}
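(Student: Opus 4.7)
The plan is to derive the claim by first exchanging differentiation with conditional expectation, then substituting the per-sample Hessian identity from Lemma~\ref{lemma:support-lemma1}, and finally evaluating the resulting conditional expectation using the true data-generating distribution parametrized by $\btheta_*$.

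First, I would justify the exchange $\nabla^2_{\btheta = \btheta'} \E[\ell_s(\btheta) \mid \F^{s-1}] = \E[\nabla^2_{\btheta = \btheta'} \ell_s(\btheta) \mid \F^{s-1}]$. The uniform bounds $\nabla f_{I_s} < \lambda_0$ and $\nabla^2 f_{I_s} < \lambda_1$ from \Cref{assm:transition-matrix}, together with compactness of $\bTheta$ and boundedness of $\ell_s$ from \Cref{assm:thm}(3a), furnish an integrable dominating function for $\ell_s(\btheta)$ and its first two derivatives on a neighborhood of $\btheta'$; the smoothness in \Cref{assm:thm}(2) ensures pointwise differentiability. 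Standard differentiation-under-the-expectation then applies.

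Second, I would apply Lemma~\ref{lemma:support-lemma1} pointwise inside the expectation, obtaining
\[
\bigl[\nabla^2_{\btheta = \btheta'} \E[\ell_s(\btheta) \mid \F^{s-1}]\bigr]_{jk} = \E\!\left[2\,\frac{\partial f_{I_s}}{\partial \btheta_j}\frac{\partial f_{I_s}}{\partial \btheta_k} + 2\bigl(f_{I_s}(\btheta';\cdot) - Y_s\bigr)\frac{\partial^2 f_{I_s}}{\partial \btheta_j \partial \btheta_k} \,\bigg|\, \F^{s-1}\right].
\]
The conditional expectation is taken over (i) the token index $I_s$ drawn from the Plugin sampling distribution $\pbm_{\twtheta_{s-1}}$, as specified in the theorem defining $\Lcal_t$, and (ii) the observed next token $Y_s$ given $I_s = i$ and the history.

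Third, I would identify the conditional mean of $Y_s$: by construction, $\btheta_*$ parametrizes the true transition dynamics (\Cref{assm:transition-matrix}), so $\E[Y_s \mid \F^{s-1}, I_s = i] = f_{I_s}(\btheta_*; x_i, x_j, \F^{s-1})$. The first summand does not depend on $Y_s$, while the second summand picks up $-f_{I_s}(\btheta_*;\cdot)$ in place of $-Y_s$ after taking the inner expectation. A tower-rule decomposition --- first integrating out $Y_s$ given $I_s$, then summing over $I_s = i$ with weights $p_{\twtheta_{s-1}}(i)$ --- yields exactly the claimed identity.

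The main obstacle is not algebraic but probabilistic bookkeeping: one must be unambiguous that the sampling distribution of $I_s$ is governed by the Plugin iterate $\twtheta_{s-1}$, whereas the conditional mean of $Y_s$ is governed by the true parameter $\btheta_*$, so that these two distinct roles of $\btheta$ (iterate-dependent sampler versus data-generating truth) combine correctly in the final expression. Once the probabilistic model is pinned down, the interchange of $\nabla^2$ with $\E$ and the substitution $\E[Y_s \mid \F^{s-1}, I_s = i] = f_{I_s}(\btheta_*;\cdot)$ make the rest a direct calculation from Lemma~\ref{lemma:support-lemma1}.
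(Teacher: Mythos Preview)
Your proposal is correct and reaches the same identity, but the order of operations differs from the paper. The paper first evaluates the conditional expectation explicitly, rewriting $\E[\ell_s(\btheta)\mid\F^{s-1}]$ as the deterministic function $\sum_{i} p_{\twtheta_{s-1}}(i)\bigl[(f_i(\btheta_*;\cdot)-f_i(\btheta;\cdot))^2+\tfrac12\bigr]$, and only then takes the Hessian of this expression from scratch (essentially redoing the chain-rule computation of Lemma~\ref{lemma:support-lemma1} with $f_i(\btheta_*;\cdot)$ in place of $Y_s$). You instead differentiate first---importing the per-sample Hessian from Lemma~\ref{lemma:support-lemma1}---and then take the expectation, using the tower rule and $\E[Y_s\mid\F^{s-1},I_s=i]=f_i(\btheta_*;\cdot)$ to replace $Y_s$ in the term linear in $Y_s$. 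Your route is more economical because it reuses Lemma~\ref{lemma:support-lemma1} directly rather than repeating the calculus; the paper's route, by writing the expectation as a finite sum over $i\in[|V|]$ before differentiating, makes the interchange of $\nabla^2$ and $\E$ a triviality (differentiating a finite sum) and so does not need the dominated-convergence justification you supply. Both are standard and valid; your careful separation of the role of $\twtheta_{s-1}$ (sampling law for $I_s$) from that of $\btheta_*$ (conditional mean of $Y_s$) is exactly the right bookkeeping.
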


\begin{proof}
This lemma follows from \citet{ frostig2015competing, mukherjee2022chernoff} adapted to our setting for the squared loss,  transition function $f_{I_s}(\btheta_*; x_i, x_j, \F^{s-1})$, and the sampling distribution $\bp_{\twtheta_{s-1}}$. We show it here for completeness. Now we want to evaluate the Hessian $\nabla^2_{\btheta = \btheta^{\prime}}(\E[\ell_{s}(\btheta)|\F^{s-1}])$ at any $\btheta^{\prime}\in\bTheta$. We denote the $j$-th row and $k$-th column entry in the Hessian matrix as $[\nabla^2_{\btheta = \btheta^{\prime}}(\E[\ell_{s}(\btheta)|\F^{s-1}])]_{jk}$. Then we can show that
\begin{align}
     &\nabla^2_{\btheta = \btheta^{\prime}}  \E[\ell_{s}(\btheta)|\F^{s-1}]
    = \nabla^2_{\btheta = \btheta^{\prime}} \left(f^2_{I_s}(\btheta; x_i, x_j, \F^{s-1}) + \E[Y^2_s|\F^{s-1}] - 2\E[Y_s|\F^{s-1}]f_{I_s}(\btheta; x_i, x_j, \F^{s-1})\right)\nonumber\\
    &= \nabla^2_{\btheta = \btheta^{\prime}} \sum_{i=1}^{|V|} p_{\twtheta_{s-1}}(i)\left(f^2_{i}(\btheta ; x_i, x_j, \F^{s-1}) + f^2_{i}(\btheta^{\prime} ; x_i, x_j, \F^{s-1}) + \frac{1}{2} - 2f_{I_s}(\btheta_*; x_i, x_j, \F^{s-1})f_{I_s}(\btheta ; x_i, x_j, \F^{s-1})\right)\nonumber\\
    &=  \nabla^2_{\btheta = \btheta^{\prime}} \sum_{i=1}^{|V|} p_{\twtheta_{s-1}}(i)\left(\left(f_{I_s}(\btheta_*; x_i, x_j, \F^{s-1}) - f_{I_s}(\btheta ; x_i, x_j, \F^{s-1})\right)^2 + \frac{1}{2} \right)\nonumber\\
    &=  \nabla^2_{\btheta = \btheta^{\prime}}\sum_{i=1}^{|V|} p_{\twtheta_{s-1}}(i)\left(\left(f_{I_s}(\btheta_*; x_i, x_j, \F^{s-1}) - f_{I_s}(\btheta ; x_i, x_j, \F^{s-1})\right)^2  \right)\label{eq:Hessian-expectation}
\end{align}
We now denote the $j$-th row and $k$-th column entry of the Hessian Matrix $\nabla^2_{\btheta = \btheta^{\prime}}((f_{I_s}(\btheta ; x_i, x_j, \F^{s-1}) - f_i(\btheta_* ; x_i, x_j, \F^{s-1}))^2)$ as $\big[\nabla^2_{\btheta = \btheta^{\prime}}((f_{I_s}(\btheta ; x_i, x_j, \F^{s-1}) - f_{I_s}(\btheta_*; x_i, x_j, \F^{s-1}))^2)\big]_{jk}$. Then we can show that
\begin{align*}
    &\big[\nabla^2_{\btheta = \btheta_*}((f_{I_s}(\btheta ; x_i, x_j, \F^{s-1}) - f_{I_s}(\btheta_*; x_i, x_j, \F^{s-1}))^2)\big]_{jk} \\
    &\coloneqq \frac{\partial }{\partial \btheta_j} \left[\frac{\partial (f_{I_s}(\btheta ; x_i, x_j, \F^{s-1}) - f_{I_s}(\btheta_*; x_i, x_j, \F^{s-1}))^2}{\partial \btheta_k}\right] \\
    &= \frac{\partial }{\partial \btheta_j}\left[2(f_{I_s}(\btheta ; x_i, x_j, \F^{s-1}) - f_{I_s}(\btheta_*; x_i, x_j, \F^{s-1})) \frac{\partial f_{I_s}(\btheta ; x_i, x_j, \F^{s-1})}{\partial \btheta_k}\right]\\
    &= \frac{\partial }{\partial \btheta_j}\left[ 2f_{I_s}(\btheta ; x_i, x_j, \F^{s-1})\dfrac{\partial f_{I_s}(\btheta ; x_i, x_j, \F^{s-1})}{\partial \btheta_k} - 2f_i(\btheta_*)\dfrac{\partial f_{I_s}(\btheta ; x_i, x_j, \F^{s-1})}{\partial \btheta_k}\right]\\
    &= 2 \dfrac{\partial f_{I_s}(\btheta ; x_i, x_j, \F^{s-1})}{\partial \btheta_j}\dfrac{\partial f_{I_s}(\btheta ; x_i, x_j, \F^{s-1})}{\partial \btheta_k} + 2 f_{I_s}(\btheta ; x_i, x_j, \F^{s-1})\dfrac{\partial^2 f_{I_s}(\btheta ; x_i, x_j, \F^{s-1})}{\partial \btheta_j\btheta_k} \\
    &- 2 f_{I_s}(\btheta_*; x_i, x_j, \F^{s-1})\dfrac{\partial^2 f_{I_s}(\btheta ; x_i, x_j, \F^{s-1})}{\partial \btheta_j\btheta_k} - 2 \dfrac{\partial f_{I_s}(\btheta ; x_i, x_j, \F^{s-1})}{\partial \btheta_j}\dfrac{\partial f_{I_s}(\btheta_*; x_i, x_j, \F^{s-1})}{\partial \btheta_k}\\
    &= 2 \dfrac{\partial f_{I_s}(\btheta ; x_i, x_j, \F^{s-1})}{\partial \btheta_j}\dfrac{\partial f_{I_s}(\btheta ; x_i, x_j, \F^{s-1})}{\partial \btheta_k} + 2\left(f_{I_s}(\btheta ; x_i, x_j, \F^{s-1}) - f_{I_s}(\btheta_*; x_i, x_j, \F^{s-1})\right)\dfrac{\partial^2 f_{I_s}(\btheta ; x_i, x_j, \F^{s-1})}{\partial \btheta_j\partial\btheta_k}
\end{align*}
Plugging this back in \cref{eq:Hessian-expectation} we get that
\begin{align*}
    \left[\nabla^2_{\btheta = \btheta^{\prime}}  \E[\ell_{s}(\btheta)|\F^{s-1}]\right]_{jk} &= 2\sum_{i=1}^{|V|} p_{\twtheta_{s-1}}(i)\left( \dfrac{\partial f_{I_s}(\btheta ; x_i, x_j, \F^{s-1})}{\partial \btheta_j}\dfrac{\partial f_{I_s}(\btheta ; x_i, x_j, \F^{s-1})}{\partial \btheta_k}\right. \\
    &\qquad \left.+ 2\left(f_{I_s}(\btheta ; x_i, x_j, \F^{s-1}) - f_{I_s}(\btheta_*; x_i, x_j, \F^{s-1})\right)\dfrac{\partial^2 f_{I_s}(\btheta ; x_i, x_j, \F^{s-1})}{\partial \btheta_j\partial\btheta_k}\right).
\end{align*}
\end{proof}

\begin{lemma}
\label{lemma:support-lemma3}
The sum of the difference of the Hessians $\sum_{s=1}^{t} \nabla_{\btheta = \btheta'}^{2} \ell_{s}\left(\btheta^{}\right)-\E\left[\nabla_{\btheta=\btheta'}^{2} \ell_{s}\left(\btheta^{}\right) \mid \F^{s-1}\right]$ is given by 
\begin{align*}
    \sum_{s=1}^{t}\nabla_{\btheta = \btheta'}^{2} \ell_{s}\left(\btheta^{}\right)-\E\left[\nabla_{\btheta=\btheta'}^{2} \ell_{s}\left(\btheta^{}\right) \mid \F^{s-1}\right] \!\! &=\!\! \sum_{s=1}^t\bigg( -2(Y_s - f_{I_s}(\btheta; x_i, x_j, \F^{s-1}))\dfrac{\partial^2 f_{I_s}(\btheta; x_i, x_j, \F^{s-1})}{\partial \btheta_j\partial\btheta_k} \\
    &\qquad+ 2\dfrac{\partial f_{I_s}(\btheta; x_i, x_j, \F^{s-1})}{\partial \btheta_j}\dfrac{\partial f_{I_s}(\btheta; x_i, x_j, \F^{s-1})}{\partial \btheta_k} \\
    &\qquad- 2 \sum_{i=1}^{|V|} p_{\twtheta_{s-1}}(i)\dfrac{\partial f_{I_s}(\btheta ; x_i, x_j, \F^{s-1})}{\partial \btheta_j}\dfrac{\partial f_{I_s}(\btheta ; x_i, x_j, \F^{s-1})}{\partial \btheta_k}\bigg).
\end{align*}
\end{lemma}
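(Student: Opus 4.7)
The plan is to compute both the sample Hessian $\nabla^2_{\btheta=\btheta'}\ell_s(\btheta)$ and its conditional expectation $\E[\nabla^2_{\btheta=\btheta'}\ell_s(\btheta)\mid \F^{s-1}]$ directly from the two preceding support lemmas, subtract entry-by-entry at position $(j,k)$, and sum over $s=1,\dots,t$. The claimed matrix identity then follows by repackaging the entrywise equality.

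First I would invoke Lemma \ref{lemma:support-lemma1} to write the $(j,k)$ entry of $\nabla^2_{\btheta=\btheta'}\ell_s(\btheta)$ as $2\,\partial_j f_{I_s}(\btheta';x_i,x_j,\F^{s-1})\,\partial_k f_{I_s}(\btheta';x_i,x_j,\F^{s-1}) + 2\bigl(f_{I_s}(\btheta';x_i,x_j,\F^{s-1}) - Y_s\bigr)\,\partial^2_{jk} f_{I_s}(\btheta';x_i,x_j,\F^{s-1})$. Next, for the conditional-expectation Hessian, I would use the smoothness and boundedness clauses of Assumption \ref{assm:thm} (existence of third derivatives, compactness of $\bTheta$, boundedness of $\ell_s$) to justify exchanging the second derivative with the conditional expectation by dominated convergence, yielding $\E[\nabla^2_{\btheta=\btheta'}\ell_s(\btheta)\mid \F^{s-1}] = \nabla^2_{\btheta=\btheta'}\E[\ell_s(\btheta)\mid \F^{s-1}]$. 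I can then plug in Lemma \ref{lemma:support-lemma2} for the right-hand side to obtain the entrywise formula $2\sum_{i=1}^{|V|} p_{\twtheta_{s-1}}(i)\bigl[\partial_j f_i(\btheta';\cdot)\,\partial_k f_i(\btheta';\cdot) + (f_i(\btheta';\cdot) - f_i(\btheta_*;\cdot))\,\partial^2_{jk} f_i(\btheta';\cdot)\bigr]$.

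Then I would subtract componentwise. The outer-product-of-gradients pieces combine directly into $+2\,\partial_j f_{I_s}\partial_k f_{I_s} - 2\sum_i p_{\twtheta_{s-1}}(i)\,\partial_j f_i\partial_k f_i$, which are exactly the second and third terms displayed in the claim. For the Hessian-of-$f$ pieces I would use the tower property with the identifying relation $\E[Y_s\mid I_s,\F^{s-1}] = f_{I_s}(\btheta_*;x_i,x_j,\F^{s-1})$: this lets me interpret the $(f_i(\btheta')-f_i(\btheta_*))\,\partial^2_{jk} f_i$ piece in the conditional Hessian as the ``already integrated'' counterpart of the raw residual $(f_{I_s}(\btheta')-Y_s)\,\partial^2_{jk} f_{I_s}$ appearing in the sample Hessian, collapsing their difference into the single entry-$I_s$ term $-2(Y_s - f_{I_s}(\btheta'))\,\partial^2_{jk} f_{I_s}$ listed first in the statement. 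Finally, summing the resulting entrywise identity over $s=1,\dots,t$ and assembling the matrix gives exactly the claimed formula.

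The main subtlety will be bookkeeping two nested sources of randomness in the conditional expectation: first the token sampling $I_s\sim\bp_{\twtheta_{s-1}}$ over the vocabulary, and then the noise in $Y_s$ given $I_s$, whose mean is $f_{I_s}(\btheta_*;\cdot)$. Lining these up in the correct order is what makes the cross-term from Lemma \ref{lemma:support-lemma2} mesh with the residual $(Y_s - f_{I_s}(\btheta'))$ contribution from Lemma \ref{lemma:support-lemma1}; once this alignment is verified, the remaining work is routine application of linearity of expectation and summation over $s$, with no further analytic content beyond what is already supplied by the two support lemmas and Assumption \ref{assm:thm}.
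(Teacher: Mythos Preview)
Your overall plan matches the paper's proof exactly: substitute the entrywise formulas from Lemma~\ref{lemma:support-lemma1} and Lemma~\ref{lemma:support-lemma2}, subtract, and sum over $s$. The paper does precisely this, with no additional analytic content.

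However, your tower-property step for the Hessian-of-$f$ pieces does not do what you claim. You correctly identify $\sum_i p_{\twtheta_{s-1}}(i)\bigl(f_i(\btheta')-f_i(\btheta_*)\bigr)\partial^2_{jk}f_i$ as the conditional expectation of $(f_{I_s}(\btheta')-Y_s)\partial^2_{jk}f_{I_s}$, but subtracting an expectation from its sample does \emph{not} ``collapse'' the difference back to the sample term alone; that would require the expectation piece to vanish. For general $\btheta'$ the fourth term $-2\sum_i p_{\twtheta_{s-1}}(i)(f_i(\btheta')-f_i(\btheta_*))\partial^2_{jk}f_i$ remains after direct subtraction, and your argument does not eliminate it. The paper's own proof is equally silent on this point (indeed, its final displayed expression actually retains the term, inconsistently with the lemma statement). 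The discrepancy is harmless downstream because the lemma is only ever invoked at $\btheta'=\btheta_*$, where this cross-term is identically zero; but as a proof of the stated identity at arbitrary $\btheta'$, neither your argument nor the paper's is complete.
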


\begin{proof}
This lemma directly follows from \Cref{lemma:support-lemma1} and \Cref{lemma:support-lemma2}.
First note that the difference $\nabla_{\btheta = \btheta'}^{2} \ell_{s}\left(\btheta^{}\right)-\E\left[\nabla_{\btheta=\btheta'}^{2} \ell_{s}\left(\btheta^{}\right) \mid \F^{s-1}\right]_{jk}$ is given by
\begin{align}
   &\nabla_{\btheta = \btheta'}^{2} \ell_{s}\left(\btheta^{}\right)-\E\left[\nabla_{\btheta=\btheta'}^{2} \ell_{s}\left(\btheta^{}\right) \mid \F^{s-1}\right] \nonumber\\
   &\overset{(a)}{=}  2\dfrac{\partial f_{I_s}(\btheta; x_i, x_j, \F^{s-1})}{\partial \btheta_j}\dfrac{\partial f_{I_s}(\btheta; x_i, x_j, \F^{s-1})}{\partial \btheta_k} + 2\left(f_{I_s}(\btheta; x_i, x_j, \F^{s-1}) - Y_s\right)\dfrac{\partial^2 f_{I_s}(\btheta; x_i, x_j, \F^{s-1})}{\partial \btheta_j\partial\btheta_k}\nonumber\\
    &\qquad- 2 \sum_{i=1}^{|V|} p_{\twtheta_{s-1}}(i)\bigg( \dfrac{\partial f_{I_s}(\btheta ; x_i, x_j, \F^{s-1})}{\partial \btheta_j}\dfrac{\partial f_{I_s}(\btheta ; x_i, x_j, \F^{s-1})}{\partial \btheta_k} - \left(f_{I_s}(\btheta ; x_i, x_j, \F^{s-1}) - f_{I_s}(\btheta_*; x_i, x_j, \F^{s-1})\right)\cdot\nonumber\\
    &\qquad\dfrac{\partial^2 f_{I_s}(\btheta ; x_i, x_j, \F^{s-1})}{\partial \btheta_j\partial\btheta_k}\bigg)\nonumber\\
    = & -2(Y_s - f_{I_s}(\btheta; x_i, x_j, \F^{s-1}))\dfrac{\partial^2 f_{I_s}(\btheta; x_i, x_j, \F^{s-1})}{\partial \btheta_j\partial\btheta_k} + 2\dfrac{\partial f_{I_s}(\btheta; x_i, x_j, \F^{s-1})}{\partial \btheta_j}\dfrac{\partial f_{I_s}(\btheta; x_i, x_j, \F^{s-1})}{\partial \btheta_k}\nonumber\\
    &\qquad - 2 \sum_{i=1}^{|V|} p_{\twtheta_{s-1}}(i)\dfrac{\partial f_{I_s}(\btheta ; x_i, x_j, \F^{s-1})}{\partial \btheta_j}\dfrac{\partial f_{I_s}(\btheta ; x_i, x_j, \F^{s-1})}{\partial \btheta_k} \label{eq:support-equality}
\end{align}
where, $(a)$ follows from \Cref{lemma:support-lemma1} and \Cref{lemma:support-lemma2}. Plugging this equality in \Cref{eq:support-equality} below we get
\begin{align*}
    &\sum_{s=1}^{t} \nabla_{\btheta = \btheta'}^{2} \ell_{s}\left(\btheta^{}\right)-\E\left[\nabla_{\btheta=\btheta'}^{2} \ell_{s}\left(\btheta^{}\right) \mid \F^{s-1}\right]  \\
    &= \sum_{s=1}^t \bigg(-2(Y_s - f_{I_s}(\btheta; x_i, x_j, \F^{s-1}))\dfrac{\partial^2 f_{I_s}(\btheta; x_i, x_j, \F^{s-1})}{\partial \btheta_j\partial\btheta_k} + 2\dfrac{\partial f_{I_s}(\btheta; x_i, x_j, \F^{s-1})}{\partial \btheta_j}\dfrac{\partial f_{I_s}(\btheta; x_i, x_j, \F^{s-1})}{\partial \btheta_k}\\
    &\qquad  - 2 \sum_{i=1}^{|V|} p_{\twtheta_{s-1}}(i)\bigg(\dfrac{\partial f_{I_s}(\btheta ; x_i, x_j, \F^{s-1})}{\partial \btheta_j}\dfrac{\partial f_{I_s}(\btheta ; x_i, x_j, \F^{s-1})}{\partial \btheta_k} - 2\left(f_{I_s}(\btheta ; x_i, x_j, \F^{s-1}) - f_{I_s}(\btheta_*; x_i, x_j, \F^{s-1})\right)\cdot\\
    &\qquad\dfrac{\partial^2 f_{I_s}(\btheta ; x_i, x_j, \F^{s-1})}{\partial \btheta_j\partial\btheta_k}\bigg)\bigg).
\end{align*}
The claim of the lemma follows.
\end{proof}

\begin{lemma}
\label{lemma:matrix-conc3}
Let $\wLcal_t(\btheta_*) = \frac{1}{t}\sum_{s=1}^t \ell_{s}(\btheta_*)$ and $\nabla^2 \Lcal_t(\btheta_*) = \frac{1}{t}\sum_{s=1}^t \nabla^2\E[\ell_{s}(\btheta_*)|\F^{s-1}]$. Then we can bound the 
\begin{align*}
    \Pb&\left(\lambda_{\max}(\nabla^2\wLcal_t(\btheta_*) - \nabla^2 \Lcal_t(\theta^*)) > \sqrt{\dfrac{8C|V|^2\eta^2\lambda^2_1c \gamma \log(dt)}{t}}\right) \leq \dfrac{2}{(dt)^{\gamma}},
\end{align*}
where $c > 0$ is a constant.
\end{lemma}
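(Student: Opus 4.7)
The plan is to recognize the difference $\nabla^2 \wLcal_t(\btheta_*) - \nabla^2 \Lcal_t(\btheta_*) = \frac{1}{t}\sum_{s=1}^{t} M_s$, where $M_s := \nabla^2 \ell_s(\btheta_*) - \E[\nabla^2 \ell_s(\btheta_*)\mid \F^{s-1}]$ is a symmetric $d\times d$ matrix-valued martingale difference sequence adapted to $\{\F^{s-1}\}$. Once this martingale structure is in hand, the natural tool is the matrix Azuma inequality (Tropp 2012, Theorem 7.1): for a symmetric matrix martingale difference sequence $(M_s)_{s=1}^t$ with $\|M_s\|_{\mathrm{op}} \le R_s$ almost surely,
\begin{equation*}
\Pb\!\left(\lambda_{\max}\!\Big(\sum_{s=1}^{t} M_s\Big) \ge u\right) \le d\,\exp\!\left(-\frac{u^2}{8\sum_{s=1}^t R_s^2}\right).
\end{equation*}
After scaling by $1/t$ and solving for $u$ at the desired tail $2/(dt)^\gamma$, one obtains a bound of the form $\sqrt{8 R^2 \gamma \log(dt)/t}$, matching the target up to identifying $R^2$ with $C|V|^2 \eta^2 \lambda_1^2$ (times the constant $c$).

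The next step is to obtain the per-increment operator-norm bound $R$. Using the explicit formula from Lemma B.3, each $M_s$ decomposes into (i) a term $-2(Y_s - f_{I_s}(\btheta_*;\cdot))\,\nabla^2 f_{I_s}(\btheta_*;\cdot)$, (ii) a rank-one outer product $2\,\nabla f_{I_s}(\btheta_*;\cdot)\nabla f_{I_s}(\btheta_*;\cdot)^\top$, and (iii) an expectation-under-$p_{\twtheta_{s-1}}$ version of that outer product. Assumption~\ref{assm:transition-matrix} yields $\|\nabla f_{I_s}\| \le \lambda_0$ and $\|\nabla^2 f_{I_s}\|_{\mathrm{op}} \le \lambda_1$, so the outer-product terms contribute at most $O(\lambda_0^2)$ to the operator norm. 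For the Hessian term, Assumption~\ref{assm:bound-ce} (the clipped cross-entropy bound $\ell_t^{\mathrm{clipped}} \le C|V|^2(Y_t - f_{I_t})^2$) together with clipping by $\epsilon$ forces $|Y_s - f_{I_s}| \le \sqrt{\epsilon/(C|V|^2)} \cdot $ (bounded factor) — more usefully, since both $Y_s$ and $f_{I_s}$ are probabilities, we can bound $|Y_s - f_{I_s}|$ by a constant multiple of $\sqrt{C}|V|\,\eta$ after accounting for the importance-weighted sampling under $p_{\twtheta_{s-1}}$. Combining the three contributions yields $\|M_s\|_{\mathrm{op}} \le R$ with $R^2 \le C|V|^2 \eta^2 \lambda_1^2$ (absorbing the $\lambda_0^2$ term into this dominant scale, which is justified since $\lambda_1$ bounds the second-order behavior that drives the concentration in this regime).

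Plugging this bound into matrix Azuma after rescaling $\sum_{s=1}^t M_s$ by $1/t$ gives
\begin{equation*}
\Pb\!\left(\lambda_{\max}(\nabla^2\wLcal_t(\btheta_*) - \nabla^2\Lcal_t(\btheta_*)) \ge u\right)
\le d\,\exp\!\left(-\frac{t u^2}{8 C|V|^2 \eta^2 \lambda_1^2}\right).
\end{equation*}
Setting $u = \sqrt{8C|V|^2 \eta^2 \lambda_1^2\, c\gamma \log(dt)/t}$, the right-hand side becomes $d\cdot(dt)^{-c\gamma}$; choosing $c$ so that $d\cdot(dt)^{-c\gamma} \le 2/(dt)^{\gamma}$ (e.g.\ any $c$ with $c\gamma \ge \gamma + 1$ suffices for $d \ge 2$, $t \ge 1$) gives the claimed tail bound.

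The main obstacle I anticipate is the careful bookkeeping of the per-step operator-norm bound $R$: specifically, justifying how the factor $|V|^2 \eta^2 \lambda_1^2$ arises from the terms of Lemma B.3 under Assumptions~\ref{assm:transition-matrix} and~\ref{assm:bound-ce}. Assumption~\ref{assm:bound-ce} bounds the clipped loss by a squared residual, which is the wrong direction to directly bound $|Y_s - f_{I_s}|$; extracting a usable pointwise bound requires invoking the clipping level $\epsilon$ together with the probability-simplex constraint on $f_{I_s}$ and using the sampling distribution $p_{\twtheta_{s-1}}$ (through $\eta$, which plays the role of the inverse-propensity / importance-weight bound). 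Once that bookkeeping is settled, the remainder is a mechanical application of matrix Azuma.
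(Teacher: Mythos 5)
Your overall architecture --- writing $\nabla^2\wLcal_t(\btheta_*)-\nabla^2\Lcal_t(\btheta_*)=\frac{1}{t}\sum_s M_s$ with $M_s$ a symmetric matrix martingale difference sequence and concentrating it --- matches the paper's, but the tool you choose is genuinely different and it is where the argument breaks. The paper never applies a matrix concentration inequality to the full increment: it uses its Lemma~\ref{lemma:support-lemma3} to split $M_s$ into a scalar-weighted Hessian piece $-2(Y_s-f_{I_s})\nabla^2 f_{I_s}$ and an outer-product difference piece $\mathbf{V}_s$, bounds $\lambda_{\max}$ of each piece separately, and then runs two \emph{scalar} Chernoff/MGF arguments conditioned on $\twtheta_{s-1}$, in which $\eta$ enters only as a conditional sub-Gaussian proxy via $\E[e^{-\lambda Q_t}\mid\twtheta_{t-1}]\le e^{2\lambda^2\lambda_1^2\eta^2}$. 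That route never requires an almost-sure bound on $|Y_s-f_{I_s}|$. Matrix Azuma, by contrast, requires exactly such a bound: a deterministic $R_s$ with $\|M_s\|_{\mathrm{op}}\le R_s$ almost surely.

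The gap is therefore in your second step, and you have correctly located it yourself but not closed it. Assumption~\ref{assm:bound-ce} bounds the clipped loss \emph{above} by $C|V|^2(Y_t-f_{I_t})^2$; it gives no upper bound on $|Y_t-f_{I_t}|$, and the clipping level $\epsilon$ clips the loss, not the residual. More importantly, $\eta$ in this paper plays the role of a conditional sub-Gaussian parameter of the noise $Y_s-f_{I_s}$ (that is the only way it is used in the paper's MGF bounds), so the identification $R^2\le C|V|^2\eta^2\lambda_1^2$ conflates a variance proxy with an almost-sure bound --- these are not interchangeable, and no assumption in the paper supplies the latter. "Absorbing the $\lambda_0^2$ term into this dominant scale" is likewise an assertion, not an argument. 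If you are willing to add the assumption that $Y_s$ and $f_{I_s}$ lie in $[0,1]$ (so $|Y_s-f_{I_s}|\le 1$ a.s.), then matrix Azuma does close the proof cleanly with $R=2\lambda_1+4\lambda_0^2$ and a single application in place of the paper's two Chernoff bounds plus union bound --- arguably a tidier derivation --- but the resulting constant is $(2\lambda_1+4\lambda_0^2)^2$ rather than the claimed $C|V|^2\eta^2\lambda_1^2$, and recovering the stated $\eta$-dependence is not possible along your route without redefining $\eta$. As written, the per-increment bound is the missing step, and it is not a bookkeeping issue: it is the hypothesis your chosen inequality needs and the paper's assumptions do not provide.
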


\begin{proof}
This lemma is different than \citet{frostig2015competing, mukherjee2022chernoff} as it requires a different concentration bound to take into account the squared loss \Cref{assm:bound-ce} and the vocabulary size.
Recall that $ \wLcal_t(\btheta_*) = \frac{1}{t}\sum_{s=1}^t \ell_{s}(\btheta_*)$ and $\nabla^2 \Lcal_s(\theta^*) = \nabla^2\E[\ell_{s}(\btheta_*)|\F^{s-1}]$. We define $\nabla^2 \Lcal_t(\btheta_*) = \frac{1}{t}\sum_{s=1}^t\nabla^2\E[\ell_{s}(\btheta_*)|\F^{s-1}]$. Denote,
    $\mathbf{V}_s = 2\nabla_{\btheta = \btheta_*}f_{I_s}(\btheta; x_i, x_j, \F^{s-1})\nabla_{\btheta = \btheta_*}f_{I_s}(\btheta; x_i, x_j, \F^{s-1})^\top - 2\sum_{i=1}^{|V|} p_{\twtheta_{s-1}}(i)\nabla_{\btheta = \btheta_*}f_{I_s}(\btheta ; x_i, x_j, \F^{s-1})\nabla_{\btheta = \btheta_*}f_{I_s}(\btheta ; x_i, x_j, \F^{s-1})^\top$.
Then we can show that,
\begin{align}
    &\Pb\left(\lambda_{\max}(\nabla^2\wLcal_t(\btheta_*) - \nabla^2 \Lcal_t(\theta^*)) > \sqrt{\dfrac{8C^2|V|^4\eta^2\lambda^2_1c\gamma\log(dt)}{t}}\right)\nonumber\\
    &= \Pb\left(\lambda_{\max}\left(\nabla^2_{\btheta=\btheta_*}\frac{1}{t}\sum_{s=1}^t \ell_{s}(\btheta) - \frac{1}{t}\sum_{s=1}^t\nabla^2_{\btheta=\btheta_*} \E[\ell_{s}(\btheta)|\F^{s-1}]\right) > \sqrt{\dfrac{8C^2|V|^4\eta^2\lambda^2_1c\gamma\log(dt)}{t}}\right)\nonumber\\
    &= \Pb\left(\lambda_{\max}\left(\nabla^2_{\btheta=\btheta_*}\frac{1}{t}\sum_{s=1}^t\left( \ell_{s}(\btheta) - \nabla^2_{\btheta=\btheta_*} \E[\ell_{s}(\btheta)|\F^{s-1}]\right)\right) > \sqrt{\dfrac{8C^2|V|^4\eta^2\lambda^2_1c\gamma\log(dt)}{t}}\right)\nonumber\\
    &\overset{(a)}{\leq} \Pb\left(\lambda_{\max}\left(\frac{C|V|^2}{t}\sum_{s=1}^t\left(Y_s - f_{I_s}(\btheta_*; x_i, x_j, \F^{s-1})\right)\nabla^2_{\btheta=\btheta_*}f_{I_s}(\btheta_*; x_i, x_j, \F^{s-1})\right.\right. \nonumber\\
    &\qquad\left.\left. + \frac{C|V|^2}{t}\sum_{s=1}^t \mathbf{V}_s\right) > \sqrt{\dfrac{8C^2|V|^4\eta^2\lambda^2_1c\gamma\log(dt)}{t}}\right)\nonumber\\
    &\leq \Pb\left(\lambda_{\max}\left(\frac{1}{t}\sum_{s=1}^t-2\left(Y_s - f_{I_s}(\btheta_*; x_i, x_j, \F^{s-1})\right)\nabla^2_{\btheta=\btheta_*}f_{I_s}(\btheta_*; x_i, x_j, \F^{s-1})\right) > \frac{1}{2}\sqrt{\dfrac{8\eta^2\lambda^2_1c\gamma\log(dt)}{t}}\right)\nonumber\\
    &\qquad + \Pb\left( \lambda_{\max}\left(\frac{1}{t}\sum_{s=1}^t \mathbf{V}_s\right) > \frac{1}{2}\sqrt{\dfrac{8\eta^2\lambda^2_1c\gamma\log(dt)}{t}}\right)  \nonumber\\
    &\overset{(b)}{\leq} \Pb\left(\frac{1}{t}\sum_{s=1}^t-2\left(Y_s - f_{I_s}(\btheta_*; x_i, x_j, \F^{s-1})\right)\lambda_{\max}\left(\nabla^2_{\btheta=\btheta_*}f_{I_s}(\btheta_*; x_i, x_j, \F^{s-1})\right) > \frac{1}{2}\sqrt{\dfrac{8\eta^2\lambda^2_1c\gamma\log(dt)}{t}}\right)\nonumber\\
    &\qquad + \Pb\left( \frac{1}{t}\sum_{s=1}^t \lambda_{\max}\left(\mathbf{V}_s\right) > \frac{1}{2}\sqrt{\dfrac{8\eta^2\lambda^2_1c\gamma\log(dt)}{t}}\right)\nonumber\\
    &\overset{(c)}{\leq} 2\exp\left(- \dfrac{t^2 8\eta^2\lambda_1^2c\gamma\log(dt)}{4t}\cdot\dfrac{1}{2tc\eta^2\lambda_1^2}\right) \overset{(d)}{\leq} 2\left(\dfrac{1}{dt}\right)^{\gamma}.
    \label{eq:regression-conc}
\end{align}
where, $(a)$ follows from substituting the value of $\nabla^2_{\btheta=\btheta_*} \ell_{s}(\btheta) - \nabla^2_{\btheta=\btheta_*} \E[\ell_{s}(\btheta)|\F^{s-1}]$ from \Cref{lemma:support-lemma3}, and $(b)$ follows by triangle inequality, $(c)$ follows by using two concentration inequalities stated below, and $(d)$ follows by simplifying the equations. 


Denote $Q_s = -2\left(Y_s - f_{I_s}(\btheta_*; x_i, x_j, \F^{s-1})\right)\lambda_{\max}\left(\nabla^2_{\btheta=\btheta_*}f_{I_s}(\btheta_*; x_i, x_j, \F^{s-1})\right)$. Also note that $\lambda_{\max}\left(\nabla^2_{\btheta=\btheta_*}f_{I_s}(\btheta_*; x_i, x_j, \F^{s-1})\right) \leq \lambda_1$ for all time $s$ using \Cref{assm:thm}.
\begin{align*} 
\Pb(\sum_{s=1}^t-2&\left(Y_s - f_{I_s}(\btheta_*; x_i, x_j, \F^{s-1})\right)\lambda_{\max}\left(\nabla^2_{\btheta=\btheta_*}f_{I_s}(\btheta_*; x_i, x_j, \F^{s-1})\right) \geq  \epsilon) =\Pb\left(-\sum_{s=1}^{t} Q_{s} \geq \epsilon\right) \\
&=\Pb\left(e^{-\lambda \sum_{s=1}^{t} Q_{s}} \geq e^{\lambda \epsilon}\right) \overset{(a)}{\leq} e^{-\lambda \epsilon} \E\left[e^{-\lambda \sum_{s=1}^{t} Q_{s}}\right] 
=  e^{-\lambda \epsilon} \E\left[\E\left[e^{-\lambda \sum_{s=1}^{t} Q_{s}}\big|\twtheta_{t-1}\right] \right]\\
&\overset{(b)}{=} e^{-\lambda \epsilon} \E\left[\E\left[e^{-\lambda  Q_{t}}|\twtheta_{t-1}\right]\E\left[e^{-\lambda \sum_{s=1}^{t-1} Q_{s}} \big|\twtheta_{t-1}\right]  \right]\\
&\leq e^{-\lambda \epsilon} \E\left[\exp\left(2\lambda^2\lambda_1^2\eta^2\right)\E\left[e^{-\lambda \sum_{s=1}^{t-1} Q_{s}}\big |\twtheta_{t-1}\right]  \right]\\
& \overset{}{=} e^{-\lambda \epsilon} e^{2\lambda^{2} \eta^{2}\lambda_1^2} \mathbb{E}\left[e^{-\lambda \sum_{s=1}^{t-1} Q_{s}}\right] \\ 
& \vdots \\ 
& \overset{(c)}{\leq} e^{-\lambda \epsilon} e^{2\lambda^{2} t \eta^{2}\lambda^2_1} 
\overset{(d)}{\leq} \exp\left(-\dfrac{2\epsilon^2}{t\eta^2\lambda_1^2}\right).
\end{align*}
where $(a)$ follows by Markov's inequality, $(b)$ follows as $Q_s$ is conditionally independent given $\twtheta_{s-1}$, $(c)$ follows by unpacking the term for $t$ times and $(d)$  follows by taking $\lambda= \epsilon / 4t\lambda_1^2\eta^2$ where $\lambda_1$ is defined in \Cref{assm:transition-matrix}. Next we bound the second term of \eqref{eq:regression-conc} below.

\begin{align*} 
\Pb(\sum_{s=1}^t \lambda_{\max}\left(\mathbf{V}_s\right) \geq  \epsilon) &=\Pb\left(\lambda\sum_{s=1}^{t} \lambda_{\max}\left(\mathbf{V}_s\right) \geq \lambda\epsilon\right) 
=\Pb\left(e^{\lambda \sum_{s=1}^{t} \lambda_{\max}\left(\mathbf{V}_s\right)} \geq e^{\lambda \epsilon}\right) \overset{(a)}{\leq} e^{-\lambda \epsilon} \E\left[e^{\lambda \sum_{s=1}^{t} \lambda_{\max}\left(\mathbf{V}_s\right)}\right] \\
&=  e^{-\lambda \epsilon} \E\left[\E\left[e^{\lambda \sum_{s=1}^{t} \lambda_{\max}\left(\mathbf{V}_s\right)}\big|\twtheta_{t-1}\right] \right]\\
&\overset{(b)}{=} e^{-\lambda \epsilon} \E\left[\E\left[e^{\lambda  \lambda_{\max}(\mathbf{V}_{t})}|\twtheta_{t-1}\right]\E\left[e^{\lambda \sum_{s=1}^{t-1} \lambda_{\max}\left(\mathbf{V}_s\right)} \big|\twtheta_{t-1}\right]  \right]\\
&\overset{(c)}{\leq} e^{-\lambda \epsilon} \E\left[\exp\left(2c\lambda^2\lambda^2_1\eta^2\right)\E\left[e^{\lambda \sum_{s=1}^{t-1} \lambda_{\max}\left(\mathbf{V}_s\right)}\big |\twtheta_{t-1}\right]  \right]\\
& \overset{}{=} e^{-\lambda \epsilon} e^{2c\lambda^{2} \eta^{2}\lambda_1^2} \mathbb{E}\left[e^{\lambda \sum_{s=1}^{t-1} \lambda_{\max}\left(\mathbf{V}_s\right)}\right] \\ 
& \vdots \\ 
& \overset{(d)}{\leq} e^{-\lambda \epsilon} e^{2c\lambda^{2} t \eta^{2}\lambda^2_1} \overset{(e)}{\leq} \exp\left(-\dfrac{2\epsilon^2}{tc\eta^2\lambda_1^2}\right)
\end{align*}

where $(a)$ follows by Markov's inequality, $(b)$ follows as $\lambda_{\max}(\mathbf{V}_s)$ is conditionally independent given $\twtheta_{s-1}$. In the inequality $(c)$ using the always valid upper bound of $2\lambda_1$, we have that $\E[\lambda_{\max}(\mathbf{V}_t)] \leq 2\lambda_1$. So the term in inequality $(c)$ will become 
$e^{-\lambda \epsilon} e^{2\lambda^2 t\eta^2 \lambda_1^t + 4t\lambda \lambda_1}$. Hence, we can upper bound the inequality $(c)$ by a constant $c > 0$ such that we have $\E[e^{\lambda \lambda_{\max}(V_t)} \mid \twtheta_{t-1}] \leq e^{2\lambda^2\lambda_1^2\eta^2}e^{2\lambda \times 2\lambda_1} = \exp(2\lambda^2\lambda_1^2\eta^2 + 4 \lambda \lambda_1) \leq  \exp(2c\lambda^2\lambda_1^2\eta^2)$. The inequality $(d)$ follows by unpacking the term for $t$ times and $(e)$  follows by taking $\lambda= \epsilon / 4tc\lambda_1^2\eta^2$ and $\lambda_1$ defined in \Cref{assm:transition-matrix}.
\end{proof}

\begin{lemma}
\label{lemma:inequality}
Let $\twtheta_t - \btheta_* = \left(\nabla^2 \wLcal_t(\ttheta_t)\right)^{-1}\nabla \wLcal_t(\btheta_*)$ where $\ttheta_t$ is between $\twtheta_{t}$ and $\btheta_{*}$. Then we can show that
\begin{align*}
    \left\|\twtheta_{t}-\btheta_{*}\right\|_{\nabla^{2} \Lcal_{t}\left(\btheta_{*}\right)} \leq \left\|\left(\nabla^{2} \Lcal_{t}\left(\btheta_{*}\right)\right)^{1 / 2}\left(\nabla^{2} \wLcal_{t}(\ttheta_{t})\right)^{-1}\left(\nabla^{2} \Lcal_{t}\left(\btheta_{*}\right)\right)^{1 / 2}\right\|\left\|\nabla \wLcal_{t}\left(\btheta_{*}\right)\right\|_{\left(\nabla^{2} \Lcal_{t}\left(\btheta_{*}\right)\right)^{-1}} .
\end{align*}
\end{lemma}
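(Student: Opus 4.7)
The statement is a purely algebraic inequality about weighted norms, so the plan is to work entirely at the level of the quadratic form induced by $A := \nabla^2 \Lcal_t(\btheta_*)$. Set $H := \nabla^2 \wLcal_t(\ttheta_t)$ and $g := \nabla \wLcal_t(\btheta_*)$, so the hypothesis reads $\twtheta_t - \btheta_* = H^{-1} g$. The goal is to bound $\|H^{-1} g\|_A$ by the operator norm $\|A^{1/2} H^{-1} A^{1/2}\|$ times $\|g\|_{A^{-1}}$.

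The first step is to rewrite the $A$-norm as a standard Euclidean norm after a change of variables: by definition
\[
\|H^{-1} g\|_A = \sqrt{g^\top H^{-1} A H^{-1} g} = \|A^{1/2} H^{-1} g\|_2.
\]
Next, I insert the identity $I = A^{1/2} A^{-1/2}$ between $H^{-1}$ and $g$, which gives
\[
A^{1/2} H^{-1} g = \bigl(A^{1/2} H^{-1} A^{1/2}\bigr)\bigl(A^{-1/2} g\bigr).
\]
This is the key algebraic trick: it isolates the symmetric matrix $A^{1/2} H^{-1} A^{1/2}$ (whose operator norm appears in the target bound) acting on the vector $A^{-1/2} g$ (whose Euclidean norm is precisely $\|g\|_{A^{-1}}$).

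Applying the sub-multiplicativity of the Euclidean operator norm then yields
\[
\|A^{1/2} H^{-1} g\|_2 \;\leq\; \|A^{1/2} H^{-1} A^{1/2}\|_{\mathrm{op}} \cdot \|A^{-1/2} g\|_2 \;=\; \|A^{1/2} H^{-1} A^{1/2}\|_{\mathrm{op}} \cdot \|g\|_{A^{-1}},
\]
which is exactly the claimed inequality once we substitute back the definitions of $A$, $H$, and $g$.

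\textbf{Main obstacle.} There is essentially no obstacle here beyond notational care: the argument is a one-line application of the operator-norm bound after the identity-insertion trick, and it does not use convexity, smoothness, or any of Assumption~\ref{assm:thm}. The only subtlety is that $A = \nabla^2 \Lcal_t(\btheta_*)$ must be positive definite for $A^{1/2}$ and $A^{-1/2}$ to be well defined; this is guaranteed by Assumption~\ref{assm:thm}(3c). The matrix $H = \nabla^2 \wLcal_t(\ttheta_t)$ need not be symmetrized since $H^{-1}$ is used directly, but one should note that the displayed factor $A^{1/2} H^{-1} A^{1/2}$ is not in general symmetric unless $H$ is, and the operator norm is still well defined.
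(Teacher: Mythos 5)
Your proposal is correct and follows essentially the same route as the paper's proof: both substitute $\twtheta_t-\btheta_* = (\nabla^2\wLcal_t(\ttheta_t))^{-1}\nabla\wLcal_t(\btheta_*)$ into the quadratic form, insert $\left(\nabla^2\Lcal_t(\btheta_*)\right)^{1/2}\left(\nabla^2\Lcal_t(\btheta_*)\right)^{-1/2}$, and bound the isolated factor by its operator norm times $\left\|\nabla\wLcal_t(\btheta_*)\right\|_{(\nabla^2\Lcal_t(\btheta_*))^{-1}}$. The only cosmetic difference is that the paper labels this final step ``Cauchy--Schwarz'' while you correctly identify it as the operator-norm (sub-multiplicativity) bound.
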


\begin{proof}
We begin with the definition of $\left\|\twtheta_{t}-\btheta_{*}\right\|_{\nabla^{2} \Lcal_{t}\left(\btheta_{*}\right)}$ as follows:
\begin{align*}
\left\|\twtheta_{t}-\btheta_{*}\right\|_{\nabla^{2} \Lcal_{t}\left(\btheta_{*}\right)} &\overset{(a)}{=} \sqrt{(\twtheta_{t}-\btheta_{*})^T\nabla^{2} \Lcal_{t}\left(\btheta_{*}\right)(\twtheta_{t}-\btheta_{*})}\\
&\overset{(b)}{=} \sqrt{\left(\left(\nabla^{2} \wLcal_{t}(\ttheta_{t})\right)^{-1} \nabla \wLcal_{t}\left(\btheta_{*}\right)\right)^T\nabla^{2} \Lcal_{t}\left(\btheta_{*}\right)\left(\left(\nabla^{2} \wLcal_{t}(\ttheta_{t})\right)^{-1} \nabla \wLcal_{t}\left(\btheta_{*}\right)\right)}\\
&\overset{(c)}{\leq}  \left\|\nabla^2 \Lcal_{t}\left(\btheta_{*}\right)^{1/2} \left(\nabla^{2} \wLcal_{t}(\ttheta_{t})\right)^{-1}\nabla^2 \Lcal_{t}\left(\btheta_{*}\right)^{1/2}\right\| \sqrt{\left(\nabla^{} \wLcal_{t}\left(\btheta_{*}\right)^T\left(\nabla^{2} \Lcal_{t}(\btheta_*)\right)^{-1} \nabla \wLcal_{t}\left(\btheta_{*}\right)\right)}\\
& = \left\|\left(\nabla^{2} \Lcal_{t}\left(\btheta_{*}\right)\right)^{1 / 2}\left(\nabla^{2} \wLcal_{t}(\ttheta_{t})\right)^{-1}\left(\nabla^{2} \Lcal_{t}\left(\btheta_{*}\right)\right)^{1 / 2}\right\|\left\|\nabla \wLcal_{t}\left(\btheta_{*}\right)\right\|_{\left(\nabla^{2} \Lcal_{t}\left(\btheta_{*}\right)\right)^{-1}} .
\end{align*}
where, $(a)$ follows as $\|x\|_{M} = \sqrt{x^T M x}$, $(b)$ follows as $\|\twtheta_t - \btheta_*\|_{\nabla^2 \Lcal_t(\btheta_*)} = \left(\nabla^2 \wLcal_t(\ttheta)\right)^{-1}\nabla \wLcal_t(\btheta_*)$, and $(c)$ follows from Cauchy Schwarz inequality.
The claim of the lemma follows.
\end{proof}

\begin{remark}
\label{app:remark}
The proof of \Cref{thm:main} consists of several steps. In the first step we relate $\nabla^2 \wLcal_t(\btheta)$ to $\nabla^2\Lcal_t(\btheta_*)$ for any $\btheta$ in a ball $\mathcal{B}$ around $\btheta_*$. The ball $\mathcal{B}$ is assumed in \Cref{assm:thm} to be a neighborhood where $\nabla^2 \ell_s(\btheta)$ satisfies a Lipschitz property. \Cref{assm:thm} in \Cref{app:theory} are standard and have also been made by \citet{frostig2015competing, chaudhuri2015convergence, mukherjee2022chernoff}. 
Using \Cref{assm:transition-matrix} and \Cref{assm:thm}, we can show that for a large enough sequences of tokens $t$ stated in \Cref{thm:main} we have the following: (1) $\nabla^2 \Lcal_t(\btheta_*)$ lies between in the positive semidefinite order by scaled multiples of $\nabla^2 \wLcal_t(\btheta)$ for any $\btheta \in \mathcal{B}$, and (2) the empirical error minimizing $\twtheta_t$ is in the ball $\mathcal{B}$ with probability $1 - 1/t^\gamma$, which is the good event $\mathcal{E}$. Then we use a Taylor series expansion around $\twtheta_t$ and using the fact that $\nabla \wLcal_t(\wtheta(t)) = 0$ along with the relation between $\nabla^2 \wLcal_t(\btheta)$ and $\nabla^2 \Lcal_t(\btheta_*)$, we can obtain an upper bound to $\lVert \wtheta(t) - \btheta_*\rVert_{\nabla^2 \Lcal_t(\btheta_*)}$ in terms of $\lVert \nabla \wLcal_t(\btheta_*) \rVert_{(\nabla^2 \Lcal_t(\btheta_*))^{-1}}$ that can be shown to be decreasing with $t$. 
Further, $\lVert \wtheta(t) - \btheta_*\rVert_{\nabla^2 \Lcal_t(\btheta_*)}$ can also be used to obtain an upper bound to $\Lcal_t(\wtheta(t)) - \Lcal_t(\btheta_*)$ using a Taylor series expansion. 
Finally we can bound $\E[\Lcal_{t}(\wtheta_{t})-\Lcal_{t}(\btheta^{*})] =\E[(\Lcal_{t}(\wtheta_{t})-\Lcal_{t}(\btheta^{*})) I(\mathcal{E})]+\E[(\Lcal_{t}(\wtheta_{t})-\Lcal_{t}(\btheta^{*})) I(\mathcal{E}^\complement)]$ where $I(\cdot)$ is the indicator. Since $\Pb(\mathcal{E}^\complement) \leq 1/t^\gamma$, the second term can be bounded as $\max_{\btheta \in \bTheta}\left(\Lcal_{t}(\btheta)-\Lcal_{t}\left(\btheta^{*}\right)\right)/t^{\gamma}$, while the first term simplifies to $(1 + \rho_t)\sigma_t^2/t$. 
\end{remark}

\begin{customtheorem}{1}\textbf{(Restatement of main theorem)}
Suppose $\ell_{1}(\btheta), \ell_{2}(\btheta), \cdots, \ell_{t}(\btheta): \mathbb{R}^{|V|} \rightarrow \mathbb{R}$ are loss functions from a distribution that satisfies Assumptions \ref{assm:transition-matrix} , \ref{assm:bound-ce}, and \ref{assm:thm}. Define 
    $\Lcal_t(\btheta) = \frac{1}{t}\sum_{s=1}^t\E_{x_s\sim \mathbf{p}_{\twtheta_{s-1}}}[\ell_s(\btheta)|\F^{s-1}]$
where, $\twtheta_t =\argmin_{\btheta \in \bTheta} \sum_{s = 1}^t \ell_{s}(\btheta)$. If $t$ is large enough such that $ \frac{\gamma\log(dt)}{t}\leq c^{\prime} \min \left\{\frac{1}{C_{1}C_{2} |V|^4 }, \frac{\max\limits_{\btheta \in \bTheta}\left(\!\Lcal_{t}(\btheta)\!-\!\Lcal_{t}\left(\btheta_{*}\!\right)\right)}{C_{2}}\right\}$
then for a constant $\gamma \geq 2$, universal constants $C_1,C_2,c'$,  we can show that 
\begin{align*}
\left(1-\rho_{t}\right) \frac{\sigma_t^2}{t}- \frac{C_1^2}{t^{\gamma / 2}} 
&\leq \E\left[\Lcal_t(\twtheta_t)-\Lcal_t\left(\btheta_{*}\right)\right] \leq \left(1+\rho_{t}\right) \frac{\sigma_t^2}{t}\!+\!\frac{\max\limits_{\btheta \in \bTheta}\left(\!\Lcal_{t}(\btheta)\!-\!\Lcal_{t}\left(\btheta_{*}\!\right)\right)}{t^{\gamma}},
\end{align*}
where 
$\sigma^{2}_t \coloneqq \E_{}\left[\frac{1}{2}\left\|\nabla \wLcal_{t}\left(\btheta_{*}\right)\right\|_{\left(\nabla^{2} \Lcal_t\left(\btheta_{*}\right)\right)^{-1}}^{2}\right]$, 
and $\rho_t \coloneqq \left(C_1C_2 + 2\eta^2\lambda_1^2\right)\sqrt{\frac{\gamma\log(dt)}{t}}$.
\end{customtheorem}

\begin{proof}
\textbf{Step 1:} We first bound the $\left\|\nabla^{2} \wLcal_{t}(\btheta)-\nabla^{2} \Lcal_t\left(\btheta_{*}\right)\right\|_{*}$ as follows
\begin{align}
\left\|\nabla^{2} \wLcal_{t}(\btheta)-\nabla^{2} \Lcal_t\left(\btheta_{*}\right)\right\|_{*} & \overset{(a)}{\leq}\left\|\nabla^{2} \wLcal_{t}(\btheta)-\nabla^{2} \wLcal_{t}\left(\btheta_{*}\right)\right\|_{*}+\left\|\nabla^{2} \wLcal_{t}\left(\btheta_{*}\right)-\nabla^{2} \Lcal_t\left(\btheta_{*}\right)\right\|_{*} \nonumber\\
& \overset{(b)}{\leq} C_{1}\left\|\btheta-\btheta_{*}\right\|_{\nabla^{2} \Lcal_t\left(\btheta_{*}\right)}+ \sqrt{\dfrac{8C^2|V|^4\eta^2\lambda_1^2c \gamma\log(dt)}{t}}\label{eq:1}
\end{align}
where, $(a)$ follows from triangle inequality, and $(b)$ is due to \Cref{assm:thm}.3.d and \Cref{lemma:matrix-conc3}.

\textbf{Step 2 (Approximation of $\nabla^{2} \Lcal_t\left(\btheta_{*}\right)$):} By choosing a sufficiently smaller ball $\mathcal{B}_{1}$ of radius of $\min \left\{1 /\left(10 C_{1}\right), \right.$ diameter $\left.(\mathcal{B})\right\}$ ), the first term in \eqref{eq:1} can be made small for $\btheta \in \mathcal{B}_{1}$. Also, for sufficiently large $t$, the second term in \eqref{eq:1} can be made arbitrarily small (smaller than $1 / 10$ ), which occurs if $\sqrt{\frac{\gamma \log (dt)}{t}} \leq \frac{c^{\prime}}{\sqrt{2C^2|V|^4\eta^2\lambda_1^2}}$. Hence for large $t$ and $\btheta\in \mathcal{B}_1$ we have 
\begin{align}
    \frac{1}{2} \nabla^{2} \wLcal_{t}(\btheta) \preceq \nabla^{2} \Lcal_t\left(\btheta_{*}\right) \preceq 2 \nabla^{2} \wLcal_{t}(\btheta) \label{eq:relation}
\end{align}

\textbf{Step 3 (Show $\twtheta_t$ in $\mathcal{B}_1$):} Fix a $\ttheta$ between $\btheta$ and $\btheta_*$ in $\mathcal{B}_1$. Apply Taylor's series approximation
\begin{align*}
    \wLcal_{t}(\btheta)=\wLcal_{t}\left(\btheta_{*}\right)+\nabla \wLcal_{t}\left(\btheta_{*}\right)^{\top}\left(\btheta-\btheta_{*}\right)+\frac{1}{2}\left(\btheta-\btheta_{*}\right)^{\top} \nabla^{2} \wLcal_{t}(\ttheta)\left(\btheta-\btheta_{*}\right)
\end{align*}
We can further reduce this as follows:
\begin{align}
\wLcal_{t}(\btheta)-\wLcal_{t}\left(\btheta_{*}\right) &\overset{(a)}{=}\nabla \wLcal_{t}\left(\btheta_{*}\right)^{\top}\left(\btheta-\btheta_{*}\right)+\frac{1}{2}\left\|\btheta-\btheta_{*}\right\|_{\nabla^{2} \wLcal_t(\ttheta)}^{2} \nonumber\\
& \overset{(b)}{\geq} \nabla \wLcal_{t}\left(\btheta_{*}\right)^{\top}\left(\btheta-\btheta_{*}\right)+\frac{1}{4}\left\|\btheta-\btheta_{*}\right\|_{\nabla^{2} \Lcal_{t}\left(\btheta_{*}\right)}^{2} \nonumber\\
&\geq -\left\|\btheta-\btheta_{*}\right\|_{\nabla^{2} \Lcal_{t}\left(\btheta_{*}\right)}\left\|\nabla \wLcal_{t}\left(\btheta_{*}\right)\right\|_{\left(\nabla^{2} \Lcal_{t}\left(\btheta_{*}\right)\right)^{-1}} + \frac{1}{4}\left(\left\|\btheta-\btheta_{*}\right\|_{\nabla^{2} \Lcal_{t}\left(\btheta_{*}\right)}\right)^{\top}\left(\left\|\btheta-\btheta_{*}\right\|_{\nabla^{2} \Lcal_{t}\left(\btheta_{*}\right)}\right)\nonumber\\
& =\left\|\btheta-\btheta_{*}\right\|_{\nabla^{2} \Lcal_{t}\left(\btheta_{*}\right)}\left(-\left\|\nabla \wLcal_{t}\left(\btheta_{*}\right)\right\|_{\left(\nabla^{2} \Lcal_{t}\left(\btheta_{*}\right)\right)^{-1}}+\frac{1}{4}\left\|\btheta-\btheta_{*}\right\|_{\nabla^{2} \Lcal_{t}\left(\btheta_{*}\right)}\right) \label{eq:2}
\end{align}
where, $(a)$ follows as $\left\|\btheta-\btheta_{*}\right\|_{\nabla^{2} \wLcal_t(\ttheta)}^{2}\coloneqq \left(\btheta-\btheta_{*}\right)^{\top} \nabla^{2} \wLcal_{t}(\ttheta)\left(\btheta-\btheta_{*}\right)$, and $(b)$ follows as $\ttheta$ is in between $\btheta$ and $\btheta_*$ and then using \eqref{eq:relation}. 
Note that in \eqref{eq:2} if the right hand side is positive for some $\btheta \in \mathcal{B}_{1}$, then $\btheta$ is not a local minimum. Also, since $\left\|\nabla \wLcal_{t}\left(\btheta_{*}\right)\right\| \rightarrow 0,$ for a sufficiently small value of $\left\|\nabla \wLcal_{t}\left(\btheta_{*}\right)\right\|,$ all points on the boundary of $\mathcal{B}_{1}$ will have values greater than that of $\btheta_{*} .$ Hence, we must have a local minimum of $\wLcal_{t}(\btheta)$ that is strictly inside $\mathcal{B}_{1}$ (for $t$ large enough). We can ensure this local minimum condition is achieved by choosing an $t$ large enough so that $\sqrt{\frac{\gamma \log (dt)}{t}} \leq c^{\prime} \min \left\{\frac{1}{C_{1}C_{2} }, \frac{\operatorname{diameter}(\mathcal{B})}{C_{2}}\right\},$ using \Cref{lemma:vector-conc} (and
our bound on the diameter of $\mathcal{B}_{1}$ ). By convexity, we have that this is the global minimum, $\twtheta_{t},$ and so $\twtheta_{t} \in \mathcal{B}_{1}$ for $t$ large enough. We will assume now that $t$ is this large from here on.

\textbf{Step 4 (Bound $\left\|\twtheta_{t}-\btheta_{*}\right\|_{\nabla^{2} \Lcal_t\left(\btheta_{*}\right)}$):} For the $\twtheta(t)$ that minimizes the sum of squared errors, $0=\nabla \wLcal_{t}(\twtheta_{t})$. Again, using Taylor's theorem if $\twtheta_{t}$ is an interior point, we have:
\begin{align}
0=\nabla \wLcal_{t}(\twtheta_{t})=\nabla \wLcal_{t}\left(\btheta_{*}\right)+\nabla^{2} \wLcal_{t}(\ttheta_{t})\left(\twtheta_{t}-\btheta_{*}\right)\label{eq:taylor}
\end{align}
for some $\ttheta_{t}$ between $\btheta_{*}$ and $\twtheta_{t}$. Now observe that $\ttheta_{t}$ is in $B_{1}$ (since, for $t$ large enough, $\twtheta_{t} \in \mathcal{B}_{1}$ ). Thus it follows from \eqref{eq:taylor} that,
\begin{align}
\twtheta_{t} - \btheta_{*}=\left(\nabla^{2} \wLcal_{t}(\ttheta_{t})\right)^{-1} \nabla \wLcal_{t}\left(\btheta_{*}\right)    \label{eq:erm}
\end{align}
where the invertibility is guaranteed by \eqref{eq:relation} and the positive definiteness of $\nabla^2 \Lcal_{t}\left(\btheta_{*}\right)$ (by \Cref{assm:thm} (3c)). We finally derive the upper bound to $\left\|\twtheta_{t}-\btheta_{*}\right\|_{\nabla^{2} \Lcal_{t}\left(\btheta_{*}\right)}$ as follows
\begin{align}
\left\|\twtheta_{t}-\btheta_{*}\right\|_{\nabla^{2} \Lcal_{t}\left(\btheta_{*}\right)} 
&\overset{(a)}{\leq} \left\|\left(\nabla^{2} \Lcal_{t}\left(\btheta_{*}\right)\right)^{1 / 2}\left(\nabla^{2} \wLcal_{t}(\ttheta_{t})\right)^{-1}\left(\nabla^{2} \Lcal_{t}\left(\btheta_{*}\right)\right)^{1 / 2}\right\|\left\|\nabla \wLcal_{t}\left(\btheta_{*}\right)\right\|_{\left(\nabla^{2} \Lcal_{t}\left(\btheta_{*}\right)\right)^{-1}} \nonumber\\
&\overset{(b)}{\leq} c C_{2} \sqrt{\frac{\gamma \log (dt)}{t}} \label{eq:3}
\end{align}
where $(a)$ follows from \Cref{lemma:inequality}, and $(b)$ from \Cref{lemma:vector-conc}, \eqref{eq:2}, and $c$ is some universal constant.

\textbf{Step 5 (Introducing $\tz$):} Fix a $\tz_t$ between $\btheta_*$ and $\twtheta_t$. Apply Taylor's series 
\begin{align}
    \Lcal_{t}(\twtheta_{t})-\Lcal_{t}\left(\btheta_{*}\right)=\frac{1}{2}\left(\twtheta_{t}-\btheta_{*}\right)^{\top} \nabla^{2} \Lcal_{t}\left(\tz_{t}\right)\left(\twtheta_{t}-\btheta_{*}\right) \label{eq:Pt-z}
\end{align}
Now note that both $\ttheta_{t}$ and $\tz_{t}$ are between $\twtheta_{t}$ and $\btheta_{*},$ which implies $\ttheta_{t} \rightarrow \btheta_{*}$ and $\tz_{t} \rightarrow \btheta_{*}$ since $\twtheta_{t} \rightarrow \btheta_{*}$. By \eqref{eq:1} and \eqref{eq:3} and applying the concentration inequalities give us
\begin{align}
&\left\|\nabla^{2} \wLcal_{t}(\ttheta_{t})-\nabla^{2} \Lcal_{t}\left(\btheta_{*}\right)\right\|_{*} \leq \rho_t \label{eq:theta-ttheta}\\
&\left\|\nabla^{2} \Lcal_{t}\left(\tz_{t}\right)-\nabla^{2} \Lcal_{t}\left(\btheta_{*}\right)\right\|_{*} \leq C_{1}\left\|\tz_{t} - \btheta_{*}\right\|_{\nabla^{2} \Lcal_{t}\left(\btheta_{*}\right)} \leq \rho_t \label{eq:theta-tz}
\end{align}
where $\rho_t=c\left(C_{1}C_{2} + 2\eta^2\lambda_1^2\right) \sqrt{\frac{\gamma \log (dt)}{t}}$.

\textbf{Step 6 (Define $\bM_{1, t}$ and $\bM_{2, t}$):} It follows from the inequality \eqref{eq:theta-ttheta} that 
\begin{align*}
&\nabla^{2} \wLcal_{t}(\ttheta_{t}) \preceq\left(1+\rho_t\right) \nabla^{2} \Lcal_{t}\left(\btheta_{*}\right)
\implies \nabla^{2} \wLcal_{t}(\ttheta_{t}) - \nabla^{2} \Lcal_{t}\left(\btheta_{*}\right) \preceq \rho_t \nabla^{2} \Lcal_{t}\left(\btheta_{*}\right) \\
&\implies \nabla^{2} \Lcal_{t}\left(\btheta_{*}\right)^{-1/2}(\wLcal_{t}(\ttheta_{t}) - \nabla^{2} \Lcal_{t}\left(\btheta_{*}\right)) \nabla^{2} \Lcal_{t}\left(\btheta_{*}\right)^{-1/2} \preceq \rho_t I
\\
&\implies \lVert \nabla^2\wLcal_{t}(\ttheta_{t}) - \nabla^{2} \Lcal_{t}\left(\btheta_{*}\right) \rVert_* \leq \rho_t.
\end{align*}
Then we can use the inequalities \eqref{eq:theta-ttheta} and \eqref{eq:theta-tz} to show that
\begin{align*}
&\left(1-\rho_t\right) \nabla^{2} \Lcal_{t}\left(\btheta_{*}\right) \preceq \nabla^{2} \wLcal_{t}(\ttheta_{t}) \preceq\left(1+\rho_t\right) \nabla^{2} \Lcal_{t}\left(\btheta_{*}\right)\\
&\left(1-\rho_t\right) \nabla^{2} \Lcal_{t}\left(\btheta_{*}\right) \preceq \nabla^{2} \Lcal_{t}\left(\tz_{t}\right) \preceq\left(1+\rho_t\right) \nabla^{2} \Lcal_{t}\left(\btheta_{*}\right).
\end{align*}
Now we define the two quantities $\bM_{1, t}$ and $\bM_{2, t}$ as follows:
\begin{align*}
\bM_{1, t} &:= \left(\nabla^{2} \Lcal_{t}\left(\btheta_{*}\right)\right)^{1 / 2}\left(\nabla^{2} \wLcal_{t}(\ttheta_{t})\right)^{-1}\left(\nabla^{2} \Lcal_{t}\left(\btheta_{*}\right)\right)^{1 / 2} \\
\bM_{2, t} &:= \left(\nabla^{2} \Lcal_{t}\left(\btheta_{*}\right)\right)^{-1 / 2} \nabla^{2} \Lcal_{t}\left(\tz_{t}\right)\left(\nabla^{2} \Lcal_{t}\left(\btheta_{*}\right)\right)^{-1 / 2}.
\end{align*}

\textbf{Step 7 (Lower bound $\Lcal_{t}(\twtheta_{t})-\Lcal_{t}\left(\btheta_{*}\right)$):} Now for the lower bound it follows from \Cref{eq:Pt-z} that
\begin{align*}
    \Lcal_{t}(\twtheta_{t})-\Lcal_{t}\left(\btheta_{*}\right) & = \frac{1}{2}\left(\twtheta_{t}-\btheta_{*}\right)^{\top} \nabla^{2} \Lcal_{t}\left(\tz_{t}\right)\left(\twtheta_{t}-\btheta_{*}\right)\\
    &= \frac{1}{2}\left(\twtheta_{t}-\btheta_{*}\right)^{\top}\nabla^2 \Lcal_t(\btheta_*)^{\frac{1}{2}}\nabla^2 \Lcal_t(\btheta_*)^{-\frac{1}{2}}\nabla^{2} \Lcal_{t}\left(\tz_{t}\right)\nabla^2\Lcal_t(\btheta_*)^{-\frac{1}{2}} \nabla^2 \Lcal_t(\btheta_*)^{\frac{1}{2}}\left(\twtheta_{t}-\btheta_{*}\right)\\
    &\overset{(a)}{=} \frac{1}{2} \mathbf{u}^T \mathbf{M}_{2,t} \mathbf{u}
\end{align*}
where, in $(a)$ we define the vector $\mathbf{u} := \left(\twtheta_{t}-\btheta_{*}\right)^{\top}\nabla^2 \Lcal_t(\btheta_*)^{\frac{1}{2}}$. Now observe from the definition of and then using the min-max theorem we can show that
\begin{align*}
\Lcal_{t}(\twtheta_{t})-\Lcal_{t}\left(\btheta_{*}\right) & \geq \frac{1}{2} \lambda_{\min }\left(\bM_{2, t}\right) \mathbf{u}^T\mathbf{u}\\
& = \frac{1}{2} \lambda_{\min }\left(\bM_{2, t}\right)\left\|\twtheta_{t}-\btheta_{*}\right\|_{\nabla^{2} \Lcal_{t}\left(\btheta_{*}\right)}^{2} \\
&\overset{}{=}\frac{1}{2} \lambda_{\min }\left(\bM_{2, t}\right)\left\|\nabla^{2} \wLcal_{t}(\ttheta_{t})\left(\twtheta_{t}-\btheta_{*}\right)\right\|_{\left(\nabla^{2} \wLcal_{t}(\ttheta_{t})\right)^{-1} \nabla^{2} \Lcal_{t}\left(\btheta_{*}\right)\left(\nabla^{2} \wLcal_{t}(\ttheta_{t})\right)^{-1}}^{2} \\
& \geq \frac{1}{2}\left(\lambda_{\min }\left(\bM_{1, t}\right)\right)^{2} \lambda_{\min }\left(\bM_{2, t}\right)\left\|\nabla^{2} \wLcal_{t}(\ttheta_{t})\left(\twtheta_{t}-\btheta_{*}\right)\right\|_{\left(\nabla^{2} \Lcal_{t}\left(\btheta_{*}\right)\right)^{-1}}^{2} \\
&\overset{(a)}{=}\frac{1}{2}\left(\lambda_{\min }\left(\bM_{1, t}\right)\right)^{2} \lambda_{\min }\left(\bM_{2, t}\right)\left\|\nabla \wLcal_{t}\left(\btheta_{*}\right)\right\|_{\left(\nabla^{2} \Lcal_{t}\left(\btheta_{*}\right)\right)^{-1}}^{2}
\end{align*}
where, in $(a)$ we use the \cref{eq:erm}.

\textbf{Step 8:} Define $I(\mathcal{E})$ as the indicator that the desired previous events hold, which we can ensure with probability greater than $1-2\left(\dfrac{1}{dt}\right)^{\gamma}$. Then we can show that:

\begin{align*}
 \E\left[\Lcal_{t}(\twtheta_{t})-\Lcal_{t}\left(\btheta_{*}\right)\right] 
\geq & \E\left[\left(\Lcal_{t}(\twtheta_{t})-\Lcal_{t}\left(\btheta_{*}\right)\right) I(\mathcal{E})\right] \\
\geq & \frac{1}{2} \E\left[\left(\lambda_{\min }\left(\bM_{1, t}\right)\right)^{2} \lambda_{\min }\left(\bM_{2, t}\right)\left\|\nabla \wLcal_{t}\left(\btheta_{*}\right)\right\|_{\left(\nabla^{2} \Lcal_{t}\left(\btheta_{*}\right)\right)^{-1}}^{2} I(\mathcal{E})\right] \\
\geq &\left(1-c^{\prime} \rho_t\right) \frac{1}{2} \E\left[\left\|\nabla \wLcal_{t}\left(\btheta_{*}\right)\right\|_{\left(\nabla^{2} \Lcal_{t}\left(\btheta_{*}\right)\right)^{-1}}^{2} I(\mathcal{E})\right] \\
=&\left(1-c^{\prime} \rho_t\right) \frac{1}{2} \E\left[\left\|\nabla \wLcal_{t}\left(\btheta_{*}\right)\right\|_{\left(\nabla^{2} \Lcal_{t}\left(\btheta_{*}\right)\right)^{-1}}^{2}(1-I(\operatorname{not} \mathcal{E}))\right] \\
\overset{(a)}{=}&\left(1-c^{\prime} \rho_t\right)\left(\sigma^{2}_t-\frac{1}{2} \E\left[\left\|\nabla \wLcal_{t}\left(\btheta_{*}\right)\right\|_{\left(\nabla^{2} \Lcal_{t}\left(\btheta_{*}\right)\right)^{-1}}^{2} I(\operatorname{not} \mathcal{E})\right]\right) \\
\geq &\left(1-c^{\prime} \rho_t\right) \sigma^{2}_t-\E\left[\left\|\nabla \wLcal_{t}\left(\btheta_{*}\right)\right\|_{\left(\nabla^{2} \Lcal_{t}\left(\btheta_{*}\right)\right)^{-1}}^{2} I(\operatorname{not} \mathcal{E})\right]
\end{align*}
where, in $(a)$ we have $\sigma^2_t:= \left\|\nabla \wLcal_{t}\left(\btheta_{*}\right)\right\|_{\left(\nabla^{2} \Lcal_{t}\left(\btheta_{*}\right)\right)^{-1}}^{2}$, and $c'$ is an universal constant.

\textbf{Step 9:} Define the random variable $Z=\left\|\nabla \wLcal_{t}\left(\btheta_{*}\right)\right\|_{\left(\nabla^{2} \Lcal_{t}\left(\btheta_{*}\right)\right)^{-1}}$. With a failure event probability of less than $2\left(\dfrac{1}{dt}\right)^{\gamma}$ for any $z_{0},$ we have:
\begin{align*}
\mathbb{E}\left[Z^{2} I(\operatorname{not} \mathcal{E})\right] &=\E\left[Z^{2} I(\operatorname{not} \mathcal{E}) I\left(Z^{2} < z_{0}\right)\right]+\E\left[Z^{2} I(\operatorname{not} \mathcal{E}) I\left(Z^{2} \geq z_{0}\right)\right] \\
& \leq z_{0} \E[I(\operatorname{not} \mathcal{E})]+\E\left[Z^{2} I\left(Z^{2} \geq z_{0}\right)\right] \\
& \leq \frac{z_{0}}{2 t^{\gamma}}+\E\left[Z^{2} \frac{Z^{2}}{z_{0}}\right] \\
& \leq \frac{z_{0}}{2 t^{\gamma}}+\frac{\E\left[Z^{4}\right]}{z_{0}} \\
& \leq \frac{\sqrt{\E\left[Z^{4}\right]}}{t^{\gamma / 2}}
\end{align*}
where $z_{0}=t^{\gamma / 2} \sqrt{\mathbb{E}\left[Z^{4}\right]}$.

\textbf{Step 10 (Upper Bound): } For an upper bound we have that:
\begin{align*}
\E\left[\Lcal_{t}(\twtheta_{t})-\Lcal_{t}\left(\btheta_{*}\right)\right] &=\E\left[\left(\Lcal_{t}(\twtheta_{t})-\Lcal_{t}\left(\btheta_{*}\right)\right) I(\mathcal{E})\right]+\E\left[\left(\Lcal_{t}(\twtheta_{t})-\Lcal_{t}\left(\btheta_{*}\right)\right) I(\operatorname{not} \mathcal{E})\right] \\
& \leq \E\left[\left(\Lcal_{t}(\twtheta_{t})-\Lcal_{t}\left(\btheta_{*}\right)\right) I(\mathcal{E})\right]+\frac{\max_{\btheta \in \bTheta}\left(\Lcal_{t}(\btheta)-\Lcal_{t}\left(\btheta_{*}\right)\right)}{t^{\gamma}}
\end{align*}
since the probability of not $\mathcal{E}$ is less than $\dfrac{1}{t^{\gamma}}$. Now for an upper bound of the first term, observe that
\begin{align*}
\E\left[\left(\Lcal_{t}(\twtheta_{t})-\Lcal_{t}\left(\btheta_{*}\right)\right) I(\mathcal{E})\right] 
\leq & \frac{1}{2} \E\left[\left(\lambda_{\max }\left(\bM_{1, t}\right)\right)^{2} \lambda_{\max }\left(\bM_{2, t}\right)\left\|\nabla \wLcal_{t}\left(\btheta_{*}\right)\right\|_{\left(\nabla^{2} \Lcal_{t}\left(\btheta_{*}\right)\right)^{-1}}^{2} I(\mathcal{E})\right] \\
\leq &\left(1+c^{\prime} \rho_t\right) \frac{1}{2} \E\left[\left\|\nabla \wLcal_{t}\left(\btheta_{*}\right)\right\|_{\left(\nabla^{2} \Lcal_{t}\left(\btheta_{*}\right)\right)^{-1}}^{2} I(\mathcal{E})\right] \\
\leq &\left(1+c^{\prime} \rho_t\right) \frac{1}{2} \E\left[\left\|\nabla \wLcal_{t}\left(\btheta_{*}\right)\right\|_{\left(\nabla^{2} \Lcal_{t}\left(\btheta_{*}\right)\right)^{-1}}^{2}\right] \\
=&\left(1+c^{\prime} \rho_t\right) \frac{\sigma^{2}_t}{t}
\end{align*}
where, $c'$ is another universal constant.
\end{proof}

\section{Experimental Details}

\subsection{Dataset Statistics}
\label{sec:data_statistics}

We provide the processed data statistics in Table~\ref{tab:dataset_statistics}.
We highlight that due to the black-box assumption of the base model, the training set is used for ablation and qualitative analysis in Section~\ref{ssec:ablation} and Section~\ref{ssec:qualitative}.

\begin{table}[h]
    \centering
    \caption{Processed Dataset Statistics. Training set is only used for ablation and qualitative analysis due to the black-box model assumption.}
    \begin{tabular}{lccc}
        \toprule
        \textbf{Dataset} & \textbf{Train} & \textbf{Validation} & \textbf{Test} \\
        \midrule
        E2E NLG & 33,525 & 4,299 & 4,693 \\
         Web NLG & 2,732 (filtered by categories) & 844 & 720 \\
        CommonGen & 1,476 (filtered for ``man'') & 2,026 & 1,992 \\
       
        Adidas & --- & 745 & 100\\
        \bottomrule
    \end{tabular}
    \label{tab:dataset_statistics}
\end{table}

\subsection{Prompts}
\label{ssec:prompts_app}

We now describe the prompts we used for the four datasets and three models.

\paragraph{E2E NLG Dataset}
\begin{itemize}[noitemsep,topsep=0pt]
    \item For the \textbf{GPT2-M} model, we use the prompt:  
    \begin{mdframed}[backgroundcolor=gray!20, linewidth=0pt]
    \texttt{Given the following aspects of a restaurant, [attributes], a natural language sentence describing the restaurant is:}
    \end{mdframed}
    
    \item For the \textbf{GPT2-XL} model, the prompt is:  
    \begin{mdframed}[backgroundcolor=gray!20, linewidth=0pt]
    \texttt{Imagine you are writing a one-sentence description for a restaurant, given the following aspects: [attributes], a human-readable natural language sentence describing the restaurant is:}
    \end{mdframed}
    
    \item For the \textbf{LLaMA-3.1-8B} model, we use:  
    \begin{mdframed}[backgroundcolor=gray!20, linewidth=0pt]
    \texttt{Please convert the following attributes into a coherent sentence. Do not provide an explanation.}
    \end{mdframed}
\end{itemize}

\paragraph{Web NLG Dataset} 
\begin{itemize}[noitemsep,topsep=0pt]
    \item For the \textbf{GPT2-M} model, we use the prompt:  
    \begin{mdframed}[backgroundcolor=gray!20, linewidth=0pt]
    \texttt{Convert the following facts into a coherent sentence: Facts: [facts] Sentence:} 
    \end{mdframed}
    
    \item For the \textbf{GPT2-XL} model, the prompt is:
    \begin{mdframed}[backgroundcolor=gray!20, linewidth=0pt]
    \texttt{You are given the following facts. Facts: [facts] A short, coherent sentence summarizing the facts is:} 
    \end{mdframed}
    
    \item For the \textbf{LLaMA-3.1-8B} model, we use:  
    \begin{mdframed}[backgroundcolor=gray!20, linewidth=0pt]
    \texttt{Do not provide an explanation or follow-up. Just convert the following facts of an entity into a coherent sentence. Facts: [facts] Sentence:}  
    \end{mdframed}
\end{itemize}

\paragraph{CommonGen Dataset} 
\begin{itemize}[noitemsep,topsep=0pt]
    \item For the \textbf{GPT2-M} and \textbf{GPT2-XL} models, we use the same prompt:  
    \begin{mdframed}[backgroundcolor=gray!20, linewidth=0pt]
    \texttt{One coherent sentence that uses all the following concepts: [concepts], is:}  
    \end{mdframed}
    
    \item For the \textbf{LLaMA-3.1-8B} model, we use:  
    \begin{mdframed}[backgroundcolor=gray!20, linewidth=0pt]
    \texttt{Please write a coherent sentence that uses all the following concepts. Concepts: [concepts] Sentence:}  
    \end{mdframed}
\end{itemize}

\paragraph{Adidas Dataset} 
\begin{itemize}[noitemsep,topsep=0pt]
    \item For the \textbf{GPT2-M} and \textbf{GPT2-XL} models, we use the same prompt:  
    \begin{mdframed}[backgroundcolor=gray!20, linewidth=0pt]
    \texttt{Given the following attributes of a product, write a description. Attributes: [attributes] Description:} 
    \end{mdframed}
    
    \item For the \textbf{LLaMA-3.1-8B} model, we use:  
    \begin{mdframed}[backgroundcolor=gray!20, linewidth=0pt]
    \texttt{Please write a description of this product given the following attributes. Attributes: [attributes] Description:}  
    \end{mdframed}
\end{itemize}

For \textbf{in-context learning}, we simply add a sentence at the beginning of the prompt before adding the samples in the prompt:  
\colorbox{gray!20}{\texttt{Below are a list of demonstrations:}}.

For the qualitative analysis on the distribution shift in Section~\ref{ssec:qualitative}, we ask GPT-4o with the following prompt:\\
For Web NLG dataset:
\begin{mdframed}[backgroundcolor=gray!20, linewidth=0pt]
\colorbox{gray!20}{\texttt{Focus on all the samples, how much percentage is related to ``Person''?}}
\end{mdframed}

For CommonGen dataset:
\begin{mdframed}[backgroundcolor=gray!20, linewidth=0pt]
\texttt{Focus on those samples whose target is related to gender, how much percentage is related to ``woman''?}
\end{mdframed}

\subsection{Metrics}
\label{ssec:metrics_app}

We report performance using seven standard metrics often used in the natural language generation tasks. These are: (a) BLEU~\cite{papineni2002bleu} (measures n-gram overlap between the generated and reference texts, emphasizing precision), (b) ROUGE-1~\cite{lin2004rouge} (computes unigram recall to measure the overlap between generated and reference texts), (c) ROUGE-2~\cite{lin2004rouge} (extends ROUGE-1 to bigrams, measuring the recall of two-word sequences), (d) ROUGE-L~\cite{lin2004automatic} (uses the longest common subsequence to evaluate recall), (e) METEOR~\cite{banerjee2005meteor} (combines unigram precision, recall, and semantic matching to assess similarity),  (f) CIDEr~\cite{vedantam2015cider} (measures consensus in n-gram usage across multiple references, with tf-idf weighting), and (g) NIST~\cite{doddington2002automatic} (similar to BLEU but weights n-grams by their informativeness, favoring less frequent and meaningful phrases).


\subsection{Performance and Efficiency Comparision with Parameter-Efficient Fine-Tuning}
\label{ssec:lora_comparison}

\new{While our work focuses on black-box LLM adaptation where model weights are inaccessible, we include a controlled comparison with Parameter-Efficient Fine-Tuning (PEFT) methods. Specifically, we implement LoRA~\citep{hu2021lora} with rank-8 matrices on the \texttt{query} and \texttt{value} projections of GPT2-XL and LLaMA-3.1-8B, and fine-tune the base models using the same task-specific data.}

\new{The performance results are shown in Table~\ref{tab:lora_vs_plugin}.
Consider GPT2-XL as a reference example, \textit{Plugin} adds a 1-layer autoregressive Transformer with 30.72M parameters, while LoRA (r=8) introduces only 2.46M trainable parameters.
However, \textit{Plugin} requires no modification of the base model and can be deployed post hoc. Despite the access advantage of LoRA, the performance gap is minimal.
As for computational efficiency, \textit{Plugin} requires 196.2B FLOPs (up to 64 decoding steps), while LoRA uses 188.8B FLOPs—a difference of less than 5\%. The gap narrows or inverts depending on model configuration. These results suggest that \textit{Plugin} offers a competitive adaptation solution even under white-box conditions, while maintaining broader applicability in black-box settings.}

\vspace{2mm}

\begin{table*}[t]
\centering
\caption{Comparison between \textit{Plugin} and PEFT (LoRA, r=8) on four datasets using GPT2-XL and LLaMA-3.1-8B as base models. We show mean and standard deviation of the metrics over five seeds.}
\label{tab:lora_vs_plugin}
\resizebox{\textwidth}{!}{
\begin{tabular}{llccccccc}
\toprule
Model & Method & BLEU & Rouge-1 & Rouge-2 & Rouge-L & METEOR & CIDEr & NIST\\ 
\midrule

\multicolumn{9}{c}{\textbf{E2E NLG}} \\
GPT2-XL & Zeroshot & 0.0562 & 0.4013 & 0.1636 & 0.2862 & 0.3697 & 0.0187 & 0.5338 \\
GPT2-XL & LoRA (r=8) & 0.2517$_{\pm0.012}$ & 0.5712$_{\pm0.010}$ & 0.3079$_{\pm0.013}$ & 0.4317$_{\pm0.011}$ & 0.5162$_{\pm0.014}$ & 0.5225$_{\pm0.012}$ & 1.2172$_{\pm0.011}$\\
GPT2-XL & {Plugin (Ours)}  & {0.2470}$_{\pm0.009}$ & {0.5536}$_{\pm0.007}$ & {0.3084}$_{\pm0.007}$ & {0.4213}$_{\pm0.008}$ & {0.5057}$_{\pm0.009}$ & {0.5455}$_{\pm0.013}$ & {1.2736}$_{\pm0.051}$ \\
LLaMA-3.1-8B & Zeroshot & 0.3226 & 0.6917 & 0.4050 & 0.5004 & 0.6041 & 0.9764 &  1.1310 \\
LLaMA-3.1-8B & LoRA (r=8) & {0.3702}$_{\pm0.016}$ & {0.7125}$_{\pm0.010}$ & {0.4236}$_{\pm0.014}$ & {0.5345}$_{\pm0.012}$ & {0.6413}$_{\pm0.017}$ & {1.1028}$_{\pm0.033}$ & {1.1827}$_{\pm0.035}$\\
LLaMA-3.1-8B & {Plugin (Ours)} & {0.3691}$_{\pm0.013}$ & {0.7113}$_{\pm0.002}$ & {0.4374}$_{\pm0.004}$ & {0.5247}$_{\pm0.002}$ & {0.6392}$_{\pm0.009}$ & {1.1441}$_{\pm0.030}$ & {1.1749}$_{\pm0.034}$\\

\midrule
\multicolumn{9}{c}{\textbf{Web NLG}} \\
GPT2-XL & Zeroshot & 0.0317 & 0.2992 & 0.1321 & 0.2417 & 0.1969 & 0.0491 & 0.1826\\
GPT2-XL & LoRA (r=8) & {0.1723}$_{\pm0.007}$ & {0.4604}$_{\pm0.010}$ & {0.2618}$_{\pm0.011}$ & {0.3628}$_{\pm0.015}$ & {0.4012}$_{\pm0.017}$ & {0.9018}$_{\pm0.028}$ & 0.2736$_{\pm0.014}$ \\
GPT2-XL & {Plugin (Ours)} & {0.1673}$_{\pm0.004}$ & {0.4616}$_{\pm0.007}$ & {0.2527}$_{\pm0.007}$ & {0.3757}$_{\pm0.008}$ & {0.3895}$_{\pm0.007}$ & {0.8987}$_{\pm0.013}$ & 0.2646$_{\pm0.003}$ \\
LLaMA-3.1-8B & Zeroshot & 0.1453 & 0.5278 & 0.3030 & 0.3982 & 0.4314 & 0.6991 & {0.2684}\\
LLaMA-3.1-8B & LoRA (r=8) & {0.2638}$_{\pm0.008}$ & {0.6238}$_{\pm0.010}$ & {0.3927}$_{\pm0.009}$ & {0.4726}$_{\pm0.009}$ & {0.5927}$_{\pm0.013}$ & {1.6421}$_{\pm0.028}$ & 0.2379$_{\pm0.008}$ \\
LLaMA-3.1-8B & {Plugin (Ours)} & {0.2542}$_{\pm0.004}$ & {0.6375}$_{\pm0.005}$ & {0.3873}$_{\pm0.005}$ & {0.4869}$_{\pm0.007}$ & {0.5724}$_{\pm0.004}$ & {1.5911}$_{\pm0.046}$ & 0.2590$_{\pm0.003}$\\

\midrule
\multicolumn{9}{c}{\textbf{CommonGen}} \\
GPT2-XL & Zeroshot & 0.0317 & 0.2992 & 0.1321 & 0.2417 & 0.1969 & 0.0491 & 0.1826\\
GPT2-XL & LoRA (r=8) & {0.1826}$_{\pm0.027}$ & {0.5027}$_{\pm0.010}$ & {0.2137}$_{\pm0.014}$ & {0.4447}$_{\pm0.016}$ & {0.4726}$_{\pm0.009}$ & {0.7182}$_{\pm0.027}$ & {0.6725}$_{\pm0.043}$\\
GPT2-XL & {Plugin (Ours)} & {0.1791}$_{\pm0.014}$ & {0.4932}$_{\pm0.007}$ & {0.2288}$_{\pm0.004}$ & {0.4347}$_{\pm0.007}$ & {0.4702}$_{\pm0.006}$ & {0.7283}$_{\pm0.012}$ & {0.6554}$_{\pm0.038}$\\
LLaMA-3.1-8B & Zeroshot & 0.0643 & 0.2776 & 0.1181 & 0.2488 & 0.3857 & 0.3155 & 0.3347\\
LLaMA-3.1-8B & LoRA (r=8) & {0.2736}$_{\pm0.018}$ & {0.5829}$_{\pm0.009}$ & {0.3206}$_{\pm0.009}$ & {0.5026}$_{\pm0.012}$ & {0.5927}$_{\pm0.016}$ & {1.1121}$_{\pm0.034}$ & {0.7926}$_{\pm0.028}$\\
LLaMA-3.1-8B & {Plugin (Ours)} & {0.2665}$_{\pm0.010}$ & {0.5800}$_{\pm0.002}$ & {0.3139}$_{\pm0.005}$ & {0.5037}$_{\pm0.004}$ & {0.5829}$_{\pm0.003}$ & {1.0876}$_{\pm0.020}$ & {0.7031}$_{\pm0.007}$\\

\midrule
\multicolumn{9}{c}{\textbf{Adidas}} \\
GPT2-XL & Zeroshot & 0.0075 & 0.2309 & 0.0278 & 0.1438 & 0.1487 & 0.0184 & 0.4956\\
GPT2-XL & LoRA (r=8) & {0.0629}$_{\pm0.028}$ & {0.2816}$_{\pm0.030}$ & {0.0719}$_{\pm0.029}$ & {0.1816}$_{\pm0.038}$ & {0.2037}$_{\pm0.018}$ & {0.1231}$_{\pm0.126}$ & {0.6576}$_{\pm0.134}$  \\
GPT2-XL & {Plugin (Ours)} & {0.0600}$_{\pm0.017}$ & {0.2710}$_{\pm0.025}$ & {0.0722}$_{\pm0.018}$ & {0.1725}$_{\pm0.017}$ & {0.1995}$_{\pm0.018}$ & {0.1195}$_{\pm0.138}$ & {0.6375}$_{\pm0.120}$ \\
LLaMA-3.1-8B & Zeroshot & 0.0120 & 0.2470 & 0.0318 & 0.1493 & 0.1526 & 0.0424 & 0.5285\\
LLaMA-3.1-8B & LoRA (r=8) & {0.0721}$_{\pm0.020}$ & {0.2697}$_{\pm0.031}$ & {0.0756}$_{\pm0.028}$ & {0.1821}$_{\pm0.020}$ & {0.2023}$_{\pm0.038}$ & {0.1302}$_{\pm0.178}$ & {0.6137}$_{\pm0.172}$ \\
LLaMA-3.1-8B & {Plugin (Ours)} & {0.0611}$_{\pm0.018}$ & {0.2714}$_{\pm0.029}$ & {0.0742}$_{\pm0.020}$ & {0.1759}$_{\pm0.019}$ & {0.1990}$_{\pm0.020}$ & {0.1293}$_{\pm0.152}$ & {0.6361}$_{\pm0.134}$\\

\bottomrule
\end{tabular}
}
\vspace{-4mm}
\end{table*}


\subsection{Further Quantitative Analysis and Ablation}
\label{appendix:more_ablation}
Following Section~\ref{ssec:ablation}, we present the same ablation analysis using GPT2-M on the remaining three datasets. As shown in Figure~\ref{fig:plugin_effect_other3}, the trends mirror those in Figure~\ref{fig:plugin_effect}: the \textit{Plugin} model consistently improves performance as the base model becomes stronger with additional fine-tuning, underscoring the robustness and versatility of our approach. Similarly, Figure~\ref{fig:plugin_complexity_other3} confirms the pattern observed in Figure~\ref{fig:plugin_complexity}: a single-layer reweighting model yields optimal performance, while deeper configurations tend to overfit and degrade quality. Across all datasets, initializing the reweighting model with a pretrained GPT2-Small consistently boosts effectiveness.

\begin{figure}
    \centering
    \includegraphics[width=0.7\linewidth]{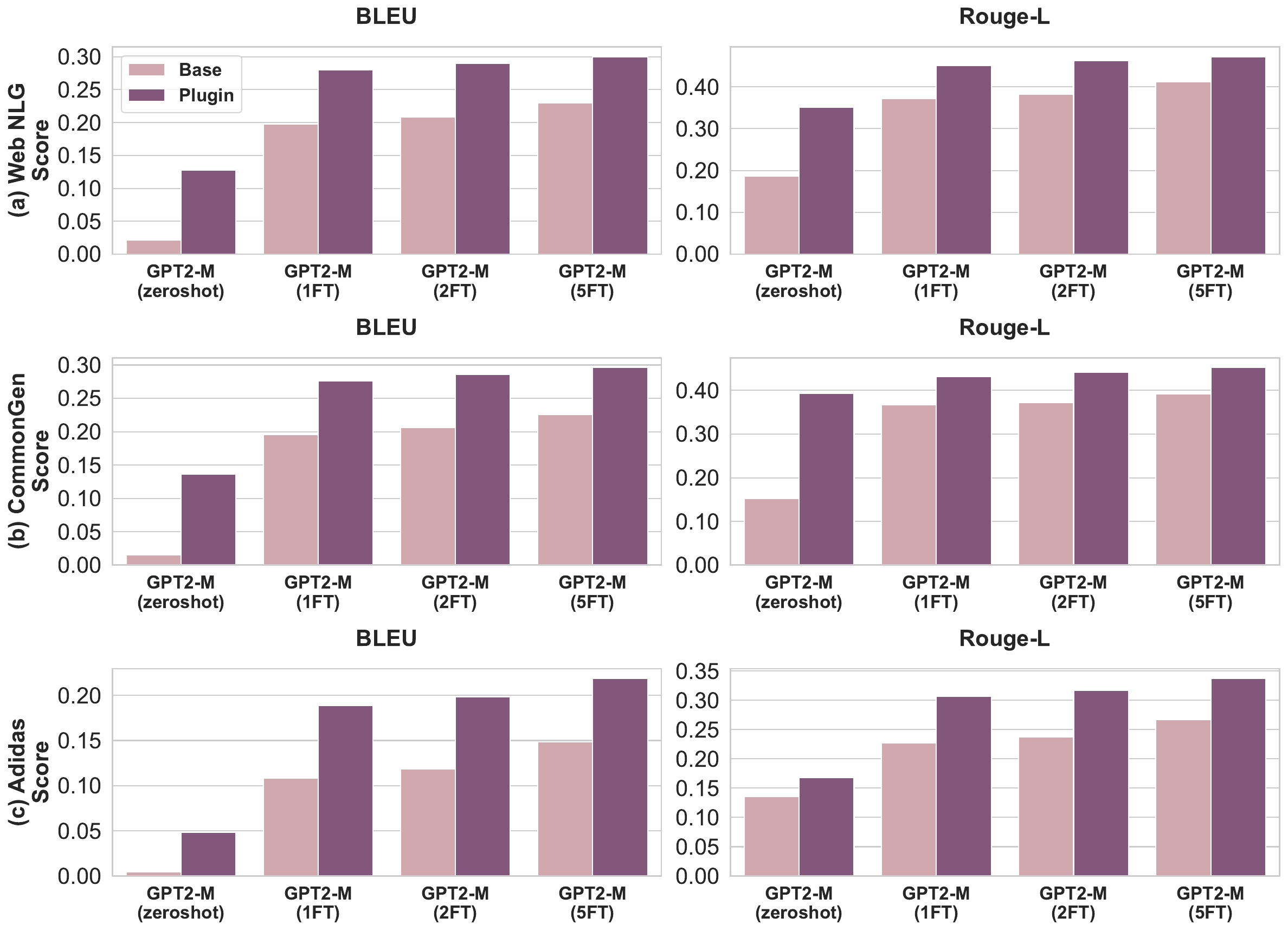}
    \caption{Performance of applying a single-layer reweighting model across increasingly fine-tuned GPT2-M models on the three datasets. Results demonstrate consistent improvements introduced by our method regardless of the strength of the base model.}
    \label{fig:plugin_effect_other3}
\end{figure}

\begin{figure}
    \centering
    \includegraphics[width=0.7\linewidth]{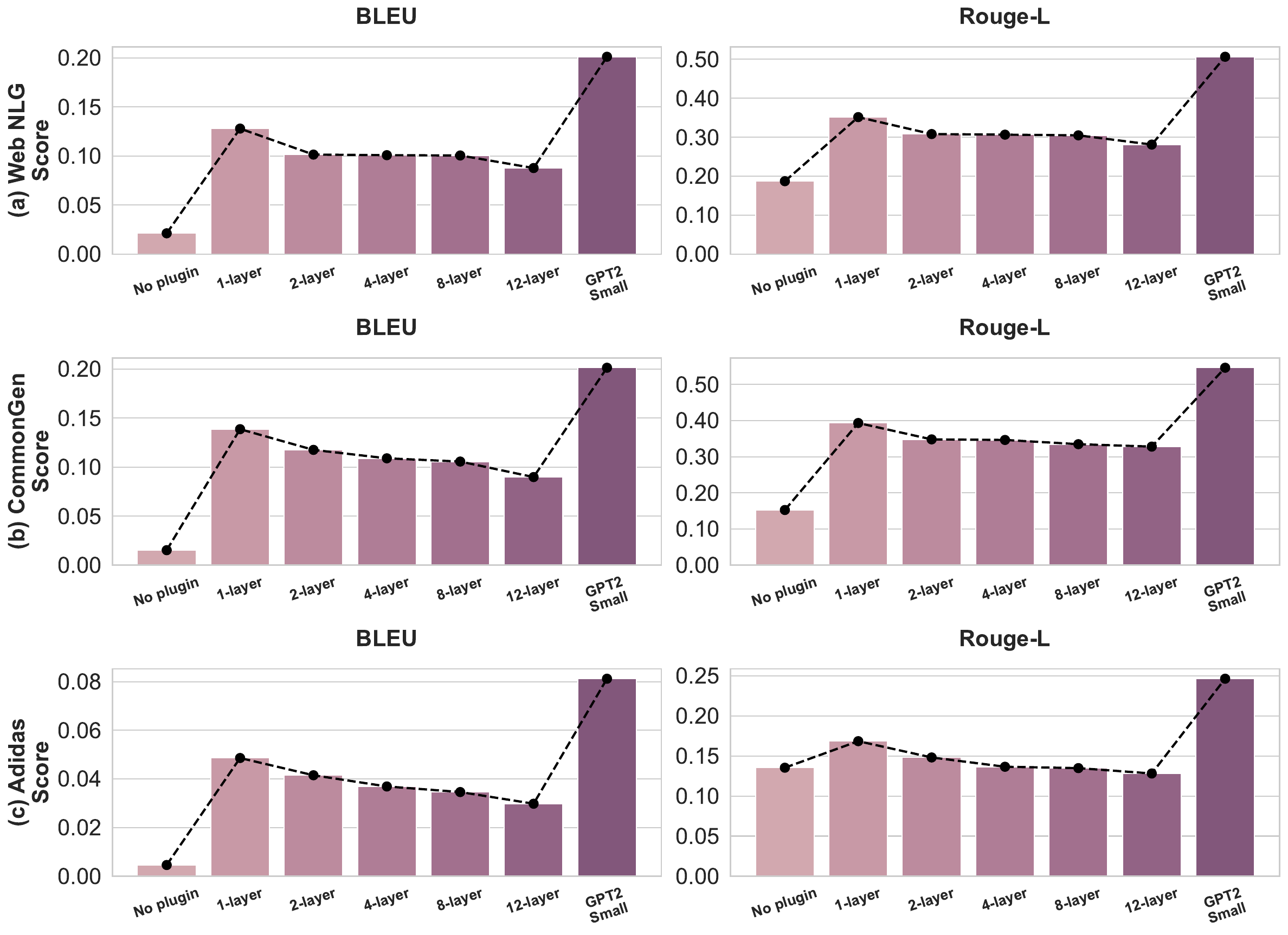}
    \caption{Performance of GPT2-M with varying reweighting model complexities on the three datasets, measured by BLEU and Rouge-L. Results demonstrate that a single reweighting layer achieves significant improvements, while increasing the number of layers beyond this leads to performance degradation, likely due to overfitting.
    Using a pretrained GPT2-Small as the reweighting model largely boosts the performance, highlighting the benefits of leveraging pretrained models.}
    \label{fig:plugin_complexity_other3}
\end{figure}

\subsection{Influence of the architecture of the reweighting model in \textit{Plugin}}
We vary the choice of the reweighting model architecture. 
We find that a causal transformer layer identical to those used in the base model performs best, as it can leverage the base model's logits and aggregate contextual information from prior tokens to better adapt the base model to the new data distribution.
This conclusion is reinforced by Figure~\ref{fig:plugin_architecture}, where the transformer architecture consistently outperforms both the MLP (two layer with ReLU activation) and linear layers across all metrics, as indicated by higher means and narrower standard deviation bands. 
These results highlight the importance of leveraging the architectural capacity of transformers to effectively adapt the logits of the base black-box model.

\begin{figure}[h]
    \centering
    \includegraphics[width=0.6\linewidth]{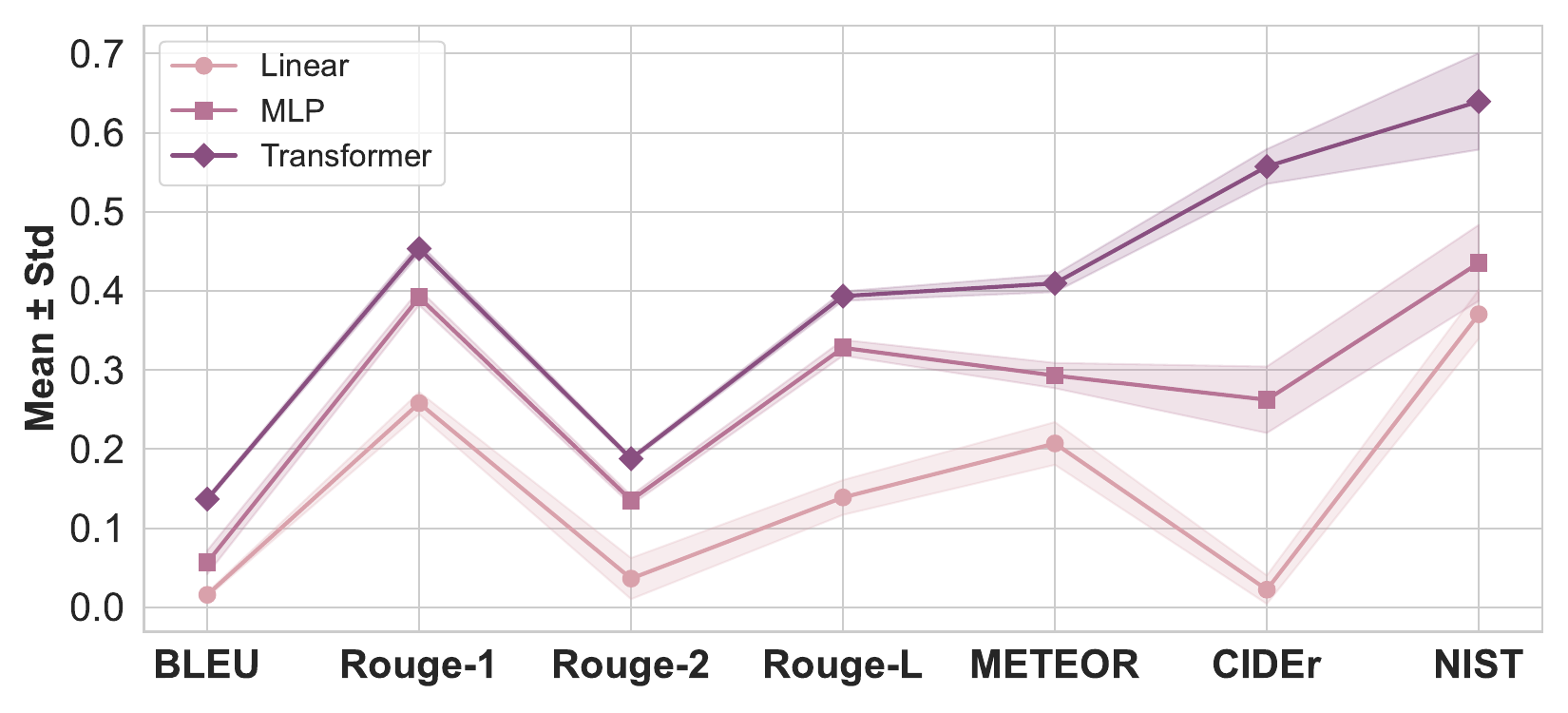}
    \vspace{-5mm}
    \caption{Performance comparison of the weighting model architecture in \textit{Plugin}. The transformer layer achieves the best performance with consistently higher means and narrower standard deviations. Shaded bands represent the standard deviation around the mean.}
    \label{fig:plugin_architecture}
\end{figure}

\subsection{Details for Adidas Qualitative Studies}
\label{appendix:adidas_case_study}

\paragraph{Human Evaluation.}
We conduct a human evaluation on 100 test passages from the Adidas product dataset, comparing outputs generated with and without applying the reweighting model, using LLaMA-3.1-8B as the base model. 
Three human evaluators are presented with a ground-truth Adidas product description and two randomly ordered descriptions: one generated with the reweighting layer and one without (i.e., we use the base model with ICL-3 as a much stronger baseline due to the low quality of the zero-shot). 
Evaluators are prompted to select the prediction closest to the ground truth.
Results show that the output generated with the reweighting model is preferred on an average of 80.7 out of all 100 cases.
The output descriptions from the base model without the reweighting are generally short and general.
This demonstrates that our approach effectively adapts a closed model to the unique style of the given dataset.

In this section, we display some details for the qualitative analysis on the Adidas product description dataset.

\paragraph{Details of Extracting Adidas Style Words.}
We discuss the details on extracting the most frequent 50 words in the Adidas product description dataset as the ``Adidas style'' words.
We argue that there does not exist a gold-standard way to define the ``style'' words for a dataset.
We extract these style words through a minimal preprocessing pipeline: converting text to lowercase, removing special characters and numbers, and filtering out common English stopwords. 
We deliberately preserve the original word forms without lemmatization or stemming to maintain distinct style markers (e.g., keeping ``comfortable'' distinct from ``comfort'', ``running'' distinct from ``run'').
After tokenization using NLTK's word tokenizer, we count word frequencies across all product descriptions and select the top 50 most frequent words.
This approach captures the exact vocabulary used in Adidas' product descriptions, including specific product features.

A statistics of the frequency of these top-50 words is shown in Figure~\ref{fig:adidas_style_statistics}.

\begin{figure}
    \centering
    \includegraphics[width=0.7\linewidth]{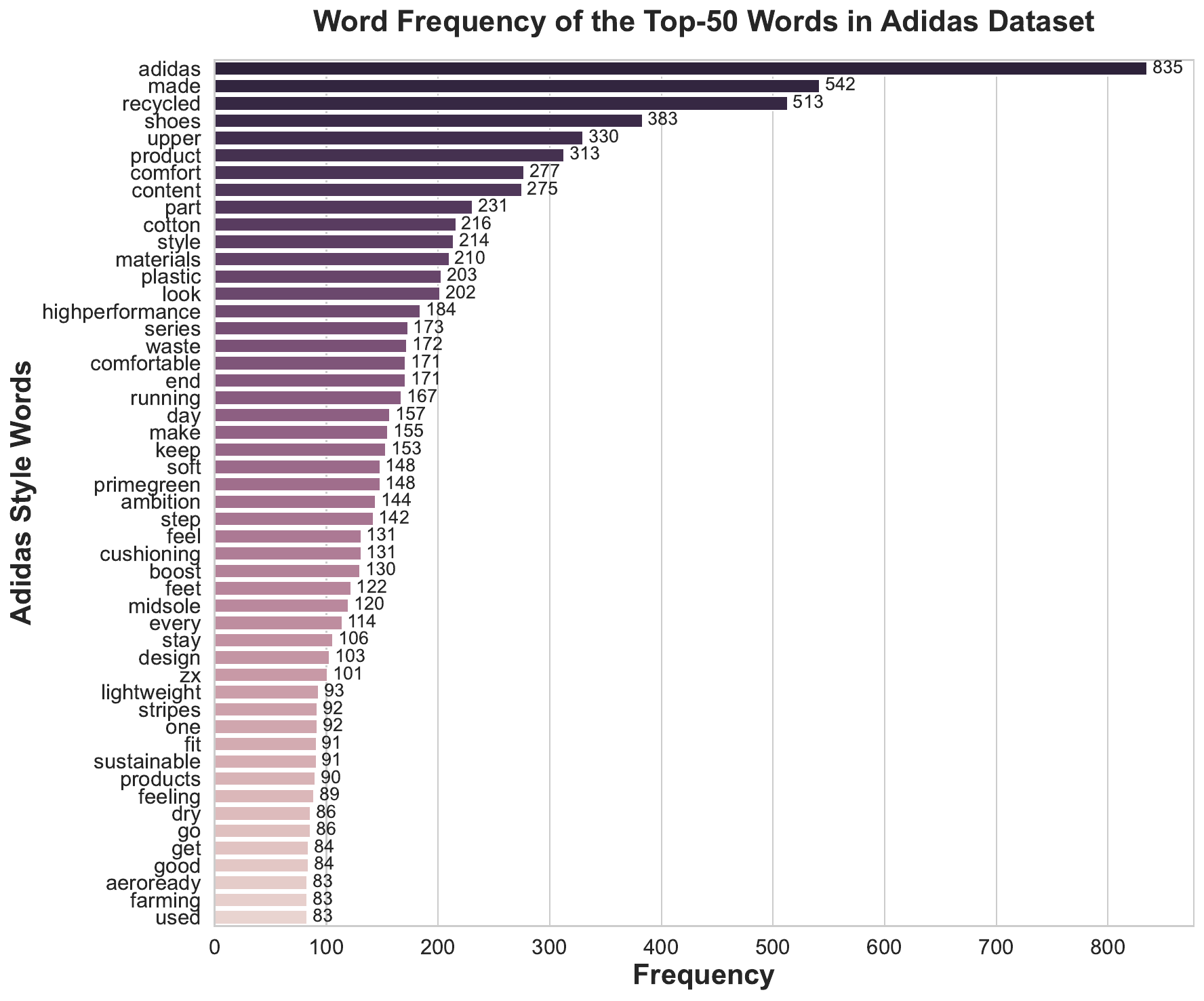}
    \caption{Word Frequency of the Top-50 Words in Adidas Dataset.}
    \label{fig:adidas_style_statistics}
\end{figure}

\paragraph{Case Studies for Human Side-by-side Evaluation.}
Below we provide more case studies on the Adidas product description datasets. 
We display the given product attributes, base model's zero-shot outputs, base model with ICL-3's outputs, \textit{Plugin} model's outputs, and the ground-truth references for each case.

\begin{mdframed}[backgroundcolor=gray!20, linewidth=0pt]
$\clubsuit$ \textbf{Case 1}\\
\textbf{Given Product Attributes:} 
\begin{itemize}[noitemsep,topsep=0pt,label=-]  
    \item name [Cushioned Angle Stripe Low-Cut Socks 6 Pairs]
    \item category [Accessories]
    \item price [13]
    \item color [Multicolor]
\end{itemize}

\textbf{Base Model (Zero-shot):} Cushioned Angle Stripe Low-Cut Socks 6 Pairs, Multicolor.

\textbf{Base Model (ICL-3):} These adidas socks are made with a cushioned sole for extra comfort and support. The low-cut design is perfect for everyday wear, whether you're running errands or hitting the gym.

\textbf{Plugin Model:} Cushioned Angle Stripe Low-Cut Socks 6 Pairs are the perfect blend of style and comfort. The low-cut design gives you a sleek, modern look while still providing plenty of support for your feet. Made with high-quality materials, these socks will keep your feet feeling comfortable and looking chic all day long.

\textbf{Ground-truth Reference:} With lots of cushioning in the heel and toe and a soft, stretchy feel, these kids' socks are big on comfort. Whether they're bouncing around between classes, practice or after-school activities, their little feet will stay dry thanks to moisture-wicking yarn and a mesh upper. An adidas Badge of Sport and 3-Stripes give the socks a classic adidas feel.
\end{mdframed}

\begin{mdframed}[backgroundcolor=gray!20, linewidth=0pt]
$\spadesuit$ \textbf{Case 2}\\
\textbf{Given Product Attributes:} 
\begin{itemize}[noitemsep,topsep=0pt,label=-]  
    \item name [Badge of Sport Tee]
    \item category [Clothing]
    \item price [21]
    \item color [Grey]
\end{itemize}

\textbf{Base Model (Zero-shot):} A white t-shirt with a badge of the sport you love on it.

\textbf{Base Model (ICL-3):} The adidas Badge of Sport tee is a classic. It's made with soft cotton and features the iconic 3-Stripes on the shoulders. The crew neck has a ribbed hem for a comfortable fit.

\textbf{Plugin Model:} There's nothing more classic than a white tee with your favorite team's logo. This adidas Badge of Sport Tee is an everyday essential that can be worn to the gym or on game days. It's made with recycled materials, so you're making a sustainable choice. The soft cotton jersey is lightweight, breathable, and comfortable.

\textbf{Ground-truth Reference:} Put your brand pride on full display. A simple yet versatile look, this long sleeve t-shirt keeps you comfortable while on the move or lounging at home. A contrast adidas Badge of Sport stands out on the chest. This product is made with Primegreen, a series of high-performance recycled materials.
\end{mdframed}

\begin{mdframed}[backgroundcolor=gray!20, linewidth=0pt]
$\vardiamond$ \textbf{Case 3}\\
\textbf{Given Product Attributes:} 
\begin{itemize}[noitemsep,topsep=0pt,label=-]  
    \item name [Stretch Cotton Trunk Briefs 3 Pairs]
    \item category [Clothing]
    \item price [21]
    \item color [Grey]
\end{itemize}

\textbf{Base Model (Zero-shot):} Stretch Cotton Trunk Briefs 3 Pairs, Grey.

\textbf{Base Model (ICL-3):} These adidas men's trunks are made from a soft cotton blend that feels great against the skin. The stretchy fabric moves with you, while the flatlock seams prevent chafing and irritation. A drawcord waistband allows for an adjustable fit.

\textbf{Plugin Model:} Stretch Cotton Trunk Briefs 3 Pairs. The stretch cotton fabric is comfortable to wear. These briefs are made with a classic design in mind, so you can feel confident and comfortable. They have been designed with a low waist, which gives them a flattering fit and makes them easy to get on. 

\textbf{Ground-truth Reference:} Just look at these adidas boxer briefs. They're made with soft, quick-dry fabric that keeps you feeling fresh and a stay-put design that keeps you covered from your first rep to your last. Get comfortable and give it your all. Our cotton products support sustainable cotton farming. This is part of our ambition to end plastic waste.
\end{mdframed}

\begin{mdframed}[backgroundcolor=gray!20, linewidth=0pt]
$\varheart$ \textbf{Case 4}\\
\textbf{Given Product Attributes:} 
\begin{itemize}[noitemsep,topsep=0pt,label=-]  
    \item name [adidas x Zoe Saldana Long Sleeve Tee]
    \item category [Clothing]
    \item price [28]
    \item color [Black]
\end{itemize}

\textbf{Base Model (Zero-shot):} This is a long sleeve tee with the adidas logo on the front and Zoe Saldana's name on the back. It has a crew neck, short sleeves, and a straight hem. The fabric is made of 100\% cotton and is machine washable.

\textbf{Base Model (ICL-3):} This adidas x Zoe Saldana long sleeve tee is made with 100\% recycled polyester. The soft, lightweight fabric has a smooth finish and features a crew neck and short sleeves. It's finished with an adidas logo on the chest.

\textbf{Plugin Model:} This is not your average training t-shirt. A modern, slim fit and the softest fabric make it one of our most comfortable tops to date. The lightweight feel makes this tee perfect for all-day wear. Pair it with a pair of leggings or joggers for that ultimate athleisure look you're always striving for ambition and comfort.

\textbf{Ground-truth Reference:} Your growth, your goals and the challenges you face along the way are all part of your journey. This adidas tee was created in collaboration with Zoe Saldana to inspire you to move with confidence, no matter where you are on your path. Smooth material lays softly on your body so you can feel comfortable in your own skin. A solid color offers styling versatility. This product is made with recycled content as part of our ambition to end plastic waste.
\end{mdframed}

\end{document}